\theoremstyle{plain}
\newtheorem{theorem}{Theorem}[section]
\newtheorem{proposition}[theorem]{Proposition}
\newtheorem{lemma}[theorem]{Lemma}
\newtheorem{corollary}[theorem]{Corollary}
\theoremstyle{definition}
\newtheorem{definition}[theorem]{Definition}
\theoremstyle{remark}
\newcommand{\paragraphbe}[1]{\noindent{\bf \em #1}. }
\def\compileFigures{0}
\pgfplotsset{compat=1.3}
\newcommand{\D}{D}
\newcommand{\X}{\mathcal{X}} 
\newcommand{\n}{n} 
\newcommand{\q}{q} 
\newcommand{\dimension}{d} 
\newcommand{\eps}{\varepsilon} 
\newcommand{\x}{\mathbf{x}} 
\newcommand{\y}{\mathbf{y}} 
\newcommand{\z}{\mathbf{z}} 
\newcommand{\loss}{\ell}                  
\newcommand{\size}{n}                                 	     
\newcommand{\K}{K}					     
\newcommand{\kk}{k}
\newcommand{\E}{\mathbb{E}}
\newcommand{\Id}{\mathbb{I}_d}
\newcommand{\algo}{\mathcal{A}}
\newcommand{\step}{\eta}
\newcommand{\sig}{\sigma}
\newcommand{\lam}{\lambda}
\newcommand{\be}{\beta}
\newcommand{\Lip}{L}
\newcommand{\Gauss}[2]{\mathcal{N}\left(#1,#2\right)}
\newcommand{\grad}{\nabla}
\newcommand{\vtheta}{\theta}
\newcommand{\B}{B}
\newcommand{\ii}{i}
\newcommand{\g}[2]{g\left(#1;#2\right)}
\newtheorem*{rep@theorem}{\rep@title}
\newcommand{\newreptheorem}[2]{%
\newenvironment{rep#1}[1]{%
 \def\rep@title{#2 \ref{##1}}%
 \begin{rep@theorem}}%
 {\end{rep@theorem}}}
\title{Differentially Private Learning Needs Hidden State\\ (Or Much Faster Convergence)}
\author{%
  Jiayuan Ye, Reza Shokri\\
  Department of Computer Science\\
  National University of Singapore\\
  \texttt{\{jiayuan, reza\}@comp.nus.edu.sg} \\
}
\begin{document}

\maketitle

\renewcommand*{\proofname}{Proof Sketch}

\doparttoc 
\faketableofcontents 

\begin{abstract}
    Prior work on differential privacy analysis of randomized SGD algorithms relies on composition theorems, where the implicit (unrealistic) assumption is that the internal state of the iterative algorithm is revealed to the adversary. As a result, the R\'enyi DP bounds derived by such composition-based analyses linearly grow with the number of training epochs. When the internal state of the algorithm is hidden, we prove a converging privacy bound for noisy stochastic gradient descent (on strongly convex smooth loss functions). We show how to take advantage of privacy amplification by sub-sampling and randomized post-processing, and prove the dynamics of privacy bound for ``shuffle and partition'' and ``sample without replacement'' stochastic mini-batch gradient descent schemes. We prove that, in these settings, our privacy bound converges exponentially fast and is substantially smaller than the composition bounds, notably after a few number of training epochs. Thus, unless the DP algorithm converges fast, our privacy analysis shows that hidden state analysis can significantly amplify differential privacy. 
\end{abstract}


\pgfplotsset{
    /pgf/declare function={
        naive_dp_dynamics(\k,\a,\l,\e,\g,\s,\n,\b) = \a*\g^2/(\l*\s^2*\b^2)*(1-exp(-\l*\e*\k/2));
    }
}

\pgfplotsset{
    /pgf/declare function={
        priv_mixing_diffusion(\k,\a,\l,\e,\g,\s,\n,\b) = \a*\e*\g^2/(2 * (\n/\b - 1)*\b^2 * 2 * \s^2)*(1- 2 * \e * 4 * \l / (4 + \l) )^(\n/(2*\b)) * \k;
    }
}

\pgfplotsset{
    /pgf/declare function={
        priv_mixing_diffusion_last_batch(\k,\a,\l,\e,\g,\s,\n,\b) = min(
            \a*\e*\g^2/(2 * (\n/\b - 1)*\b^2 * 2 * \s^2)*(1- 2 * \e * 4 * \l / (4 + \l) )^(\n/(2*\b)) * (\k - 1) + \a*\e*\g^2/(2 *\b^2 * 2 * \s^2),
            2 * \k * \a*\e*\g^2/(2 *\b^2 * 2 * \s^2)
        );
    }
}

\pgfplotsset{
    /pgf/declare function={
        improved_dp_dynamics_first_batch(\k,\a,\l,\e,\g,\s,\n,\b) = max(0, 
        \a*\e*\g^2/(4 * \s^2 * \b^2) * (1-\e*\l)^(2 * (floor(\n/(2*\b)) - 1)) 
        * (1 - (1-\e*\l)^2) / (1- (1-\e*\l)^( 2 * floor(\n/(2*\b)) ) )
        * (1- (1-\e*\l)^(
            2 * (\k - 1) * (\n/\b - floor(\n/(2*\b))) 
            ) 
            ) / ( 1 - (1-\e*\l)^(
                2 * ( \n/\b - floor(\n/(2*\b)) )
                ) 
                )
        + \a*\e*\g^2/(4 * \s^2 * \b^2) * (1-\e*\l)^(2 * (\n/\b - 0 - 1)) 
        * (1 - (1-\e*\l)^2) / (1- (1-\e*\l)^( 2 * (\n/\b - 0) ) )
        )
        ;
    }
}

\pgfplotsset{
    /pgf/declare function={
        improved_dp_dynamics_last_batch(\k,\a,\l,\e,\g,\s,\n,\b) = min( 2 *\k * \a*\e*\g^2/(4 * \s^2 * \b^2), 
        \a*\e*\g^2/(4 * \s^2 * \b^2) * (1-\e*\l)^(2 * (floor(\n/(2*\b)) - 1)) 
        * (1 - (1-\e*\l)^2) / (1- (1-\e*\l)^( 2 * floor(\n/(2*\b)) ) )
        * (1- (1-\e*\l)^(
            2 * (\k - 1) * (\n/\b - floor(\n/(2*\b))) 
            ) 
            ) / ( 1 - (1-\e*\l)^(
                2 * ( \n/\b - floor(\n/(2*\b)) )
                ) 
                )
        + \a*\e*\g^2/(4 * \s^2 * \b^2)
        )
        ;
    }
}


\pgfplotsset{
    /pgf/declare function={
        amp_dp_dynamics_worse(\k,\a,\l,\e,\g,\s,\n,\b) = 
        1/(\a-1) *
        ln(
            1 - \b/\n + \b/\n * 
            exp(
                (\a-1) * \a * \g^2/ (\l * \s^2 * \b^2) 
                * (1 - exp( -\l*\e/2 ))
                )
            ) * (1 - exp(-\l*\e*\k*\n/\b/2)) /(1-exp(-\l*\e/2));
    }
}

\pgfplotsset{
    /pgf/declare function={
        amp_dp_dynamics(\k,\a,\l,\e,\g,\s,\n,\b) = 
        1/(\a-1) *
        ln(
            1 - \b/\n + \b/\n * 
            exp(
                (\a-1) * \a * \g^2/ (\l * \s^2 * \b^2) 
                * (1 - exp( -\l*\e/2 ))
                )
            ) * (1 - exp(-\l*\e*\k*\n/\b/2)) /(1-exp(-\l*\e/2));
    }
}

\pgfplotsset{
    /pgf/declare function={
        amp_dp_dynamics_avg_batch(\k,\a,\l,\e,\g,\s,\n,\b) = \a*\g^2/(\l*\s^2*\b^2)
        *(1-exp(-\l*\e*\n/\b))
        /(1 - exp(- \l*\e*(\n/\b-1/2))) 
        *  (1-exp{-\k*\l*\e* (\n/\b - 1/2)})
        * \b/\n * 1/(1+exp(-\l*\e/2));
    }
}

\pgfplotsset{
    /pgf/declare function={
        amp_dp_dynamics_shuffle(\k,\a,\l,\e,\g,\s,\n,\b) = \a*\e*\g^2/(4*\s^2*\b^2)
        *(1-exp(-\l*\e*\k*(\n/\b-1)))
        /(1 - exp(- \l*\e*(\n/\b-1)))
        ;
    }
}

\section{Introduction}
\label{sec:intro}

Machine learning models leak sensitive information about their training data~\cite{ye2022enhanced, carlini2021extracting}. To protect user privacy, the widely-used differentially private training algorithm, DP-SGD~\cite{abadi2016deep}, adds carefully calibrated noise in each step of updating model parameters. This randomness guarantees that the models trained on any two neighboring datasets are indistinguishable in probability distributions. To quantify this indistinguishability, the DP analysis \textit{bounds} the (moment of) likelihood ratio between a pair of models trained on any two neighboring datasets. Improving this privacy analysis is crucial for obtaining a higher (train and test) accuracy of the output model, under a constrained privacy budget.

The main-stream analysis of privacy loss in DP-SGD is based on composition theorems, which quantify the total privacy loss of the training process across all its iterations.  Given that privacy-preserving learning via DP-SGD usually suffers from a slow convergence in empirical prediction accuracy~\cite{bu2021convergence}, the final bound could be significantly large (and loose). This analysis worsens the privacy-accuracy trade-off by overestimating the magnitude of required noise for DP training.

To alleviate this problem, one popular direction is to design new variants of the DP-SGD algorithm, that converge faster, i.e. require a smaller number of iterations to reach a stable training accuracy. This includes works that derive privacy preserving variants of fast optimization algorithms, such as performing DP-SGD with momentum~\cite{kairouz2021practical}, adaptive gradient clipping~\cite{andrew2021differentially, pichapati2019adaclip}, adaptive selection of step-size and noise scale~\cite{asi2021private}, and using pre-trained or hand-crafted features~\cite{tramer2020differentially}.  All these approaches aim for a \textit{faster training convergence} for the DP learning algorithm, such that the composition analysis is applied to a smaller number of iterations and the total bound remains small. 

Another line of work focuses on directly improving the privacy analysis of the DP-SGD algorithm. To this end, prior work~\cite{feldman2018privacy, balle2019privacy, chourasia2021differential} suggest that hiding the internal state of noisy (S)GD, therefore analyzing the privacy bound for releasing only the last iterate results in a more accurate estimation of privacy loss. By applying the last-iterate privacy analysis~\cite{feldman2018privacy}, \citet{feldman2020private} design new differentially private algorithms that achieve theoretically optimal excess risk with linear runtime for convex optimization, which is better than the prior privacy utility trade-off for the noisy SGD algorithm~\cite{bassily2014private}.  However, the analysis applies only to one \textit{single} epoch of training, thus it is unclear how such training performs in practical learning settings (which require multiple epochs of training). \citet{chourasia2021differential} derive a strong privacy guarantee for hidden-state noisy GD over many training epochs, under strongly convex loss function. However, the analysis is limited to computationally expensive GD~\cite{geiping2021stochastic}, and extension of it to stochastic mini-batch training would fail to model the privacy amplification by post-processing~\cite{feldman2018privacy, balle2019privacy} and mini-batch sub-sampling~\cite{chaudhuri2006random, li2012sampling, bassily2014private, abadi2016deep, balle2019privacy}. 

Thus, under the hidden-state assumption over many epochs of training, the challenge is to compute a small privacy bound for differentially private stochastic gradient descent algorithms by taking advantage of privacy amplifications due to sub-sampling and randomized updates over mini-batches.

\setlength{\intextsep}{1pt}
\begin{wrapfigure}{r}{0.45\textwidth}
    \resizebox{0.45\textwidth}{!}{
        \begin{tikzpicture}
            \begin{axis}[
            no markers,
            samples=50,
            xmin=0,
            ymin=0,
            axis line style = thick,
            ymax=0.15,
            axis lines = left,
            ytick={0.0,0.04,0.08,0.12}, yticklabels={0,0.05,0.10,0.15},
            xlabel={Training Epochs},
            ylabel={$\eps$ in $(\alpha=15, \eps)$-R\'enyi Differential Privacy},
            xmax=42,clip = true,
            clip mode=individual,axis y line*=left,axis x line*=bottom,
            legend style={at={(0.05,1.1)},anchor=west}]
                \addplot[thick,purple] table [x=k, y=eps, col sep=comma] {figures/data_shuffle_sample/k=40_a=15_l=1_e=0.02_g=4_s=2_n=50_b=2.csv};
                \addplot[thick,purple,dotted,domain=0:40] {improved_dp_dynamics_last_batch(x,15,1,0.02,4,2,50,2)};
                \addplot[thin,purple,dashed] table [x=k, y=eps, col sep=comma] {figures/data_sgm_sample/k=40_a=15_l=1_e=0.02_g=4_s=2_n=50_b=2.csv};
                \node[anchor=west] at (axis cs: 5,{improved_dp_dynamics_last_batch(40,15,1,0.02,4,2,50,2) + 0.015}) {Our Privacy Bound};
                \node[anchor=west] at (axis cs: 1,{improved_dp_dynamics_last_batch(40,15,1,0.02,4,2,50,2) + 0.005}) {(fixed-order last mini-batch)};
                \node[anchor=west] at (axis cs: 25,{0.04}) {Our Privacy Bound};
                \node[anchor=west] at (axis cs: 23,{0.03}) {(shuffle and partition)};
                \node[anchor=west] at (axis cs: 20,{0.135}) {DP-SGD~\cite{abadi2016deep} + SGM~\cite{mironov2019r}};
                \node[anchor=west] at (axis cs: 28,{0.125}) {(composition)};
            \end{axis}
        \end{tikzpicture}
    }
    \vspace*{-2mm}
    \label{fig:comp_bound}
\end{wrapfigure}
\paragraphbe{Contributions}
In this paper, we tackle this challenge, and bridge the gaps in the prior work on hidden-state (last-iterate) privacy analysis. We model the privacy dynamics of noisy SGD, and show the privacy amplification for private (hidden state) machine learning due to stochastic mini-batch selection.  \textbf{(i)} As we also show in this figure, for multi-epoch noisy \emph{stochastic mini-batch} gradient descent under \textit{``shuffle and partition''} and \textit{iterative ``sampling without replacement''}, we prove new converging privacy bounds (for strongly convex smooth loss function) that significantly improves over the prior bounds~\cite{balle2019privacy, mironov2019r, chourasia2021differential}. Our proof relies on our new bounds for the privacy amplification by post-processing (\cref{lem:recursive}) and sub-sampling (\cref{thm:shuffle,thm:amp_samp_wo_replacement}). \textbf{(ii)} For the special case of full gradient descent, our new approach (via proving better bounds for the privacy amplification by randomized post-processing) results in a strictly tighter bound than the prior work~\cite{chourasia2021differential} with a different proof (\cref{append:simple_proof_gd}). The key insight of our new proof is that we can break down one noisy GD update into two consecutive steps: a noisy GD update with smaller noise scale, followed by pure additive Gaussian noise (randomized post-processing).

Our results show that, to obtain a tighter privacy bound, it is crucial to apply our hidden-state privacy dynamics analysis for learning tasks with slow to moderate convergence. Alternatively, it is not so costly to use composition-based privacy analysis when the training process converges very quickly.

\section{An overview of the problem and our approach}
\label{sec:overview}

We analyze differential privacy loss of the noisy (stochastic) mini-batch gradient descent algorithm, when only the last iterate parameters~$\theta_\K^0$ are visible~(see \cref{alg:noisymBGD}).\footnote{See \cref{sec:prelim} for the preliminaries about differential privacy and necessary tools for our analysis.} We consider two mini-batch generation variants: \textbf{(1)} ``shuffle and partition'' (widely implemented in privacy libraries~\cite{yousefpour2021opacus, tensorflowprivacy}); \textbf{(2)} ``sample without replacement'' (analyzed extensively in prior results~\cite{chaudhuri2006random, li2012sampling, balle2019privacy, wang2019subsampled} despite its computational cost). 
Our ultimate goal is to prove a \emph{worst-case} upper bound of the R\'enyi divergence between distributions of \emph{last-iterate} parameters $\theta_\K^0$ and ${\theta'}_\K^0$ trained on \emph{any} two neighboring datasets. 

The prior work~\citet{chourasia2021differential} has shown a tight converging hidden-state privacy dynamics analysis for full-batch noisy gradient descent. However, the sub-sampling steps that are unique to the stochastic mini-batch gradient descent, and their privacy benefits, are not modeled nor quantified in such bounds and the follow-up work. Would a naive extension of the prior GD analysis to the stochastic mini-batch setting result in a tight bound? To this end, we can view the updates that \emph{involve} the (sensitive) differing data record (between neighboring datasets) as gradient descent on a \emph{smaller}, iteratively \emph{changing} dataset of size~$b$ (i.e., the size of a mini-batch). This extension results in the following naive privacy baseline.

\begin{theorem}[Naive Extension of~\citet{chourasia2021differential} bound to SGD]
    \label{thm:naive_RDP_baseline}
    If the loss function $\ell(\theta;\x)$ is $\lambda$-strongly convex and $\beta$-smooth, and its gradient has $\ell_2$-sensitivity $S_g$, then \cref{alg:noisymBGD} under "shuffle and partition" and step-size $\eta<\frac{2}{\lambda + \beta}$ satisfies $(\alpha, \eps)$-R\'enyi DP with $\eps \leq \frac{\alpha S_g^2}{\lambda\sig^2 b^2}(1-e^{-\lam \eta \K/2})$.
\end{theorem}
However, as we also show in~\cref{fig:improved_dynamics}, this naive privacy dynamics baseline \textit{slowly} converges to a \textit{huge} constant, which is significantly worse than the bounds derived by composition~\cite{mironov2019r, abadi2016deep}.~\footnote{A recent concurrent work~\cite{ryffel2022differential} also follows this approach to extend the GD analysis~\cite{chourasia2021differential} to SGLD setting, and proves a similar bound to our naive privacy dynamics baseline~\cref{thm:naive_RDP_baseline}. However, there is a slight difference between \cite[Corollary 3.3]{ryffel2022differential} and our \cref{thm:naive_RDP_baseline}, due to the flawed assumption in~\cite[Lemma 3.4]{ryffel2022differential} that the LSI constant proved in~\citet{chourasia2021differential} (that \emph{only} holds for GD process) also holds for SGLD process (that takes the form of a more complicated mixture distribution). See more details in~\cref{app:privacy_dynamics_baseline}. }  This is because this naive baseline \textit{fails to} capture the privacy amplification due to the stochasticity of mini-batch sub-sampling, and the iterative noisy updates which amplify the privacy of preceding mini-batches (referred to as amplification by post-processing). In this paper, we show how to compute a much tighter differential privacy bound for noisy SGD, under the hidden-state assumption, while taking advantage of these privacy amplifications. Our methodology is novel and quantifies the hidden-state privacy amplification due to iterative data (re)sampling throughout the training process (over multiple epochs of training).  This solves the limitation of the prior work on privacy amplification (by iteration) which focuses on a single epoch, and uses composition theorems across epochs (thus, not modeling the privacy of hidden-state iterative resampling). Finally, our methodology is generic and applies to the special case of full-batch noisy gradient descent, which enables a strictly tighter privacy bound than the prior work~\cite{chourasia2021differential} under the same assumptions (\cref{append:simple_proof_gd}).

\begin{algorithm}[t!]
    \small
	\caption{$\algo_{\text{Noisy-mBGD}}$: Noisy (Stochastic) mini-batch Gradient Descent}
	\label{alg:noisymBGD}
	\begin{algorithmic}[1]
		\State {\bfseries Input:} Dataset $\D=(\x_1, \x_2, \cdots, \x_\size)$. Parameter space $\theta\in\mathbb{R}^\dimension$. Loss function $\loss(\vtheta;\x)$. Stepsize~$\step$. Noise standard deviation $\sig$. mini-batch size $b$. Initial parameters $\theta_0^0$ sampled from an arbitrary distribution $p_0(\theta)$. 
		\State{\bfseries Batch Generation:} obtain mini-batches $B^j_k$ of size $b$, for $j=0,\cdots, n/b-1$; for epochs $k=0,\cdots,\K-1$. \\
        \Comment{\textbf{If ``Shuffle and partition''}: randomly partition $n$ data indices into $B^0,\cdots, B^{n/b-1}$, let $B^j_k=B^j$. \footnotemark}
        \label{step:batch_generation_shuffle}\\
        \Comment{\textbf{If Sample without replacement}: resample $b$ different indices from $\{1,\cdots,n\}$ to obtain every $B_k^j$.\ }
        \label{step:batch_generation_samp_wo}
		\For { epoch $\kk = 0, 1, \cdots, \K - 1$}
		\For { iteration $j = 0, 1, \cdots, n/b - 1$}
		\State {$\vtheta_{\kk}^{j + 1} = \vtheta_{\kk}^{j} - \step\cdot \g{\vtheta_{\kk}^{j}}{\B_k^j} + \sqrt{2\step\sig^2}\cdot \Gauss{0}{\Id} \quad \text{where}\quad \g{\vtheta_{\kk}^{j}}{\B_k^{j}}= \frac{1}{b}\sum_{\ii\in \B_k^{j}} \grad\loss(\vtheta_{\kk}^{j};\x_\ii)$\label{step:mbgd}}
		\EndFor
		\State {$\vtheta_{\kk + 1}^{0} = \vtheta_{\kk}^{n/b}$}
		\EndFor
		\State {{\bfseries Output:} $\vtheta_{\K}^{0}$}
	\end{algorithmic}
\end{algorithm}

\footnotetext{For simplicity of presentation, we assume $b$ divides $n$. If $n/b$ is not an integer, and if the algorithm ignores the last $n - \lfloor n/b\rfloor \cdot b$ data points, then our privacy dynamics bound holds by replacing $n/b$ with $\lfloor n/b\rfloor$.}

\underline{Our approach is as follows.} We decompose the distribution of last iterate parameters $\theta_\K^0$ in \cref{alg:noisymBGD} as a mixture of conditional distributions $p(\theta_\K^0)=\sum_{B}p(B)\cdot p(\theta_\K^0|B)$, given any possible mini-batch sequence~$B$. We first analyze the R\'enyi privacy loss for a \textit{fixed mini-batch sequence} while modeling privacy amplification by randomized post-processing (i.e., how much the randomized gradient update improve the privacy due to updates on preceding mini-batches). We then quantify the privacy amplification by subsampling under \textit{stochastic mini-batches}.

\paragraphbe{Privacy amplification by randomized post-processing} To start, we consider Algorithm~\ref{alg:noisymBGD} without the effect of stochasticity in the mini-batch sampling process. That is, we first assume that the mini-batch sequence used in the algorithm is fixed (by an arbitrary order). 
When a mini-batch contains only the indices of shared data records between two neighboring datasets, then the deterministic mini-batch gradient descent mapping does not cause any additional privacy loss. Addition of the Gaussian noise that follows this deterministic update, however, serves as a randomized post-processing which \textit{decreases} the R\'enyi divergence between the two processes (associated to neighboring datasets). To compute this privacy amplification, we precisely model the change of parameter distributions with the Fokker-Planck equation (for diffusion process with zero drift).

\paragraphbe{Hidden-state privacy amplification by sub-sampling} 
Although privacy amplification by a \textit{single} sub-sampling operation is well-studied~\cite{chaudhuri2006random, li2012sampling, bassily2014private, abadi2016deep, balle2019privacy,wang2019subsampled,mironov2019r,feldman2022hiding,feldman2022stronger}, to the best of our knowledge, no prior bounds are applicable to multiple epochs of \emph{hidden-state} iterative resampling. This is because under the hidden-state assumption, the number of mixing components in the last-iterate parameter distribution grows exponentially with the number of epochs $K$, i.e., [number of possible values for one mini-batch]$^{K\cdot n/b}$, which makes the mixture distribution very difficult to analyze. In this paper, to study the exponentially many mixture components, we derive recursions for the divergence between mixture distributions after one epoch. We use the joint convexity of exponentiated R\'enyi divergence to bound how much smaller the R\'enyi divergence between mixture distributions (for model parameters at one epoch) is compare to the worst case R\'enyi divergence across any pair of their mixture components (representing the preceding epoch). This recursion quantifies the hidden-state privacy amplification by sub-sampling, and enables a significantly smaller R\'enyi DP bound for noisy stochastic mini-batch gradient descent, than the composition of sub-sampled mechanisms over multiple epochs (Figure~\ref{fig:comp_shuffle}).

\section{Privacy dynamics for fixed-ordering noisy mini-batch gradient descent}

\label{sec:fix_sgd}

In this section, we quantify the privacy amplification by randomized post-processing in~\cref{alg:noisymBGD}, during iterations that do not access the sensitive differing data between neighboring datasets. We then combine it with RDP composition for the remaining iterations and prove a converging privacy~bound.

\subsection{Privacy amplification by randomized post-processing (additive Gaussian noise)}

\label{ssec:amp_post_proc}

We first explain the key lemma that proves exponentially decaying R\'enyi privacy loss under additive Gaussian noise post-processing, when the parameter distributions satisfy the log-Sobolev inequality. This is a new bound for the well-known "privacy amplification by iteration" phenomenon~\cite{feldman2018privacy,balle2019privacy}. 
\begin{lemma}
    \label{lem:post_proc}
    Let $\mu, \nu$ be two distributions on $\mathbb{R}^d$. Let $f:\mathbb{R}^d\rightarrow\mathbb{R}^d$ be a measurable mapping on $\mathbb{R}^d$. We denote $\mathcal{N}(0,2t\sig^2\cdot\mathbb{I}_d)$ to be the standard Gaussian distribution on $\mathbb{R}^d$ with covariance matrix~$2t\sigma^2\cdot \mathbb{I}_d$. We denote $p_t(\theta)$ and $p_t'(\theta)$ to be the probability density functions for the distributions $f_{\#}(\mu) * \mathcal{N}(0,2t\sig^2\mathbb{I}_d)$ and $f_{\#}(\nu) * \mathcal{N}(0,2t\sig^2\mathbb{I}_d)$ respectively, where $f_\#(\mu), f_\#(\nu)$ denote the push forward distributions of $\mu, \nu$ under mapping $f$. Then if $\mu$ and $\nu$ satisfy log-Sobolev inequality with constant $c$, and if the mapping $f$ is $L$-Lipschitz, then for any order $\alpha> 1$,
    \begin{align}
        \label{eqn:post_proc}
       \frac{\partial}{\partial t}R_\alpha \left( p_t(\theta)\lVert p_t'(\theta)\right) & \leq - c_t\cdot 2\sig^2\cdot \left(\frac{R_\alpha(p_t(\theta)\lVert p_t'(\theta))}{\alpha} + (\alpha-1)\cdot \frac{\partial}{\partial\alpha} R_\alpha(p_t(\theta)\lVert p_t'(\theta)) \right),
    \end{align} 
    where $c_t = \left(\frac{L^2}{c} + 2t\sig^2\right)^{-1}$ is the log-Sobolev inequality constant for distributions $p_t(\theta)$ and $p_t'(\theta)$.
\end{lemma}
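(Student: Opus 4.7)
The plan is to follow a Fokker--Planck / Vempala--Wibisono style calculation, working along the heat semigroup induced by convolution with the Gaussian noise. My starting observation is that as $t$ varies, the densities $p_t$ and $p_t'$ both satisfy the linear heat equation $\partial_t p = \sigma^2 \Delta p$; this is immediate from the convolution form of $f_{\#}(\mu)\ast \mathcal{N}(0,2t\sigma^2 \mathbb{I}_d)$ via the Fourier characterization (or by recognising $p_t$ as the marginal of the SDE $dX_t = \sqrt{2\sigma^2}\, dW_t$ with initial distribution $f_{\#}(\mu)$).

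Next I would differentiate $F_\alpha(t) := \int p_t^{\alpha}(p_t')^{1-\alpha}\,dx$, plug in the heat equation, and integrate by parts twice. After collecting the three bilinear terms, the expression factors as a perfect square, yielding the clean identity
\begin{equation*}
\frac{d}{dt} F_\alpha \;=\; -\sigma^2\,\alpha(\alpha-1)\,\mathbb{E}_{p_t'}\!\Bigl[r^{\alpha-2}|\nabla r|^2\Bigr] \;=\; -\frac{4(\alpha-1)\sigma^2}{\alpha}\,\mathbb{E}_{p_t'}\!\Bigl[\bigl\lVert \nabla r^{\alpha/2}\bigr\rVert^2\Bigr],
\end{equation*}
where $r = p_t/p_t'$. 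This is exactly the Fisher-information quantity controlled by LSI.

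At this point I need the LSI constant $c_t$ claimed in the lemma. Here I would invoke two standard stability properties of LSI: (i) the $L$-Lipschitz pushforward $f_\#(\nu)$ inherits LSI with constant $c/L^2$; and (ii) convolution with $\mathcal{N}(0,2t\sigma^2\mathbb{I}_d)$ (which itself satisfies LSI with constant $1/(2t\sigma^2)$) combines via the reciprocal-sum rule $(c_1^{-1}+c_2^{-1})^{-1}$, giving $p_t'$ (and $p_t$) the LSI constant $c_t = (L^2/c + 2t\sigma^2)^{-1}$. Applying the LSI with $g = r^{\alpha/2}$ then yields $\mathbb{E}_{p_t'}[\lVert\nabla r^{\alpha/2}\rVert^2] \geq (c_t/2)\,\mathrm{Ent}_{p_t'}(r^\alpha)$.

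The final step is to convert everything from $F_\alpha$ back to $R_\alpha$ and to re-express the entropy term. Using $\partial_\alpha F_\alpha = \int p_t' r^\alpha \log r$ together with the chain rule $\partial_\alpha R_\alpha = -R_\alpha/(\alpha-1) + \partial_\alpha F_\alpha/((\alpha-1)F_\alpha)$, a direct algebraic manipulation gives the key identity
\begin{equation*}
\frac{\mathrm{Ent}_{p_t'}(r^\alpha)}{F_\alpha} \;=\; R_\alpha \;+\; \alpha(\alpha-1)\,\frac{\partial R_\alpha}{\partial \alpha}.
\end{equation*}
Dividing $\partial_t F_\alpha$ by $(\alpha-1)F_\alpha$ (from $R_\alpha = \log F_\alpha/(\alpha-1)$) and chaining with the LSI bound and this identity produces precisely the inequality in~\eqref{eqn:post_proc}. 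The main obstacles I anticipate are (a) verifying the LSI stability under Lipschitz pushforward plus Gaussian convolution in this non-standard regime where both sides of the divergence evolve simultaneously, and (b) the bookkeeping to recognise the perfect-square factorisation in the $\partial_t F_\alpha$ computation; once these are in hand, the translation between $\mathrm{Ent}$, $R_\alpha$, and $\partial_\alpha R_\alpha$ is routine algebra.
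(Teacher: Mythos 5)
Your proposal is correct and follows essentially the same route as the paper's proof: heat-equation dynamics for $p_t, p_t'$, Green's identity to reduce $\partial_t$ of the moment integral to a relative-Fisher-information quantity, the LSI constant $c_t = (L^2/c + 2t\sigma^2)^{-1}$ via Lipschitz-pushforward and Gaussian-convolution stability, and the algebraic identity linking $\mathrm{Ent}_{p_t'}(r^\alpha)/F_\alpha$ to $R_\alpha + \alpha(\alpha-1)\partial_\alpha R_\alpha$. The only cosmetic difference is that you inline the entropy-to-R\'enyi identity by applying the function form of LSI directly to $g = r^{\alpha/2}$, whereas the paper invokes Lemma 5 of Vempala--Wibisono (reproved as its Lemma A.1 via the measure form $KL(\rho\lVert\nu) \leq \tfrac{1}{2c}J(\rho\lVert\nu)$); the two are equivalent formulations and the underlying calculation is identical.
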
 
\textit{Proof Sketch.}
The proof starts by modelling Gaussian noise as a diffusion process with \textit{zero} drift. We then bound the rate of R\'enyi divergence with the LSI constant for the process (following prior works~\cite{vempala2019rapid,chourasia2021differential}). However, instead of assuming a fixed LSI constant $c$ for all intermediate distributions, we prove a more precise LSI constant $c_t$ that changes with $t$. Complete proof is in \cref{appsub:proof_post_proc}.~\qedsymbol{}

The partial differential inequality in \cref{lem:post_proc} quantifies the amplification of R\'enyi privacy loss under additive Gaussian noise. By solving \cref{eqn:post_proc} on $t\in[0,\eta]$, we prove the following lemma that models recursive privacy dynamics during one step of noisy mini-batch gradient descent.
\begin{lemma}
    \label{lem:recursive}
    Let $D,D'$ be an arbitrary pair of neighboring datasets that differ in the $i_0$-th data point (i.e. $x_{i_0}\neq x'_{i_0}$). Let $B_k^j$ be a fixed mini-batch used (in iteration $j$ of epoch $k$) in~\cref{alg:noisymBGD}, which contains $b$ indices sampled from $\{1,\cdots,n\}$. We denote $\theta_\kk^j$ and ${\theta'}_\kk^{j}$ as the intermediate parameters in \cref{alg:noisymBGD} on input datasets $D$ and $D'$, respectively. If the distributions of $\theta_k^j$ and ${\theta'}_k^j$ satisfy log-Sobolev inequality with a constant $c$, and if the mini-batch GD mapping $f(\theta) = \theta - \eta \cdot \frac{1}{b}\cdot \sum_{i\in B_k^j}\ell(\theta;\x_i)$ is $L$-Lipschitz for parameters $\theta$, then the following recursive bound for R\'enyi divergence holds.
    \begin{align}
        \label{eqn:recursive}
        \frac{R_{\alpha}(\theta_\kk^{j+1}\lVert {\theta'}_\kk^{j+1})}{\alpha} & \leq
        \begin{cases}
            \frac{R_{\alpha'}(\theta_\kk^{j}\lVert {\theta'}_\kk^{j})}{\alpha'}\cdot  \left(1 + \frac{ c \cdot 2 \eta  \sig^2}{L^2}\right)^{-1} & \text{if $i_0\notin B_k^j$}\\
            \frac{R_{\alpha}(\theta_\kk^{j}\lVert {\theta'}_\kk^{j})}{\alpha} + \frac{\eta S_g^2}{4\sig^2b^2}& \text{if $i_0\in B_k^j$}
        \end{cases}\text{ with }\alpha' = \frac{\alpha-1}{1 + \frac{c\cdot 2 \eta  \sig^2}{L^2}} + 1.
    \end{align}
\end{lemma}
\textit{Proof Sketch.}
When $i_0\notin B_k^j$, we apply \cref{lem:post_proc} on the deterministic mini-batch gradient descent update $f$, and solve \cref{eqn:post_proc} to obtain recursive R\'enyi privacy bound.
This proof is similar to that of \cite[Theorem 2]{chourasia2021differential}, however, dealing with the new LSI constant $c_t$ that changes with $t$ introduces an additional technical difficulty. 
When $i_0\in B_k^j$, we use composition theorem to prove the additive recursion in the second row of \cref{eqn:recursive}. The complete proof is in \cref{appsub:proof_recursive}.~\qedsymbol{}

\paragraphbe{Comparison with prior amplification by post-processing bound} Prior to this work, for convex and smooth loss functions, \citet{feldman2018privacy} derive a tight bound for the amplification by post-processing in noisy stochastic gradient descent. In~\cref{appsub:proof_convexsmooth}, we show that our new recursive amplification bound could recover this known \emph{tight} bound for convex smooth loss functions, while using a different proof. For (a more restrictive setting of) \emph{one} epoch of noisy SGD on \emph{strongly convex} loss function, \citet{balle2019privacy} further improve the bound in~\citet{feldman2018privacy} via a careful coupling-based approach. However, we do not see an easy way to extend the bound in~\citet{balle2019privacy} to \emph{multiple} epochs except by using R\'enyi DP composition (which would give a linearly worsening R\'enyi DP bound with the number of epochs). By contrast, our recursive amplification bound easily applies to multiple epochs (under hidden state assumption, and without requiring composition over the epochs), and enables converging privacy dynamics for~\cref{alg:noisymBGD} on strongly convex smooth loss functions (as discussed in \cref{ssec:improve_priv_dynamics}). 

\subsection{Improved privacy dynamics for fixed-ordering noisy mini-batch gradient descent}
\label{ssec:improve_priv_dynamics}

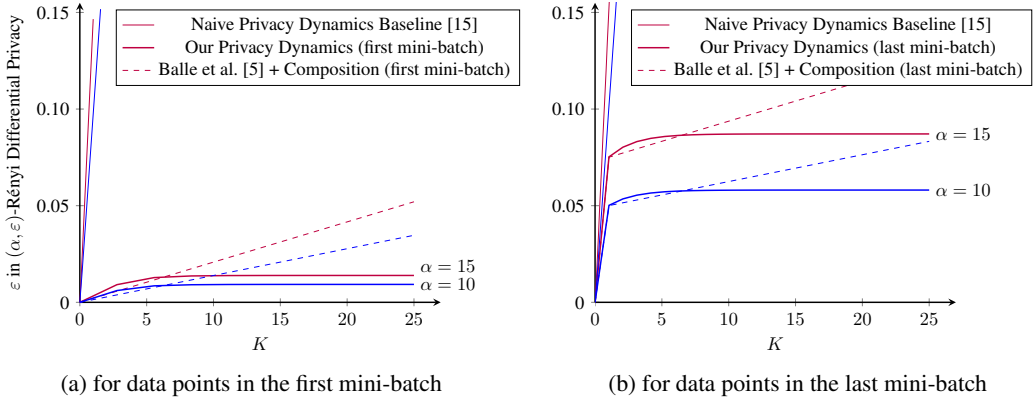
\begin{figure}[t!]
    \centering 
    \begin{subfigure}[b]{0.48\textwidth}
        \scalebox{0.7}
        {\begin{tikzpicture}
            \begin{axis}[
            no markers,
            samples=10,
            xmin=0,
            ymin=0,
            axis line style = thick,
            ymax=0.155,
            axis lines = left,
            ytick={0.0,0.05,0.10,0.15}, yticklabels={0,0.05,0.10,0.15},
            xlabel=$K$,
            ylabel={$\eps$ in $(\alpha, \eps)$-R\'enyi Differential Privacy},
            xmax=27,
            clip = true,
            clip mode=individual,
            axis y line*=left,axis x line*=bottom,
            legend style={at={(0.1,0.85)},anchor=west}]
                \addplot[thin,purple,domain=0:1] {naive_dp_dynamics(x,15,1,0.02,4,2,50,2)};
                \addlegendentry{Naive Privacy Dynamics Baseline~\cite{chourasia2021differential}}
                %
                %
                \addplot[thick,purple,domain=0:25] {improved_dp_dynamics_first_batch(x,15,1,0.02,4,2,50,2)};
                \addlegendentry{Our Privacy Dynamics (first mini-batch)}
                %
                \addplot[thin,purple,dashed,domain=0:25] {priv_mixing_diffusion(x,15,1,0.02,4,2,50,2)};
                \addlegendentry{\citet{balle2019privacy} + Composition (first mini-batch)}
                \addplot[thin,blue,domain=0:1.55] {naive_dp_dynamics(x,10,1,0.02,4,2,50,2)};
                \addplot[thick,blue,domain=0:25] {improved_dp_dynamics_first_batch(x,10,1,0.02,4,2,50,2)};
                %
                %
                \addplot[thin,blue,dashed,domain=0:25] {priv_mixing_diffusion(x,10,1,0.02,4,2,50,2)};
                %
                %
                \node[anchor=west] at (axis cs: 25,{improved_dp_dynamics_first_batch(25,20,1,0.02,4,2,50,2)}) {$\alpha=15$};
                \node[anchor=west] at (axis cs: 25,{improved_dp_dynamics_first_batch(25,10,1,0.02,4,2,50,2)}) {$\alpha=10$};
            \end{axis}
        \end{tikzpicture}
        }
    \caption{for data points in the first mini-batch}
    \end{subfigure}%
    \hfill
    \begin{subfigure}[b]{0.48\textwidth}
        \scalebox{0.7}
        {\begin{tikzpicture}
            \begin{axis}[
            no markers,
            samples=25,
            xmin=0,
            ymin=0,
            axis line style = thick,
            ymax=0.155,
            axis lines = left,
            ytick={0.0,0.05,0.10,0.15}, yticklabels={0,0.05,0.10,0.15},
            xlabel=$K$,
            xmax=27,
            clip = true,
            clip mode=individual,
            axis y line*=left,axis x line*=bottom,
            legend style={at={(0.1,0.85)},anchor=west}]
                \addplot[thin,purple,domain=0:2] {naive_dp_dynamics(x,15,1,0.02,4,2,50,2)};
                \addlegendentry{Naive Privacy Dynamics Baseline~\cite{chourasia2021differential}}
                %
                \addplot[thick,purple,domain=0:25] {improved_dp_dynamics_last_batch(x,15,1,0.02,4,2,50,2)};
                %
                \addlegendentry{Our Privacy Dynamics (last mini-batch)}
                \addplot[thin,purple,dashed,domain=0:25] {priv_mixing_diffusion_last_batch(x,15,1,0.02,4,2,50,2)};
                \addlegendentry{\citet{balle2019privacy} + Composition (last mini-batch)}
                \addplot[thin,blue,domain=0:4] {naive_dp_dynamics(x,10,1,0.02,4,2,50,2)};
                \addplot[thick,blue,domain=0:25] {improved_dp_dynamics_last_batch(x,10,1,0.02,4,2,50,2)};
                %
                %
                \addplot[thin,blue,dashed,domain=0:25] {priv_mixing_diffusion_last_batch(x,10,1,0.02,4,2,50,2)};
                %
                %
                \node[anchor=west] at (axis cs: 25,{improved_dp_dynamics_last_batch(25,15,1,0.02,4,2,50,2)}) {$\alpha=15$};
                \node[anchor=west] at (axis cs: 25,{improved_dp_dynamics_last_batch(25,10,1,0.02,4,2,50,2)}) {$\alpha=10$};
            \end{axis}
        \end{tikzpicture}
        }
    \caption{for data points in the last mini-batch}
    \end{subfigure}%
    \caption{R\'enyi privacy loss of fixed-ordering noisy mini-batch gradient descent over $K$ epochs, which repeatedly goes over a fixed sequence of mini-batch partition in each epoch. We show the privacy loss for data points in the first mini-batch $B^0$ and the last mini-batch $B^{n/b-1}$ of each pass. We show the $\eps$ in the $(\q,\eps)$-RDP guarantee derived by our privacy dynamics analysis (bold lines), the naive privacy dynamics baseline (thin lines), and the privacy amplification by mixing and diffusion analysis~\cite{balle2019privacy} combined with composition (dashed line).
    We evaluate under the following setting: RDP order ${\alpha\in\{10,15\}}$;\ \  $\lam$-strongly convex loss function with $\lam =1 $; $\be$-smooth loss function with $\be=4$; gradient sensitivity $S_g=4$; size of the data set $\size=50$; step-size $\step=0.02$; noise variance $\sig^2=4$, mini-batch size $b=2$. We use~\cref{thm:strconvexsmooth} for our privacy dynamics; ~\cref{thm:naive_RDP_baseline} for naive privacy dynamics baseline; and~\cite[Theorem 5]{balle2019privacy} for~\citet{balle2019privacy} + Composition (details are in~\cref{append:figure1}). }
    \label{fig:improved_dynamics}
\end{figure}

By using the recursive privacy bound~\cref{lem:recursive} and the non-overlapping property of mini-batch partitions of the dataset, we prove position-dependent R\'enyi DP bounds for \cref{alg:noisymBGD} as follows.

\begin{theorem}[Privacy dynamics under strongly convex smooth loss]
    \label{thm:strconvexsmooth}
    Conditioned on a fixed sequence of partitioned mini-batches $B^0,\cdots,B^{n/b-1}$ in \cref{step:batch_generation_shuffle}, 
    if the loss function is $\lambda$-strongly convex, ${\beta\text{-smooth}}$ and its gradient has $\ell_2$-sensitivity $S_g$, then running \cref{alg:noisymBGD} for $K\geq 1$ epochs with step-size $\eta <\frac{2}{\lambda + \beta}$, satisfies $(\alpha,\eps)$-R\'enyi DP for data points in the batch $B^{j_0}$, with
    \begin{align}
        \label{eqn:strconvexsmooth}
        \eps\leq
        \eps_0^{\lfloor \frac{n}{2b}\rfloor}(\alpha) \cdot \frac{1 - (1-\eta\lambda)^{2\cdot (\K - 1) \cdot (n/b - \lfloor \frac{n}{2b}\rfloor)}}{1 - (1-\eta\lambda)^{2\cdot (n/b - \lfloor \frac{n}{2b}\rfloor)} } + \eps_{0}^{n/b - j_0}(\alpha)
    \end{align}
    where $\eps_0^{j}(\alpha)= \frac{\alpha \eta S_g^2}{4\sig^2b^2}\cdot \left(1 - \eta \lambda \right)^{2\cdot (j - 1)} \cdot \frac{1}{\sum_{s=0}^{ j - 1 } (1-\eta\lambda)^{2s}}$ for any $j = 1, \cdots, \frac{n}{b}$ (we assume $\frac{n}{b}\geq 2$).
\end{theorem}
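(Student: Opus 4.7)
The plan is to track the Rényi divergence $\epsilon_k^j := R_\alpha(\theta_k^j \| {\theta'}_k^j)$ iterate-by-iterate via \cref{lem:recursive}, and then solve the resulting recurrence across all $K$ epochs. The first step is to obtain an explicit log-Sobolev constant $c_k^j$ for each parameter distribution $p_k^j$. Under the stepsize restriction $\eta<2/(\lambda+\beta)$, the mini-batch gradient map $\theta \mapsto \theta - (\eta/b)\sum_{i\in B}\nabla\ell(\theta;x_i)$ on a $\lambda$-strongly-convex, $\beta$-smooth loss is $L$-Lipschitz with $L=1-\eta\lambda$, so each iteration of \cref{alg:noisymBGD_shuffle} decomposes as an $L$-Lipschitz pushforward followed by convolution with $\mathcal{N}(0,2\eta\sigma^2\mathbb{I}_d)$. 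Using the two standard rules that such a pushforward scales the LSI constant by $1/L^2$ and such a convolution adds $2\eta\sigma^2$ to the reciprocal LSI constant (Lemmas~16--17 of \citet{vempala2019rapid}, already invoked in the proof of \cref{lem:post_proc}), I will inductively derive $c_k^j = \bigl(2\eta\sigma^2\sum_{s=0}^{M-1}L^{2s}\bigr)^{-1}$ at the overall iteration index $M = kn/b + j$, starting from the point-mass initialization (which is the limit of arbitrarily strong LSI).

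Plugging these constants into \cref{lem:recursive}, the per-iteration contraction factor simplifies to $\gamma_M = L^2\,T_{M-1}/T_M$ where $T_M := \sum_{s=0}^{M-1}L^{2s}$, and each $j = j_0$ iteration adds $\Delta_\alpha := \alpha\eta S_g^2/(4\sigma^2 b^2)$ (the Gaussian-mechanism Rényi contribution). The crucial algebraic identity is the telescoping product
\begin{align*}
    \prod_{M=M_0+1}^{M_0+J}\gamma_M \;=\; L^{2J}\cdot \frac{T_{M_0}}{T_{M_0+J}},
\end{align*}
which collapses any block of post-processing contractions into a closed-form ratio of finite geometric sums. Applied to a single $\Delta_\alpha$ addition followed by $j-1$ subsequent post-processing contractions in the fresh-start regime (so $T_{M_0} = T_1 = 1$ in the numerator), this yields the single-block bound $\epsilon_0^{j}(\alpha) \le \Delta_\alpha\,L^{2(j-1)}/T_j$, i.e., the second displayed inequality of the theorem.

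To handle $K$ epochs, I will chain these blocks together. Each epoch contributes one $\Delta_\alpha$ surrounded by $j_0$ pre-contractions and $n/b-1-j_0$ post-contractions, so between two successive additions there are exactly $n/b-1$ contractions. I will split this budget in a $j_0$-free way: a ``post-$\Delta$ block'' of $\lfloor n/(2b)\rfloor - 1$ contractions (collapsed by Step~2 into an $\epsilon_0^{\lfloor n/(2b)\rfloor}(\alpha)$ factor uniformly in $j_0$), and a ``pre-$\Delta$ block'' of $q := n/b - \lfloor n/(2b)\rfloor$ contractions (each bounded by the uniform steady-state factor $\gamma_M \le L^2$). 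Writing the running privacy loss as $a_k := \epsilon_k^0$, the resulting linear recurrence $a_{k+1} \le a_k\cdot(1-\eta\lambda)^{2q} + \epsilon_0^{\lfloor n/(2b)\rfloor}(\alpha)$ solves to the geometric series $\epsilon_0^{\lfloor n/(2b)\rfloor}(\alpha)\cdot\bigl(1-(1-\eta\lambda)^{2(K-1)q}\bigr)/\bigl(1-(1-\eta\lambda)^{2q}\bigr)$ for the first $K-1$ epochs, and the final epoch's contribution (computed with the actual $j_0$) supplies the additive $\epsilon_0^{n/b-j_0}(\alpha)$ term in \cref{eqn:strconvexsmooth}. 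I expect the main obstacle to be this last step: the split must be done so that the resulting per-epoch bound holds simultaneously for all possible $j_0\in\{0,\ldots,n/b-1\}$, and choosing the split point to be exactly $\lfloor n/(2b)\rfloor$ is what makes the uniform-in-$j_0$ bound match the shape of the geometric denominator. The bookkeeping of the iteration indices $M$ inside each telescoped product, and the replacement of the position-dependent ratios $T_{M_0}/T_{M_0+J}$ by their worst-case surrogate, is the delicate technical core of the argument.
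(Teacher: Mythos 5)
Your overall route mirrors the paper's closely: deriving the epoch-indexed log-Sobolev constants $c_k^j$ from Lemmas 16--17 of \citet{vempala2019rapid}, feeding them through \cref{lem:recursive} to get the per-step contraction $\gamma_M = L^2 T_{M-1}/T_M$, telescoping to obtain $\eps_0^j(\alpha)$, and then solving a linear recurrence anchored at the split point $\lfloor n/(2b)\rfloor$. That much is exactly the paper's strategy.

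However, there is a genuine gap in your justification of the multi-epoch recurrence, and it is not merely "delicate bookkeeping'' as you anticipate. You propose to obtain the additive term $\eps_0^{\lfloor n/(2b)\rfloor}(\alpha)$ by "collapsing the post-$\Delta$ block via Step~2'' and then replacing the position-dependent ratio $T_{M_0}/T_{M_0+J}$ by its worst-case surrogate. This cannot work. The telescope applied at iteration index $M_0 = k\,n/b$ yields
\begin{align*}
\eps_k^{\lfloor n/(2b)\rfloor} \;\le\; \bigl(\eps_k^0 + \Delta_\alpha\bigr)\cdot L^{2(\lfloor n/(2b)\rfloor - 1)}\cdot \frac{T_{M_0+1}}{T_{M_0+\lfloor n/(2b)\rfloor}},
\end{align*}
and the ratio $T_{M_0+1}/T_{M_0+\lfloor n/(2b)\rfloor}$ is \emph{increasing} in $M_0$, tending to $1$ as $M_0\to\infty$. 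So the worst-case (largest) surrogate is $1$, not the fresh-start value $T_1/T_{\lfloor n/(2b)\rfloor}$; you would be left with $(\eps_k^0+\Delta_\alpha)\cdot L^{2(n/b-1)}$ for the whole epoch, whose fixed point is $\Delta_\alpha L^{2(n/b-1)}/(1-L^{2(n/b-1)})$ --- larger than the claimed $\eps_0^{\lfloor n/(2b)\rfloor}/(1-L^{2q})$ by a factor of roughly $n/(4b)$ when $\eta\lambda$ is small. In other words, the naive telescope-plus-worst-case argument forfeits exactly the $n/b$-fold improvement that the theorem advertises over the gradient-descent-baseline bound.

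The missing ingredient is the adaptive R\'enyi DP composition theorem of \citet{mironov2017renyi}: running \cref{alg:noisymBGD_shuffle} for $kn/b+j$ iterations is a composition $\mathcal A_2\circ\mathcal A_1$, where $\mathcal A_1$ is $k$ full epochs and $\mathcal A_2$ is $j$ further iterations launched from the output of $\mathcal A_1$, so $\eps_k^j(\alpha) \le \eps_k^0(\alpha) + \eps_0^j(\alpha)$. This decouples the $\Delta_\alpha$ contribution from $\eps_k^0$ \emph{additively}, which the telescope (a multiplicative statement) cannot do. It is this composition bound, applied on the first $\lfloor n/(2b)\rfloor$ iterations of each epoch and combined with the uniform contraction $\gamma_M\le L^2$ on the remaining $q=n/b-\lfloor n/(2b)\rfloor$ iterations, that produces the recurrence $\eps_{k+1}^0 \le (\eps_k^0+\eps_0^{\lfloor n/(2b)\rfloor})\,L^{2q}$ (a slightly tighter form of the $a_{k+1}\le a_k L^{2q}+\eps_0^{\lfloor n/(2b)\rfloor}$ you wrote down; both solve to the theorem's bound). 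A related simplification you should adopt: rather than forcing a "$j_0$-free'' per-epoch split, first reduce the general $j_0$ to $j_0=0$ via the observation that privacy for batch $B^{j_0}$ over $K$ epochs equals privacy for batch $B^0$ over $K-1$ epochs plus $n/b-j_0$ iterations, and then apply composition once more for the partial epoch; this removes the $j_0$-uniformity concern you flagged as the main obstacle.
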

\textit{Proof Sketch.}
    We first prove that the distribution of parameters $\theta_k^j$ satisfies LSI with a constant $c_k^j$ that depends on $k,j$. Then we plug the LSI constants into \cref{lem:recursive} and prove a recursive privacy bound for data points in each batch $B^{j_0}$. Finally, by carefully choosing which recursion to use and solve, we obtain the privacy bound in the theorem statement. The complete proof is in \cref{appsub:proof_strconvexsmooth}.~\qedsymbol{}
The above new privacy bound~\cref{thm:strconvexsmooth} quantifies the privacy amplification during iterations that only access the shared data points between neighboring datasets, and it is significantly smaller than the naive privacy dynamics baseline derived from~\citet{chourasia2021differential} (that does not capture this additional privacy amplification). In \cref{fig:improved_dynamics}, we observe that for data points in the first batch $B^0$, our new privacy dynamics bound is smaller by a multiplicative factor of approximately $n/b$ (where $b/n$ is the batch sampling ratio), and the improvement for data points in the last-batch is also significant.

\paragraphbe{Bound improvement compared with \citet{balle2019privacy}} In the first epoch (when $K=1$), our privacy dynamics bound (based on recursive scheme) in \cref{thm:strconvexsmooth} is of same order as the privacy amplification bound~\citet[Theorem 5]{balle2019privacy}, as shown in \cref{fig:improved_dynamics}. However, for training with multiple epochs, \citet{balle2019privacy} use a coupling-based approach, and we do not see an easy way to extend their analysis except by using R\'enyi DP composition~\cite{mironov2017renyi} over the epochs, which gives a linearly accumulating R\'enyi~DP bound for multiple epochs (as $\K$ increases). On the contrary, as shown in~\cref{fig:improved_dynamics}, our improved privacy dynamics bound converges to a constant, thus significantly improves over the R\'enyi~DP composition of~\citet[Theorem 5]{balle2019privacy}.
\section{Privacy dynamics for noisy stochastic mini-batch gradient descent}
\label{sec:sgd_subsampled}

In this section, we further improve over the position-dependent privacy dynamics analysis in~\cref{sec:fix_sgd}, by incorporating the effect of amplification of privacy loss due to the stochastic mini-batch sampling. We first prove the following lemma by the convexity of $f$-divergence with $f(x) = x^\alpha$.

\begin{lemma}[Joint convexity of scaled exponentiation of R\'enyi divergence]
    \label{lem:rootconvexity1}
    Let $\mu_1,\cdots,\mu_m$ and $\nu_1, \cdots, \nu_m$ be distributions over $R^d$. Then for any RDP order $\alpha\geq 1$, and any coefficients $p_1,\cdots,p_m\geq 0$ that satisfy $p_1+\cdots + p_m=1$, the following inequality holds.
    \begin{align}
        \label{eqn:rootconvexity1}
        e^{(\alpha-1) \cdot R_\alpha(\sum_{j=1}^mp_j\mu_j\lVert \sum_{j=1}^mp_j\nu_j)} & \leq \sum_{j=1}^mp_j\cdot e^{(\alpha-1)\cdot R_\alpha(\mu_j\lVert \nu_j)}
    \end{align}
\end{lemma}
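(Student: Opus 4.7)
The plan is to recognize the exponentiated scaled R\'enyi divergence as an $f$-divergence with $f(t) = t^\alpha$, and then invoke joint convexity of this $f$-divergence via the perspective-function construction. Concretely, letting $\lambda$ be any common dominating measure (e.g.\ $\lambda = \sum_j(\mu_j+\nu_j)$) and writing densities with respect to $\lambda$, I would first rewrite
\begin{equation*}
e^{(\alpha-1)R_\alpha(\mu\|\nu)} \;=\; \int \mu(\theta)^{\alpha}\,\nu(\theta)^{1-\alpha}\,d\lambda(\theta),
\end{equation*}
so that the claim \cref{eqn:rootconvexity1} becomes the pointwise-then-integrate statement
\begin{equation*}
\int \Bigl(\textstyle\sum_j p_j \mu_j\Bigr)^{\alpha}\Bigl(\textstyle\sum_j p_j \nu_j\Bigr)^{1-\alpha} d\lambda \;\leq\; \sum_j p_j \int \mu_j^{\alpha}\,\nu_j^{1-\alpha}\,d\lambda.
\end{equation*}

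Next I would establish the key analytic fact that the map $\phi:(0,\infty)^2\to(0,\infty)$ given by $\phi(x,y) = x^{\alpha} y^{1-\alpha}$ is jointly convex for $\alpha\geq 1$. The cleanest way is to observe that $\phi$ is the perspective of the convex function $f(t)=t^{\alpha}$, i.e.\ $\phi(x,y) = y\,f(x/y)$; since the perspective of a convex function is jointly convex on the positive orthant, this is immediate. (Alternatively, one can verify convexity by computing the $2\times 2$ Hessian and checking that it is positive semidefinite, using $\alpha(\alpha-1)\geq 0$.) Extending $\phi$ to the boundary by continuity (with the convention $0^\alpha 0^{1-\alpha}=0$ and a standard limiting argument handling points where $\nu_j(\theta)=0$ but $\mu_j(\theta)>0$, in which case the right-hand side is $+\infty$ and the inequality is trivial) lets me apply joint convexity pointwise in $\theta$:
\begin{equation*}
\phi\!\left(\textstyle\sum_{j}p_j\mu_j(\theta),\,\sum_{j}p_j\nu_j(\theta)\right) \;\leq\; \sum_{j} p_j\,\phi\bigl(\mu_j(\theta),\nu_j(\theta)\bigr).
\end{equation*}

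Finally, I would integrate this pointwise inequality against $\lambda$ and use linearity of the integral on the right-hand side to obtain the desired bound. The only genuinely delicate point is the measure-theoretic handling of the boundary case where some $\nu_j$ puts no mass where $\mu_j$ does; apart from that, the argument is a direct application of joint convexity of $(x,y)\mapsto x^\alpha y^{1-\alpha}$. I would therefore expect the main (very modest) obstacle to be presenting the perspective-function / convexity justification cleanly, rather than any difficult computation.
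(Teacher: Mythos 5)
Your proposal is correct and takes essentially the same approach as the paper: both identify $e^{(\alpha-1)R_\alpha(\mu\lVert\nu)}$ as the $f$-divergence with $f(x)=x^\alpha$ and appeal to joint convexity of $f$-divergences. The only difference is that the paper cites Theorem 3.1 of Taneja and Kumar (2004) for that joint convexity, whereas you supply the standard perspective-function proof of the same fact inline.
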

We provide a detailed proof for~\cref{lem:rootconvexity1} in~\cref{append:rootconvexity}. This Lemma is our main tool for quantifying the additional privacy amplification under stochastic mini-batches in the rest of the~section.

\subsection{Privacy dynamics under shuffle and partition}
\label{ssec:shuffle}
The ``shuffle and partition'' batch generation scheme is commonly used for practical DP-SGD implementations in privacy libraries~\cite{tensorflowprivacy,yousefpour2021opacus}. By using~\cref{thm:strconvexsmooth} and new bounds for the privacy amplification by shuffling in~\cref{alg:noisymBGD}, we prove the following R\'enyi DP guarantee.

\begin{theorem}[Privacy dynamics under ``shuffle and partition'']
    If the loss function $\ell(\theta;x)$ is ${\lambda\text{-strongly convex}}$, $\beta$-smooth, and if its gradient has finite $\ell_2$-sensitivity $S_g$, then for $\frac{n}{b}\geq 2$, running \cref{alg:noisymBGD} for $K\geq 1$ epochs with step-size $\eta<\frac{2}{\lambda + \beta}$, under ``shuffle and partition'' batch generation scheme, satisfies $(\alpha,\eps)$-R\'enyi DP with
    \begin{align}
    \label{eqn:shuffle_dynamics}
    \eps & \leq \eps_0^{\lfloor \frac{n}{2b}\rfloor}(\alpha) \cdot \frac{1 - (1-\eta\lambda)^{2\cdot (\K - 1) \cdot (n/b - \lfloor \frac{n}{2b}\rfloor)}}{1 - (1-\eta\lambda)^{2\cdot (n/b - \lfloor \frac{n}{2b}\rfloor)} } +  \frac{1}{\alpha-1}\cdot \log\left(\underset{0\leq j_0<n/b}{Avg} e^{(\alpha - 1)\eps_{0}^{n/b - j_0}(\alpha)}\right)
    \end{align}
    where $\eps_0^{j}(\alpha) = \frac{\alpha \eta S_g^2}{4\sig^2b^2}\cdot \left(1 - \eta \lambda \right)^{2\cdot (j - 1)} \cdot \frac{1}{\sum_{s=0}^{ j - 1 } (1-\eta\lambda)^{2s}}$ for any $j = 1, \cdots, \frac{n}{b}$ (we assume $\frac{n}{b}\geq 2$).
    \label{thm:shuffle}
\end{theorem}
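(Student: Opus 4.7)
The plan is to combine the fixed-batch bound of \cref{thm:strconvexsmooth} with the joint convexity lemma \cref{lem:rootconvexity1}, exploiting the symmetry of the uniform shuffle to amplify only the $j_0$-dependent part of the fixed-batch bound. The crucial structural observation is that \cref{alg:noisymBGD_shuffle} draws a single shuffle and applies it to both $D$ and $D'$, so the induced distribution $P(B)$ over partitions $B=(B^0,\ldots,B^{n/b-1})$ is identical for both datasets and can serve as the common mixing variable required by \cref{lem:rootconvexity1}.

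First, I would decompose each output distribution as $p(\theta_\K^0\mid D) = \sum_B P(B)\, p(\theta_\K^0\mid D, B)$, and likewise for $D'$. Applying \cref{lem:rootconvexity1} with weights $P(B)$ yields
\[
    e^{(\alpha-1)\eps}\;\leq\; \sum_B P(B)\, e^{(\alpha-1)\, R_\alpha(p(\theta_\K^0\mid D,B)\,\|\,p(\theta_\K^0\mid D',B))}.
\]
For each fixed $B$, \cref{thm:strconvexsmooth} bounds the conditional R\'enyi divergence by a $B$-independent term
\[
    T \;:=\; \eps_0^{\lfloor n/(2b)\rfloor}(\alpha)\cdot \frac{1-(1-\eta\lambda)^{2(\K-1)(n/b-\lfloor n/(2b)\rfloor)}}{1-(1-\eta\lambda)^{2(n/b-\lfloor n/(2b)\rfloor)}}
\]
plus the position-dependent term $\eps_0^{n/b - j_0(B)}(\alpha)$, where $j_0(B)\in\{0,\ldots,n/b-1\}$ is the index of the batch containing the differing record $i_0$.

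Next, I would pull $e^{(\alpha-1)T}$ outside the sum and use the symmetry of the uniform shuffle: since every index of the shuffled sequence is equally likely to land in any of the $n/b$ batches, the marginal of $j_0(B)$ under $P(B)$ is uniform on $\{0,1,\ldots,n/b-1\}$. Therefore the remaining sum collapses to $\sum_B P(B)\, e^{(\alpha-1)\eps_0^{n/b-j_0(B)}(\alpha)} \;=\; \underset{0\leq j_0 < n/b}{\mathrm{Avg}}\; e^{(\alpha-1)\eps_0^{n/b-j_0}(\alpha)}$. Taking $\tfrac{1}{\alpha-1}\log(\cdot)$ of both sides of the resulting inequality recovers exactly \cref{eqn:shuffle_dynamics}.

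The main obstacle is not conceptual but rather bookkeeping: one must choose the mixture decomposition so that the $B$-dependent part of the fixed-batch bound collapses to a function of $j_0(B)$ alone. Fortunately, the explicit form \cref{eqn:strconvexsmooth} already has this property, so no repackaging is required. A subtler point to verify is that \cref{thm:strconvexsmooth} applies \emph{uniformly} over every partition $B$ in the support of $P(B)$; this follows because $\lambda$-strong convexity, $\beta$-smoothness, and the gradient $\ell_2$-sensitivity $S_g$ are per-datapoint assumptions that descend to every $b$-sized mini-batch average, and the log-Sobolev conditions on the intermediate parameter distributions established in the proof of \cref{thm:strconvexsmooth} hold regardless of which particular partition is drawn.
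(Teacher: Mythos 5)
Your proof is correct and follows essentially the same route as the paper: mixture decomposition over the random partition $B$ as in \cref{eqn:mixture}, application of \cref{lem:rootconvexity1}, invocation of the fixed-batch bound \cref{thm:strconvexsmooth} for each conditional, and use of the uniformity of the shuffle (each index is equally likely to land in any of the $n/b$ batches) to collapse the $B$-sum into the average over $j_0$. The paper's argument is terser but the decomposition, the key lemma, and the symmetry reduction are the same; your additional remark that the hypotheses of \cref{thm:strconvexsmooth} hold uniformly over all partitions in the support of $P(B)$ is a correct and worthwhile sanity check that the paper leaves implicit.
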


\begin{figure}[t!]
    \centering 
    \begin{subfigure}[b]{0.48\textwidth}
        \scalebox{0.7}{
        \begin{tikzpicture}
            \begin{axis}[
            no markers,
            samples=50,
            xmin=0,
            ymin=0,
            ymax=0.155,
            axis line style = thick,
            axis lines = left,
            ytick={0.0,0.05,0.10,0.15}, yticklabels={0,0.05,0.10,0.15},
            xlabel=$K$,
            ylabel={$\eps$ in $(\alpha, \eps)$-R\'enyi Differential Privacy},
            xmax=42,clip = true,
            clip mode=individual,axis y line*=left,axis x line*=bottom,
            legend style={at={(0.05,1.1)},anchor=west}]
                %
                %
                \addplot[thick,purple] table [x=k, y=eps, col sep=comma] {figures/data_shuffle_sample/k=40_a=15_l=1_e=0.02_g=4_s=2_n=50_b=2.csv};
                \addlegendentry{Our Privacy Dynamics (shuffle and partition)}
                %
                %
                \addplot[thick,purple,dotted,domain=0:40] {improved_dp_dynamics_last_batch(x,15,1,0.02,4,2,50,2)};
                \addlegendentry{Our Privacy Dynamics (fixed-order last mini-batch)}
                %
                %
                \addplot[thin,purple,dashed] table [x=k, y=eps, col sep=comma] {figures/data_sgm_sample/k=40_a=15_l=1_e=0.02_g=4_s=2_n=50_b=2.csv};
                \addlegendentry{DP-SGD~\cite{abadi2016deep} + SGM~\cite{mironov2019r} (composition)}
                \addplot[thin,blue,dashed] table [x=k, y=eps, col sep=comma] {figures/data_sgm_sample/k=40_a=10_l=1_e=0.02_g=4_s=2_n=50_b=2.csv};
                \addplot[thick,blue,dotted,domain=0:40] {improved_dp_dynamics_last_batch(x,10,1,0.02,4,2,50,2)};
                \addplot[thick,blue] table [x=k, y=eps, col sep=comma] {figures/data_shuffle_sample/k=40_a=10_l=1_e=0.02_g=4_s=2_n=50_b=2.csv};
                %
                %
                %
                %
                \node[anchor=west] at (axis cs: 40,{improved_dp_dynamics_last_batch(40,15,1,0.02,4,2,50,2)}) {$\alpha=15$};
                \node[anchor=west] at (axis cs: 40,{improved_dp_dynamics_last_batch(40,10,1,0.02,4,2,50,2)}) {$\alpha=10$};
            \end{axis}
        \end{tikzpicture}
        }
    \caption{shuffle and partition}
    \end{subfigure}%
    \hfill
    \begin{subfigure}[b]{0.48\textwidth}
        \scalebox{0.7}{
        \begin{tikzpicture}
            \begin{axis}[
            no markers,
            samples=50,
            xmin=0,
            ymin=0,
            axis line style = thick,
            ymax=0.155,
            axis lines = left,
            ytick={0.0,0.05,0.10,0.15}, yticklabels={0,0.05,0.10,0.15},
            xlabel=$K$,
            xmax=42,clip = true,
            clip mode=individual,axis y line*=left,axis x line*=bottom,
            legend style={at={(0.03,1.15)},anchor=west}]
                %
                \addplot[thick,purple] table [x=k, y=eps, col sep=comma] {figures/data_rec_sample/k=40_a=15_l=1_e=0.02_g=4_s=2_n=50_b=2.csv};
                \addlegendentry{Our Privacy Dynamics (samp. w.o. replacement)}
                %
                \addplot[thin,purple,dashed] table [x=k, y=eps, col sep=comma] {figures/data_sgm_sample/k=40_a=15_l=1_e=0.02_g=4_s=2_n=50_b=2.csv};
                \addlegendentry{DP-SGD~\cite{abadi2016deep} + SGM~\cite{mironov2019r} (composition)}
                %
                %
                \addplot[thin,blue,dashed] table [x=k, y=eps, col sep=comma] {figures/data_sgm_sample/k=40_a=10_l=1_e=0.02_g=4_s=2_n=50_b=2.csv};
                %
                %
                \addplot[thick,blue] table [x=k, y=eps, col sep=comma] {figures/data_rec_sample/k=40_a=10_l=1_e=0.02_g=4_s=2_n=50_b=2.csv};
                %
                %
                %
                %
                \node[anchor=west] at (axis cs: 40,{0.1453135217648525}) {$\alpha=15$};
                \node[anchor=west] at (axis cs: 40,{0.06724058347919833}) {$\alpha=10$};
            \end{axis}
        \end{tikzpicture}
        }  %
    \caption{(re)sample without replacement}
    \end{subfigure}%
    \caption[Caption without FN]{Rényi privacy loss of noisy (stochastic) mini-batch gradient descent over $K$ epochs. We show $\eps$ in the $(\q,\eps)$-RDP guarantee derived by our privacy dynamics bound under "shuffle and partition" (bold lines, left plot), our privacy dynamics bound under sampling without replacement (bold lines, right plot), our privacy dynamics bound for data points in the last batch $B^{n/b - 1}$ (thin dashed lines), and the baseline composition-based bound for DP-SGD~\cite{abadi2016deep,mironov2019r} (thin lines). We evaluate under the following setting: RDP order ${\alpha\in\{10,15\}}$;\ \  $\lam$-strongly convex loss function with $\lam =1 $; $\be$-smooth loss function with $\be=4$; finite total gradient sensitivity $S_g=4$; size of the data set $\size=50$; step-size $\step=0.02$; noise variance $\sig^2=4$, batch size $b=2$. The expressions for computing the privacy bounds are: Privacy Dynamics (shuffle and partition): \cref{thm:shuffle}~\footnotemark; Privacy Dynamics (samp. w.o. replacement): \cref{thm:amp_samp_wo_replacement}; Privacy Dynamics (last batch): \cref{thm:strconvexsmooth} under $j_0=n/b-1$; Composition: derived from Section 3.3. of~\citet{mironov2019r} that approximately equals $\frac{b}{n}\cdot \frac{\alpha \eta S_g^2}{4\sig^2b^2}\cdot \K$. }
    \label{fig:comp_shuffle}
\end{figure}
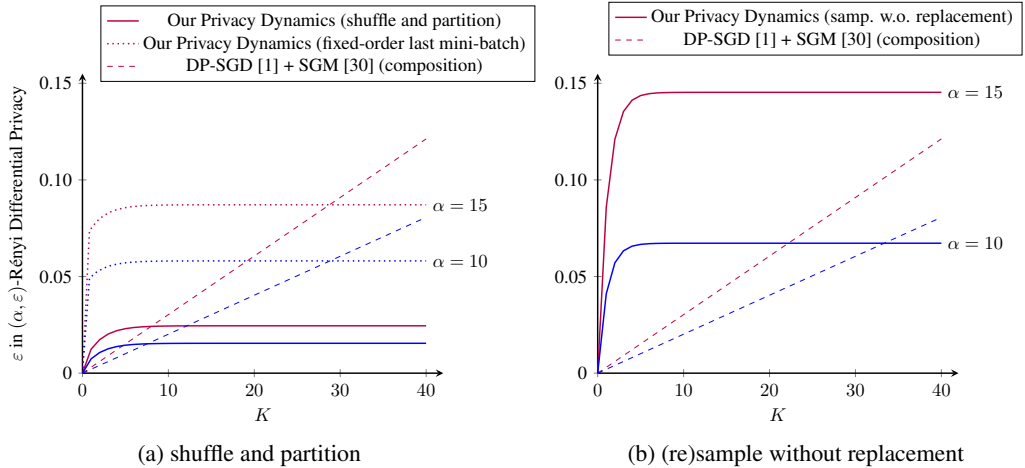

\footnotetext{When $\alpha-1$ and {\scriptsize$\eps_0^{n/b-j_0}(\alpha)$} are large, the second term in \cref{eqn:shuffle_dynamics} might overflow. In the experiments, we compute this log-sum-exp term with the shifted approximation {\scriptsize$\eps_0^{1} + \log\underset{0\leq j_0<n/b}{Avg} e^{(\alpha - 1)\left(\eps_{0}^{n/b - j_0}(\alpha) - \eps_0^1(\alpha)\right)}$}.}
The above R\'enyi DP bound~\cref{thm:shuffle} is always smaller than the R\'enyi DP bound in~\cref{thm:strconvexsmooth} for the worst-case fixed mini-batch sequence (i.e., when the differing data point is in the last mini-batch of each pass). Therefore, \cref{thm:shuffle} quantifies the privacy amplification due to shuffling, when compared to the worst-case R\'enyi DP bound among all possible mini-batch sequences. In~\cref{fig:comp_shuffle} (a), we illustrate this amplification in more details.

\paragraphbe{Comparison with prior amplification by sub-sampling bounds~\cite{mironov2019r,feldman2022hiding,feldman2022stronger}} We now investigate how our R\'enyi DP bound \cref{thm:shuffle} compares with prior bounds for privacy amplification by sub-sampled mechanisms, in terms of its amplification rate (i.e. the ratio between RDP bound for noisy mini-batch GD on sub-sampled mini-batch with size $b$, and RDP bound for full-batch noisy GD on a small dataset with size $b$). Observe that only the first term of the R\'enyi DP bound~\cref{eqn:shuffle_dynamics} increases with the number of epochs $K$, and its growth rate is strictly smaller than $\eps_0^{\lfloor\frac{n}{b}\rfloor}< \frac{b}{n}\frac{\alpha \eta S_g^2}{4\sigma^2b^2}$. Here, the term $\frac{\alpha \eta S_g^2}{4\sigma^2b^2}$ is exactly the R\'enyi DP bound for a \emph{full-batch} noisy GD update on dataset with small size $b$. Therefore,~\cref{thm:shuffle} achieves an amplification rate of $O(\frac{b}{n})$ for \emph{one epoch} of noisy mini-batch gradient descent under ``shuffle and partition''. This $O(\frac{b}{n})$ amplification rate for one epoch matches the $O(\frac{b^2}{n^2})$ amplification rate in prior bounds for one epoch of sub-sampled mechanisms~\cite{mironov2019r,feldman2022hiding,feldman2022stronger}, because one epoch consists of $\frac{n}{b}$ iterations. 

However, prior bounds for privacy amplification by sub-sampling (such as for sub-sampled Gaussian mechanism~\cite{mironov2019r} and for noisy SGD under ``shuffle and partition''~\cite{feldman2022hiding,feldman2022stronger}) only study a single update or a single epoch, and then rely on composition theorem for computing amplified privacy bounds for multiple epochs. Consequently, the R\'enyi DP bounds derived by such analyses linearly grow with the number of epochs $K$. On the contrary, our R\'enyi DP bound in \cref{thm:shuffle} applies to multiple epochs with hidden intermediate state, and thus enables a converging R\'enyi DP bound that never exceeds a maximum value. Hence, our R\'enyi DP bound in \cref{thm:shuffle} is strictly smaller than composition-based privacy bound for DP-SGD~\cite{abadi2016deep,mironov2019r} after $\frac{1}{\lambda\eta} + \frac{n}{b}$ epochs. This is because the first term in \cref{eqn:shuffle_dynamics} is strictly smaller than the composition-based privacy bound after $\frac{1}{\lambda \eta}$ epochs, and the second term in \cref{eqn:shuffle_dynamics} is strictly smaller than the composition-based privacy bound after $n/b$ epochs (as we explain with more details in~\cref{append:proofshuffle}). 

\subsection{Privacy dynamics under (re)sampling mini-batch of fixed size without replacement} 

\label{ssec:samp_wo_replacement}

We now similarly analyze privacy dynamics for another variant of noisy mini-batch gradient descent (\cref{alg:noisymBGD}), where in each iteration of each epoch, we freshly (re)sample $b$ indices without replacement from $\{1,\cdots,n\}$ to obtain a fixed-size mini-batch. That is, the mini-batches used in different iterations (of one epoch) may~overlap. This ``sampling a mini-batch of fixed size without replacement'' scheme is widely studied in the amplification by sub-sampling literature~\cite{balle2019privacy,wang2019subsampled,koskela2020computing}, as an attractive alternative that ensures fixed mini-batch size (when compared to Poisson sub-sampling). 

\begin{theorem}[Recursive amplification by sampling without replacement]
    \label{thm:amp_samp_wo_replacement}
    If the loss function $\ell(\theta;x)$ is $\lambda$-strongly convex, $\beta$-smooth, and if its gradient has finite $\ell_2$-sensitivity $S_g$, then \cref{alg:noisymBGD} under sampling without replacement and stepsize $\eta<\frac{2}{\lambda + \beta}$ satisfies $(\alpha, \eps)$-R\'enyi DP guarantee with
    \begin{align}
        \eps \leq \frac{1}{\alpha - 1} \log\left(S_\K^0(\alpha)\right)
        \label{eqn:amp_samp_wo_replacement}
    \end{align}
    where the terms $S_k^j(\alpha)$ for $k = 0,\cdots,\K-1$ and $j=0, \cdots, n/b - 1$ are recursively computed by $S_0^0(\alpha) = 1$; $S_k^{j+1}(\alpha) = \frac{b}{n}\cdot e^{\frac{(\alpha - 1) \alpha \eta S_g^2}{4\sig^2 b^2}}\cdot S_k^j(\alpha) + (1-\frac{b}{n})\cdot S_k^j(\alpha)^{(1-\eta\lambda)^2}$; and $S_{k+1}^0(\alpha) = S_k^{n/b}(\alpha)$.
\end{theorem}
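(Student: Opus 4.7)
The plan is to prove this by induction on the iteration index $(k,j)$, working with an auxiliary quantity that tracks the exponentiated R\'enyi divergence history-by-history rather than through the marginal. I would define
\begin{align*}
\bar S_k^j(\alpha)\eqdef \sum_{H}P(H)\cdot e^{(\alpha-1)R_\alpha(p^H(\theta_k^j)\lVert p'^H(\theta_k^j))},
\end{align*}
where $H=(B_0^0,\ldots,B_k^{j-1})$ ranges over all mini-batch histories up to the current step, $P(H)$ is the associated sampling probability, and $p^H,p'^H$ are the conditional output distributions under $\D$ and $\D'$. By the reduction to point initialization in \cref{eqn:mixture_init} we have $\bar S_0^0=1=S_0^0$, and \cref{lem:rootconvexity1} applied to the fully marginalized output $p(\theta_k^j)=\sum_H P(H)\cdot p^H(\theta_k^j)$ gives $e^{(\alpha-1)R_\alpha(p\lVert p')}\le\bar S_k^j(\alpha)$. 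Thus the theorem reduces to showing $\bar S_k^j(\alpha)\le S_k^j(\alpha)$ for every $(k,j)$.

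For the inductive step $(k,j)\to(k,j+1)$, I would first fix a history $H$ and a next batch $B_k^j=B$, and apply \cref{lem:recursive} to the single noisy gradient step from $p^H(\theta_k^j)$ to $p^{H,B}(\theta_k^{j+1})$. Because $p^H$ is generated from point initialization by a deterministic batch schedule, the log-Sobolev-inequality tracking from the proof of \cref{thm:strconvexsmooth} applies verbatim and supplies the LSI constant needed by \cref{lem:recursive}. The two cases of \cref{lem:recursive} then give, after exponentiating,
\begin{align*}
e^{(\alpha-1)R_\alpha(p^{H,B}\lVert p'^{H,B})}\le \begin{cases}
e^{(\alpha-1)\alpha\eta S_g^2/(4\sig^2 b^2)}\cdot e^{(\alpha-1)R_\alpha(p^H\lVert p'^H)}, & i_0\in B,\\
\bigl(e^{(\alpha-1)R_\alpha(p^H\lVert p'^H)}\bigr)^{(1-\eta\lambda)^2}, & i_0\notin B,
\end{cases}
\end{align*}
where $i_0$ is the differing index between $\D$ and $\D'$; under the step-size constraint $\eta<2/(\lambda+\beta)$ the gradient map is $(1-\eta\lambda)$-Lipschitz and combines with the tracked LSI constant from \cref{thm:strconvexsmooth}'s proof to yield precisely the $(1-\eta\lambda)^2$ contraction factor.

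Next I would average the displayed bound first over $H$ weighted by $P(H)$ and then over $B$ weighted by $P(B_k^j=B)$, using $P(i_0\in B_k^j)=b/n$ for sampling without replacement. The $i_0\in B$ group contributes $\tfrac{b}{n}e^{(\alpha-1)\alpha\eta S_g^2/(4\sig^2 b^2)}\cdot \bar S_k^j(\alpha)$ by linearity. For the $i_0\notin B$ group, the key step is Jensen's inequality applied to the concave map $x\mapsto x^{(1-\eta\lambda)^2}$ on $[0,\infty)$ (concave since $(1-\eta\lambda)^2\in(0,1)$), which lets me pull the history average inside the power to obtain $(1-\tfrac{b}{n})\cdot \bar S_k^j(\alpha)^{(1-\eta\lambda)^2}$. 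Combining the two groups and invoking monotonicity of the map $x\mapsto \tfrac{b}{n}e^{C}x+(1-\tfrac{b}{n})x^{(1-\eta\lambda)^2}$ in $x$, the inductive hypothesis $\bar S_k^j\le S_k^j$ propagates to $\bar S_k^{j+1}\le S_k^{j+1}$. Taking $\tfrac{1}{\alpha-1}\log$ of $\bar S_K^0\le S_K^0$ then delivers the stated R\'enyi DP bound.

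The main obstacle is that directly applying \cref{lem:recursive} to the marginal mixture $p(\theta_k^j)$ would require a log-Sobolev inequality for the mixture itself, which is generally not preserved under averaging of Gaussian-smoothed components. Decomposing down to fully fixed batch histories (and applying \cref{lem:recursive} only at that level, where the LSI tracking of \cref{thm:strconvexsmooth} is available) sidesteps this; the algebraic observation that closes the decomposition back into a univariate recursion on $\bar S_k^j$ is the concavity of $x^{(1-\eta\lambda)^2}$, which lets Jensen's inequality push the history average through the contraction power.
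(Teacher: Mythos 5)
Your proof is correct and takes essentially the same route as the paper: decompose the output distribution into batch-history conditionals, apply \cref{lem:rootconvexity1} (the paper instead cites the equivalent joint-convexity lemma from Balle et al.) to pass to an expectation of exponentiated R\'enyi divergences, apply \cref{lem:recursive} at the fixed-history level using the LSI constants from the proof of \cref{thm:strconvexsmooth}, and close the recursion using concavity of $x\mapsto x^{(1-\eta\lambda)^2}$. The only cosmetic difference is that you maintain an explicit invariant $\bar S_k^j\le S_k^j$ whereas the paper directly names the history-averaged quantity $S_k^j$ and shows it satisfies the recursion with inequality; one small imprecision is your phrase ``precisely the $(1-\eta\lambda)^2$ contraction factor'' --- the LSI constant at step $(k,j)$ actually yields a factor $\frac{(1-\eta\lambda)^2\sum_{s=0}^{m-1}(1-\eta\lambda)^{2s}}{\sum_{s=0}^{m}(1-\eta\lambda)^{2s}}\le(1-\eta\lambda)^2$ with $m=kn/b+j$, and the theorem's recursion deliberately relaxes this to the uniform upper bound $(1-\eta\lambda)^2$, but this does not affect the validity of your argument.
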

Observe that in the above recursion, the term $S_{k}^{j+1}(\alpha)$ is strictly smaller than $e^{\frac{(\alpha - 1) \alpha \eta S_g^2}{4\sig^2 b^2}}\cdot S_k^j(\alpha)$ (i.e., the R\'enyi DP bound for one step of noisy gradient descent on dataset with size $b$), thus quantifying the privacy amplification by sampling a mini-batch of fixed size without replacement. We illustrate this privacy amplification in Figure~\ref{fig:comp_shuffle} (b).
However, our privacy bound under ``sample without replacement'' in~\cref{fig:comp_shuffle} (b) is larger than our privacy bound under ``shuffle and partition'' in~\cref{fig:comp_shuffle} (a), which suggests room for future improvement of the amplification rate bound. Indeed, for the special case of a single update, prior privacy amplification by subsampling bounds~\cite{wang2019subsampled,koskela2020computing} achieve better amplification rates of $O(\frac{b^2}{n^2})$. 

However, prior bounds for privacy amplification by sampling generally \emph{only} apply to \emph{a single sub-sampled update}. Therefore, for analyzing multiple iterations of updates, prior works still rely on composition theorems, which results in linear growth of R\'enyi DP bound with regard to number of epochs. On the contrary, our privacy amplification bound \cref{thm:amp_samp_wo_replacement} applies to multiple iteration of hidden-state (re)sampling without replacement steps, which enables a converging R\'enyi DP bound even for training an infinite number of epochs. In \cref{fig:comp_shuffle} (b), we compare our R\'enyi DP bound~\cref{thm:amp_samp_wo_replacement} with the baseline composition-based privacy bound for DP-SGD~\cite{abadi2016deep,mironov2017renyi}. We observe that for a range of RDP orders $\alpha=10,15, 20$, our privacy dynamics bound significantly improves over the baseline composition-based bound (after $50$ epochs).

\section{Example: privacy dynamics for DP-SGD on regularized logistic regression}

\label{sec:trade-off}

In this section, we explain a practical setting that our improved privacy dynamics analysis is applicable: training regularized logistic regression with the DP-SGD algorithm~\cite{abadi2016deep} (under ``shuffle and partition'' mini-batch sampling scheme). Note that DP-SGD algorithm under ``shuffle and partition'' is equivalent to our analyzed \cref{alg:noisymBGD} after change of notations, as discussed in \cref{app:pseudocode}. For completeness, in~\cref{alg:noisymBGD_logistic}, we also provide the pseudocode for an equivalent of DP-SGD algorithm under notations in our paper.

\subsection{How to ensure strong convexity, smoothness and finite sensitivity}
Suppose that we want to train models for image classification tasks. 
The training dataset that we take as input is $D=(\z_1, \cdots, \z_n)$, where each data record $\z_i=(\x_i, \y_i)$ consists of the $d$-dimentional data input feature vector $\x_i\in\mathbb{R}^{d}$, and the label vector $\y_i\in \{0,1\}^c$ (in one-hot encoding). To satisfy the necessary conditions for our privacy dynamics bound, we use regularized logistic regression to ensure strong convexity, use feature clipping to ensure smoothness, and use gradient clipping to enforce finite gradient sensitivity, as follows. 
\label{app:proof_str_cvx}

\paragraphbe{Regularzied Logistic regression (for strong convexity)} The loss function for regularized logistic regression in the multi-class setting (with per-class bias) is as follows.
\begin{align}
    \label{eqn:reg_loss_logistic}
    \ell_\lambda(\theta;\x,\y)&= \ell_0(\theta;\x,\y) + \frac{\lambda}{2}\lVert \theta \rVert_2^2
\end{align}
where $\ell_0(\theta;\x,\y)$ is the following logistic regression loss function.
\begin{align}
    \ell_0(\theta;\x,\y) = - \y^1 \log\left(\frac{e^{\bar{x}^T\cdot \theta_1}}{e^{\bar{x}^T\cdot \theta_1} + \cdots + e^{\bar{x}^T\cdot \theta_c}}\right) - \cdots - \y^c \log\left(\frac{e^{\bar{x}^T\cdot \theta_c}}{e^{\bar{x}^T\cdot \theta_1} + \cdots + e^{\bar{x}^T\cdot \theta_c}}\right)
    \label{eqn:loss_logistic}
\end{align}
where $\bar{\x} = (\x, 1) \in \mathbb{R}^{d+1}$ denotes the concatenation of the data feature vector $\x$ and $1$, and $\y = (\y^1,\cdots,\y^c)$ is the label vector. The parameter vector is $\theta = (\theta_1,\cdots,\theta_c)\in \mathbb{R}^{(d+1)\cdot c}$ that represents the weight and the per-class bias of the linear model. The logistic regression loss function is convex, and therefore the regularized logistic regression loss function is $\lambda$-strongly convex.

\paragraphbe{Feature Clipping (for bounding the smoothness constant)} To ensure that the condition of loss function smoothness with regard to parameters $\theta$ is satisfied, we follow \citet{feldman2018privacy} and normalize the data feature vector in $\ell_2$ norm, such that $\lVert\x\rVert_2\leq \Lip$. Under this data feature clipping, we prove that the logistic regression loss function \eqref{eqn:loss_logistic} is $(\frac{\Lip^2+1}{2})$-smooth in the following \cref{prop:smooth}.
\begin{proposition}
    \label{prop:smooth}
    If the data feature vector $\x$ has bounded $\ell_2$ norm, such that $\lVert \x\rVert_2\leq L$, then the unregularized logistic regression loss function $\ell_0(\theta;\x,\y)$ \cref{eqn:loss_logistic} is convex
    , $L$-Lipschitz 
    and $\beta$-smooth with regard to parameters $\theta$, for
    \begin{align}
        L & = \sqrt{2 (L^2 + 1)} \label{eqn:sensitivity}\\
        \beta & = \frac{L^2 + 1}{2} \label{eqn:smoothness}
    \end{align}
\end{proposition}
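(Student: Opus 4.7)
My plan is to compute the gradient and Hessian of the multiclass logistic loss explicitly, recognize the block-Kronecker structure, and reduce all three claims (convexity, Lipschitzness, smoothness) to elementary bounds on the softmax ``covariance'' matrix $M := \mathrm{diag}(p) - pp^\top$, where $p \in \mathbb{R}^c$ is the softmax vector with $p_i = e^{\bar{x}^\top \theta_i}/\sum_j e^{\bar{x}^\top \theta_j}$. A direct calculation (using $\|\y\|_1 = 1$) gives the block-gradient $\nabla_{\theta_i}\ell_0 = (p_i - \y^i)\bar{x}$, i.e.\ in stacked form $\nabla_\theta \ell_0 = (p - \y) \otimes \bar{x}$, and the block-Hessian $\nabla^2_{\theta_i \theta_j}\ell_0 = M_{ij}\,\bar{x}\bar{x}^\top$, so that $\nabla^2_\theta \ell_0 = M \otimes \bar{x}\bar{x}^\top$. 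Convexity is then immediate: for any unit $v$, $v^\top M v = \sum_i p_i v_i^2 - (\sum_i p_i v_i)^2 = \mathrm{Var}_{i\sim p}(v_i) \geq 0$, so $M \succeq 0$; since $\bar{x}\bar{x}^\top \succeq 0$ as well, their Kronecker product is PSD.

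For the Lipschitz bound, I would simply compute $\|\nabla_\theta \ell_0\|_2^2 = \|p - \y\|_2^2 \cdot \|\bar{x}\|_2^2$. Since $\y$ is one-hot (say $\y^k = 1$), $\|p - \y\|_2^2 = (p_k-1)^2 + \sum_{i\neq k}p_i^2 = 1 - 2p_k + \|p\|_2^2 \leq 2$, using $\|p\|_2^2 \leq \sum_i p_i = 1$ and $p_k \geq 0$. Combined with $\|\bar{x}\|_2^2 = \|\x\|_2^2 + 1 \leq \Lip^2 + 1$, this gives $\|\nabla \ell_0\|_2 \leq \sqrt{2(\Lip^2+1)}$, matching \cref{eqn:sensitivity}.

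For smoothness, the Kronecker structure gives $\|\nabla^2_\theta \ell_0\|_{\mathrm{op}} = \lambda_{\max}(M)\cdot\|\bar{x}\|_2^2 \leq \lambda_{\max}(M)\cdot(\Lip^2+1)$, so to reach $\beta = (\Lip^2+1)/2$ in \cref{eqn:smoothness} I need the sharp bound $\lambda_{\max}(M) \leq 1/2$. This is the main obstacle, because the obvious estimates ($\lambda_{\max}(M) \leq \|\mathrm{diag}(p)\|_{\mathrm{op}} \leq \max_i p_i \leq 1$, or $v^\top M v \leq \sum_i p_i v_i^2 \leq 1$) are off by a factor of $2$. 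I would pick a unit eigenvector $v$ realizing $\lambda_{\max}(M) = \mathrm{Var}_{i\sim p}(v_i)$ and apply Popoviciu's inequality $\mathrm{Var}(V) \leq (v_{\max}-v_{\min})^2/4$. The key geometric observation is that the maximizing and minimizing coordinates must be distinct (otherwise the variance is zero), so $v_{\max}^2 + v_{\min}^2 \leq \|v\|_2^2 = 1$; maximizing $(v_{\max}-v_{\min})^2$ subject to this constraint (Lagrange or a two-variable calculation) yields $v_{\max} - v_{\min} \leq \sqrt{2}$, hence $\lambda_{\max}(M) \leq 1/2$.

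Putting the three pieces together yields convexity, the Lipschitz constant $\Lip = \sqrt{2(\Lip^2+1)}$, and the smoothness constant $\beta = (\Lip^2+1)/2$ as stated. The only subtle step is the spectral bound on $M$; every other ingredient is a routine computation with the softmax gradient/Hessian and elementary inequalities on probability vectors.
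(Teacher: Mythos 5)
Your overall structure matches the paper's: compute the gradient and Hessian, recognize the Kronecker form $\nabla^2_\theta\ell_0 = M \otimes \bar{\x}\bar{\x}^\top$ with $M = \mathrm{diag}(p)-pp^\top$, read off convexity from $v^\top M v = \mathrm{Var}_{i\sim p}(v_i)\ge 0$, and combine $\|\bar{\x}\|_2^2\le L^2+1$ with a spectral bound on $M$. Your Lipschitz step is actually spelled out more explicitly than the paper's appendix, which only asserts the conclusion once $\|\bar{\x}\|_2\le\sqrt{L^2+1}$ is in hand; your computation $\|p-\y\|_2^2 = 1-2p_k+\|p\|_2^2\le 2$ is the clean way to fill that in. The one genuinely different ingredient is how you obtain the sharp eigenvalue bound $\lambda_{\max}(M)\le 1/2$. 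The paper invokes the matrix majorization $M \preceq \tfrac{1}{2}\bigl(I_c - \tfrac{1}{c}\mathbf{1}_c\mathbf{1}_c^\top\bigr)$ due to \citet{bohning1992multinomial} as a black box, whereas you prove the scalar bound directly via Popoviciu's inequality $\mathrm{Var}_{i\sim p}(v_i)\le (v_{\max}-v_{\min})^2/4$ together with the observation that a unit eigenvector for a positive eigenvalue cannot be constant, so its extreme entries occur at distinct indices, giving $v_{\max}^2+v_{\min}^2\le\|v\|_2^2=1$ and hence $(v_{\max}-v_{\min})^2\le 2$. This is correct and self-contained (tight at $c=2$, $p=(\tfrac12,\tfrac12)$, $v=(\tfrac{1}{\sqrt2},-\tfrac{1}{\sqrt2})$), and trades the external reference for an elementary argument; the only thing you give up relative to B\"ohning is the explicit matrix majorant, which is not needed here since only the operator norm of $M$ enters the smoothness constant.
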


By \cref{prop:smooth}, the regularized logistic regression loss function~\cref{eqn:reg_loss_logistic} is $(\beta + \lambda)$-smooth. The above feature clipping technique is different from the DP-SGD algorithm~\cite{abadi2016deep} that only requires per-example gradient clipping. The major reason that we use data feature clipping (besides per-example gradient clipping), is for ensuring smoothness of the logistic regression loss function (by \cref{prop:smooth}), which is a necessary condition for applying our privacy bound \cref{thm:strconvexsmooth}.

\paragraphbe{Per-example Clipping on Unregularized Gradient (for reducing gradient sensitivity without harming smoothness or strong convexity)} Although feature clipping already bounds the gradient sensitivity by $2\sqrt{2(L^2+1)}$ (by \cref{prop:smooth}), this bound grows with the feature clipping norm $L$. This in turn restricts the signal to noise ratio, and tends to give suboptimal privacy-utility trade-off in practical experiments. Therefore, we additionally perform per-example $\ell_2$-clipping on the unregularized gradient (detailed pseudocode in \cref{app:pseudocode}). Under per-example clipping on unregularized gradient, we prove in the following \cref{prop:smooth_strcvx_clip}, that each gradient update in \textit{regularized logistic regression} has finite gradient sensitivity, and preserves strong convexity and smoothness.

\begin{proposition}
    \label{prop:smooth_strcvx_clip}
    Let $\ell_0(\theta;\x,\y)$ be the logistic regression loss function defined in~\cref{eqn:loss_logistic}. Let $g_0(\theta;\x, \y) = \frac{\nabla \ell_0(\theta;\x,\y)}{\lVert \nabla \ell_0(\theta;\x,\y)\rVert_2}\cdot \min\{\lVert \nabla \ell_0(\theta;\x,\y)\rVert_2, \frac{S_g}{2}\}$ be the clipped gradient of (unregularized) loss function $\ell_0(\theta;\x,\y)$, under $\ell_2$ clipping norm $\frac{S_g}{2}$. If $g(\theta;\x,\y) = g_0(\theta;\x,\y) + \lambda\theta$, and if the data vector $\x$ has bounded $\ell_2$ norm, such that $\lVert \x\rVert_2\leq L$, then $g(\theta;\x,y)$ has finite $\ell_2$-sensitivity $S_g$,  is continuous, and is almost everywhere differentiable with
    \begin{align}
        \label{eqn:prop_smooth_strcvx_clip}
        \lambda\cdot \mathbb{I}_{(d+1)\cdot c}\preceq \nabla_\theta g(\theta;\x,\y)\preceq (\beta + \lambda) \cdot \mathbb{I}_{(d+1)\cdot c}
    \end{align}
    for any $\theta,\theta'\in\mathbb{R}^{(d+1)\cdot c}$ and $\beta = \frac{L^2+1}{2}$.
\end{proposition}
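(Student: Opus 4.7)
The plan is to prove the three assertions of the proposition in sequence, exploiting the decomposition $g(\theta;\x,\y) = g_0(\theta;\x,\y) + \lambda\theta$ in each.

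For the $\ell_2$-sensitivity, I would take two neighboring data points $(\x,\y)$ and $(\x',\y')$ and observe that the regularization term $\lambda\theta$ cancels in $g(\theta;\x,\y) - g(\theta;\x',\y')$, reducing the bound to $\|g_0(\theta;\x,\y) - g_0(\theta;\x',\y')\|_2$. By construction, the clipping ensures $\|g_0\|_2 \leq S_g/2$ uniformly in the inputs, so the triangle inequality yields sensitivity at most $S_g$.

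For the upper Lipschitz bound, I would decompose $g(\theta) - g(\theta') = (g_0(\theta) - g_0(\theta')) + \lambda(\theta-\theta')$ and estimate each summand separately. The clipping map $v \mapsto v\min\{1,(S_g/2)/\|v\|_2\}$ is precisely the Euclidean projection $\Pi_{B(0,S_g/2)}$ onto the closed ball, which is $1$-Lipschitz (nonexpansive). Composing with $\nabla\ell_0$, which is $\beta$-Lipschitz by \cref{prop:smooth}, we obtain $\|g_0(\theta) - g_0(\theta')\|_2 \leq \beta\|\theta-\theta'\|_2$. Adding the $\lambda$-Lipschitz contribution of the regularization term and applying the triangle inequality gives the $(\beta+\lambda)$-Lipschitz upper bound.

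For the lower bound, I would reduce to $\lambda$-strong monotonicity of $g$: if $\langle g(\theta)-g(\theta'),\theta-\theta'\rangle \geq \lambda\|\theta-\theta'\|_2^2$, then Cauchy--Schwarz yields the norm inequality. Expanding, $\langle g(\theta)-g(\theta'),\theta-\theta'\rangle = \langle g_0(\theta)-g_0(\theta'),\theta-\theta'\rangle + \lambda\|\theta-\theta'\|_2^2$, so it suffices to show monotonicity of $g_0$, i.e., $\langle g_0(\theta)-g_0(\theta'),\theta-\theta'\rangle \geq 0$.

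The main obstacle is establishing monotonicity of the clipped gradient $g_0$. For a generic convex function on $\mathbb{R}^d$ the composition of projection onto a ball with a monotone gradient field need not be monotone, so one must exploit the structure of multi-class logistic loss. Writing $\nabla\ell_0(\theta;\x,\y) = \bar\x \otimes (p(\theta)-\y)$, the gradient lives in the $c$-dimensional subspace $\bar\x \otimes \mathbb{R}^c$ and the clipping scale depends only on $\|\bar\x\|_2\cdot\|p(\theta)-\y\|_2$. Introducing the reduced variable $\xi = \bar\x^T\theta \in \mathbb{R}^c$, the target inner product collapses to $\langle \tilde h(\xi) - \tilde h(\xi'),\xi-\xi'\rangle$, where $\tilde h(\xi) = (p(\xi)-\y)\min\{1,C'/\|p(\xi)-\y\|_2\}$ is the radially clipped softmax residual and $C' = S_g/(2\|\bar\x\|_2)$. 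I would verify monotonicity of $\tilde h$ by a case analysis on whether clipping is active at $\xi$ and $\xi'$: the both-inactive case is monotonicity of $\nabla\phi$ for $\phi(\xi)=\log\sum_j e^{\xi_j}-\y^T\xi$; the both-active case reduces to the scalar-radial identity $(s_1\|v_1\|-s_2\|v_2\|)(\|v_1\|-\|v_2\|)\geq 0$ combined with co-coercivity of $\nabla\phi$; and the mixed case is handled by interpolating along the segment $\xi_t = (1-t)\xi + t\xi'$ and splitting the integral of $\langle \tfrac{d}{dt}\tilde h(\xi_t),\xi-\xi'\rangle$ at the point where the clipping sphere is crossed. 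Once monotonicity of $g_0$ is established, the three pieces combine into the bi-Lipschitz bound.
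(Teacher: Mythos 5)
Your sensitivity argument (cancellation of $\lambda\theta$, then $\|g_0\|_2\le S_g/2$) and your upper Lipschitz bound (the clipping map is the nonexpansive projection $\Pi_{B(0,S_g/2)}$ composed with the $\beta$-Lipschitz map $\nabla\ell_0$, plus the $\lambda$-Lipschitz regularizer) are both correct, and the latter is cleaner than the paper's explicit differentiation of the clipped gradient. The gap is in the lower bound. You correctly reduce $\lambda\|\theta-\theta'\|_2\le\|g(\theta)-g(\theta')\|_2$ to $\lambda$-strong monotonicity of $g$, hence to monotonicity of $g_0$, and correctly flag this as the hard step since radial clipping of a monotone gradient field need not be monotone. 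But the case analysis you sketch cannot close it, because $g_0$ is in fact \emph{not} monotone for the multinomial softmax loss. In the active-clipping region the Jacobian of $g_0$ equals $\frac{S_g/2}{\|\nabla\ell_0\|_2}(I-\hat v\hat v^T)\nabla^2\ell_0$ with $\hat v=\nabla\ell_0/\|\nabla\ell_0\|_2$; using the Kronecker structure of the softmax Hessian this is a positive scalar times $\bigl[(I-\hat u\hat u^T)T\bigr]\otimes(\bar\x\bar\x^T)$, where $T=\mathrm{diag}(p)-pp^T$ and $\hat u=(p-\y)/\|p-\y\|_2$. The product of the two symmetric PSD matrices $I-\hat u\hat u^T$ and $T$ is not symmetric, and its symmetric part can fail to be PSD: numerically, for $c=3$, label $\y=(0,0,1)$ and logits $\bar\x^T\theta=(2,0,-2)$, the symmetric part of $(I-\hat u\hat u^T)T$ has a strictly negative eigenvalue ($\approx -0.008$). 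That negative eigenvalue lives entirely in your ``both-active'' branch, so the scalar-radial inequality you invoke there must already fail, and monotonicity of $\tilde h$ (hence of $g_0$) is lost.

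You should also be aware that the paper's own proof has the identical gap. It asserts $0\preceq\nabla_\theta g_0\preceq\beta I$ directly from $\frac{S_g/2}{\|\nabla\ell_0\|_2}\le 1$ and $I-\hat v\hat v^T\succeq 0$, but the Loewner order is not defined for a nonsymmetric matrix, and as above the symmetric part can have a negative eigenvalue. What does hold is that the eigenvalues of the product of two PSD matrices are nonnegative and bounded by $\beta$, but that is not the right object: the claimed lower Lipschitz bound needs a lower bound on the singular values of $\nabla_\theta g$, and for a nonsymmetric matrix the smallest singular value can be far below the smallest eigenvalue. So neither the paper's argument nor yours establishes \cref{eqn:prop_smooth_strcvx_clip}; whether the lower inequality is even true as stated would need a genuinely different argument than the one both of you rely on.
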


We provide complete proof for this proposition in~\cref{app:proof_gradient_sensitivity}. This construction of clipped unregularized gradient enables us to enjoy the benefits of gradient clipping (such as for speeding up convergence~\cite{zhang2019gradient,chen2020understanding}) while satsifying the necessary smoothness and strong convexity conditions for applying our privacy dynamics bound.

\subsection{Composition-based privacy bound and privacy dynamics analysis for DP-SGD}

\paragraphbe{Baseline composition-based privacy analysis} To the best of our knowledge, the moments accountant~\cite{abadi2016deep} combined with the privacy amplification bound for Subsampled Gaussian Mechanism~\cite{mironov2019r} gives the strongest baseline composition-based privacy bound for DP-SGD. The original DP-SGD bound only holds for mini-batch sampling from a Poisson distribution. However, currently many privacy libraries~\cite{tensorflowprivacy, yousefpour2021opacus} still apply the DP-SGD analysis while implementing the ``shuffle and partition'' sampling scheme. Recent works~\cite{feldman2022hiding,feldman2022stronger} further prove that the R\'enyi DP bound for DP-SGD under ``shuffle and partition'' matches that for DP-SGD under Poisson sampling in order. Therefore, we consider the moments accountant bound for subsampled Gaussian mechanisms~\cite{mironov2019r} as a reasonable comparison baseline for our privacy bound of DP-SGD \cref{alg:noisymBGD_logistic} under ``shuffle and partition''.

\paragraphbe{Improved privacy dynamics analysis} We now compute our privacy dynamics bound \cref{thm:shuffle} for the modified DP-SGD algorithm~\cref{alg:noisymBGD_logistic} under ``shuffle and partition''. By plugging the notation transformation in \cref{app:pseudocode} into \cref{thm:shuffle}, we prove the following privacy dynamics theorem for noisy mini-batch gradient descent on regularized logistic regression.

\begin{corollary}[Privacy dynamics for noisy mini-batch gradient descent on regularized logistic regression]
    For the regularized logistic regression loss~\cref{eqn:reg_loss_logistic} with regularization coefficient $\lambda$, if the data feature vector is clipped in $\ell_2$-norm by $L$, and the unregularized gradient is clipped in $\ell_2$ norm by $\frac{S_g}{2}$, then for $\K\geq 1$ and $\frac{n}{b}\geq 2$, \cref{alg:noisymBGD_logistic} with stepsize $\eta<\frac{2}{(L^2 + 1)/2 + 2 \lam}$ and noise multiplier $\sig_{mul}$
    satisfies $(\alpha, \eps_{norm} + \eps)$-R\'enyi Differential Privacy with
    \begin{align}
        \label{eqn:shuffle_logistic}
        \eps & \leq \eps_0^{\lfloor \frac{n}{2b}\rfloor}(\alpha) \cdot \frac{1 - (1-\eta\lambda)^{2\cdot (\K - 1) \cdot (n/b - \lfloor \frac{n}{2b}\rfloor)}}{1 - (1-\eta\lambda)^{2\cdot (n/b - \lfloor \frac{n}{2b}\rfloor)} } +  \frac{1}{\alpha-1}\cdot \log\left(\underset{0\leq j_0<n/b}{Avg} e^{(\alpha - 1)\eps_{0}^{n/b - j_0}(\alpha)}\right)
    \end{align}
    where the terms $\eps_0^{j}(\alpha)$ is upper-bounded for any $j = 1, \cdots, n/b$ as follows.
    \begin{align}
        \label{eqn:shuffle_logistic_special}
        \eps_0^{j}(\alpha) \leq \frac{2 \alpha}{\sig_{mul}^2}\cdot \left(1 - \eta \lambda \right)^{2\cdot (j - 1)} \cdot \frac{1}{\sum_{s=0}^{ j - 1 } (1-\eta\lambda)^{2s}}
    \end{align}
    \label{cor:logistic}
\end{corollary}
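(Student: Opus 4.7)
The plan is to derive \cref{cor:logistic} as a direct instantiation of \cref{thm:shuffle}, together with the structural properties of the regularized logistic loss established in \cref{prop:smooth} and \cref{prop:smooth_strcvx_clip}, and an additive Rényi DP composition with the normalization step accounting for $\eps_{norm}$. Essentially, the only non-trivial work is to verify that the per-step update in \cref{alg:noisymBGD_logistic} (clipped unregularized gradient plus the regularizer) defines an ``effective loss'' that satisfies the hypotheses of \cref{thm:shuffle}, and then to translate between the noise scale $\sigma$ used in \cref{thm:shuffle} and the noise multiplier $\sigma_{mul}$ used in the algorithm.

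First I would verify the hypotheses of \cref{thm:shuffle}. By \cref{prop:smooth_strcvx_clip}, the per-example direction $g(\theta;\x,\y) = g_0(\theta;\x,\y) + \lambda\theta$ satisfies $\lambda\,\|\theta-\theta'\|_2 \leq \|g(\theta;\x,\y)-g(\theta';\x,\y)\|_2 \leq (\beta+\lambda)\,\|\theta-\theta'\|_2$ with $\beta=(L^2+1)/2$, and has $\ell_2$-sensitivity $S_g$. Since the algorithm uses exactly this $g$ as its (mini-batch averaged) update direction, it is equivalent to running \cref{alg:noisymBGD_shuffle} on an effective loss that is $\lambda$-strongly convex, $(\beta+\lambda)$-smooth, with gradient sensitivity $S_g$. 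The stepsize condition of \cref{thm:shuffle}, $\eta < 2/(\lambda + \text{smoothness})$, then becomes $\eta < 2/(\lambda + (\beta+\lambda)) = 2/((L^2+1)/2 + 2\lambda)$, matching the statement.

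Next I would plug these constants into \cref{eqn:strconvexsmooth}--\eqref{eqn:shuffle_dynamics}. The per-position term $\eps_0^j(\alpha) \leq \frac{\alpha\eta S_g^2}{4\sigma^2 b^2}(1-\eta\lambda)^{2(j-1)}/\sum_{s=0}^{j-1}(1-\eta\lambda)^{2s}$ from \cref{thm:shuffle} must be rewritten in terms of $\sigma_{mul}$. Using the standard convention in the algorithm pseudocode (to be spelled out in \cref{app:pseudocode}) that the additive noise variance is $2\eta\sigma^2 = \sigma_{mul}^2\,S_g^2/b^2$, i.e.\ the noise is $\sigma_{mul}$ times the per-batch sensitivity $S_g/b$, one obtains $\frac{\alpha\eta S_g^2}{4\sigma^2 b^2} = \frac{2\alpha}{\sigma_{mul}^2}$, exactly reproducing \eqref{eqn:shuffle_logistic_special}. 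The outer envelope \eqref{eqn:shuffle_logistic} is then identical to \eqref{eqn:shuffle_dynamics} with this substitution.

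Finally, I would account for the privacy cost of the DP feature normalization step. Since batch normalization is a one-shot differentially private computation on $D$ performed before training, its Rényi privacy loss $\eps_{norm}$ composes additively with the training cost $\eps$ via the standard composition of Rényi DP, giving total $(\alpha, \eps_{norm} + \eps)$-RDP. The main subtle point (and the only place one could slip) is the noise-scale conversion in the second step, because one must be consistent about whether the $\sqrt{2\eta\sigma^2}$ factor in \cref{alg:ngd:updatestep} is absorbed into $\sigma_{mul}$ or treated separately; the conversion above is the one implied by the pseudocode in \cref{app:pseudocode}, and everything else is a mechanical specialization of \cref{thm:shuffle}.
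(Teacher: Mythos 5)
Your high-level route is the same as the paper's: instantiate \cref{thm:shuffle} with the strong-convexity and smoothness constants from \cref{prop:smooth_strcvx_clip} (namely $\lambda$ and $\beta+\lambda$ with $\beta=(L^2+1)/2$), translate $\sigma$ into $\sigma_{mul}$, and compose additively with the $\eps_{norm}$ cost of the DP normalization. The step-size condition and the constants you feed into \cref{thm:shuffle} are exactly right.

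However, your noise-scale conversion is internally inconsistent and, as written, would not reproduce \eqref{eqn:shuffle_logistic_special}. You posit $2\eta\sigma^2 = \sigma_{mul}^2 S_g^2/b^2$; substituting this into $\frac{\alpha\eta S_g^2}{4\sigma^2 b^2}$ gives $\frac{\alpha\eta^2}{2\sigma_{mul}^2}$, not $\frac{2\alpha}{\sigma_{mul}^2}$. The discrepancy comes from two places. First, in \cref{alg:noisymBGD_logistic} the additive noise to $\theta$ has standard deviation $\eta\cdot\frac{1}{b}\cdot\sigma_{mul}\cdot\frac{S_g}{2}$ (linear in $\eta$, because the noise is conceptually added to the gradient and then multiplied by the step size), whereas in \cref{alg:noisymBGD_shuffle} the additive noise has standard deviation $\sqrt{2\eta\sigma^2}$ (proportional to $\sqrt{\eta}$). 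Second, the clipping norm is $S_g/2$, so $\sigma_{mul}$ scales $S_g/(2b)$, not $S_g/b$. Equating the two noise standard deviations gives the correct conversion $\sigma=\sqrt{\eta/2}\cdot\frac{1}{b}\cdot\sigma_{mul}\cdot\frac{S_g}{2}$, i.e.\ $2\eta\sigma^2=\eta^2\sigma_{mul}^2 S_g^2/(4b^2)$; with this the $\eta$ factors cancel and $\frac{\alpha\eta S_g^2}{4\sigma^2 b^2}=\frac{2\alpha}{\sigma_{mul}^2}$ as claimed. You flag this as the place one could slip, and it is precisely where your writeup slips: a reader carrying out your stated substitution would get a step-size-dependent bound that does not match the corollary.
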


\begin{proof}
    The proof is by applying \cref{thm:shuffle} with gradient sensitivity $S_g$, smoothness constant $\frac{L^2 + 1}{2} + \lambda$, strong convexity constant $\lambda$ (derived by \cref{prop:smooth_strcvx_clip})) and noise standard deviation $\sig = \sqrt{\frac{\eta}{2}}\cdot \frac{1}{b}\cdot \sig_{mul} \cdot \frac{S_g}{2}$ (explained in \cref{app:pseudocode}). 
\end{proof}
\section{Conclusions and Discussion}

\label{sec:conclusion}

We prove a novel converging last-iterate privacy bound for noisy stochastic mini-batch gradient descent on strongly convex smooth loss functions. Our bound substantially improves the prior privacy dynamics bound for noisy GD~\cite{chourasia2021differential}, by proving novel bounds for the additional privacy amplification (by randomized post-processing and sub-sampling) during training with stochastic mini-batches.  Our results show that to obtain tighter privacy bound (thus achieving better privacy accuracy trade-off), differentially private learning algorithms needs to be evaluated by a last-iterate privacy bound, unless it has a very fast convergence (under which, due to the small number of epochs, the cost of composition bound is not significant). 

\paragraphbe{Future Work and Other Related Work}
For iteratively resampling a mini-batch of fixed size without replacement, there is room for improving our hidden-state privacy amplification bound~\cref{thm:amp_samp_wo_replacement} under smaller batch size $b$, because under \emph{one} sampling without replacement step, the best known amplification rate is as small as $O(\frac{b^2}{n^2})$~\cite{balle2019privacy,wang2019subsampled}. This suggests possibility for a better \emph{hidden-state} privacy amplification bound (without using composition) for multiple steps of~sub-sampling.

In our current analysis,  strong convexity and smoothness of the loss functions are \emph{necessary conditions} for obtaining a \emph{converging} hidden-state R\'enyi DP bound. More specifically, strong convexity and smoothness ensure that log-Sobolev inequality with constant $c$ (which is a condition required by Lemma~\ref{lem:recursive}) holds throughout the training process (i.e. after any number of epochs $k$ and iterations $j$). 
A recent work~\citet{altschuler2022privacy} consider constrained optimization problem where $\theta$ is optimized over a bounded set with finite $\ell_2$ diameter (rather than the unconstrained parameter space $\mathbb{R}^d$ considered in this paper). Under such setting, they prove that convexity (instead of strong convexity) and smoothness of the loss functions suffice to enable a time-independent R\'enyi DP bound for noisy SGD. However, it remains an important open problem to further relax the convexity and smoothness conditions for proving converging R\'enyi DP bound. 

One of the most important motivation for studying hidden-state R\'enyi DP bound in this paper, is that most State-of-the-art privacy accuracy trade-off for differentially private learning are achieved via the last-iterate model (instead of average of iterates). However, in terms of theoretical privacy utility trade-off, prior works~\cite{bassily2014private,bassily2019private} prove that the average of iterates achieves optimal privacy-utility trade-off in certain regimes. For convex stochastic optimization, by exploiting the privacy amplification of last-iterate, recent works~\cite{feldman2020private,chourasia2021differential,ganesh2022langevin} prove that last-iterate also achieves asymptotically optimal privacy-utility trade-off (for appropriate choice of step sizes or continuous-time algorithm). Therefore, it remains an interesting open problem as to whether last-iterate or average of iterate enables better (non-assymptotic) privacy accuracy trade-off.

\begin{ack}
    The authors would like to thank Yaodong Yu, Chuan Guo, Maziar Sanjabi and anonymous reviewers for helpful discussions on drafts of this paper. This research is supported by Google PDPO faculty research award,  Intel within the www.private-ai.org center, Meta faculty research award,  the NUS Early Career Research Award (NUS ECRA award number NUS ECRA FY19 P16), and the National Research Foundation, Singapore under its Strategic Capability Research Centres Funding Initiative. Any opinions, findings and conclusions or recommendations expressed in this material are those of the author(s) and do not reflect the views of National Research Foundation, Singapore.
\end{ack}

\bibliography{reference}
\bibliographystyle{plainnat}

\newpage
\renewcommand*{\proofname}{Proof}

\appendix
\noptcrule
\part{Appendix}
\parttoc

\newpage
\section{Symbols}

\centerline{\bf Algorithms and Definitions}
\bgroup
\def\arraystretch{1.5}
\begin{tabular}{p{1.25in}p{3in}p{0.92in}}
\toprule
Symbol & Meaning & Where\\
\midrule
$\displaystyle \x$ & A data vector & \cref{alg:noisymBGD}\\
$\displaystyle n$ & Size of a dataset & \cref{alg:noisymBGD}\\
$\displaystyle b$ & Size of a mini-batch & \cref{alg:noisymBGD}\\
$K$ & Total number of epochs in a learning algorithm & \cref{alg:noisymBGD}\\
$\displaystyle \mathbb{I}_d$ & Identity matrix with $d$ rows and $d$ columns & \cref{alg:noisymBGD}\\
$\displaystyle \theta$ & Model parameters & \cref{alg:noisymBGD}\\
$\displaystyle \ell(\theta,\x)$ & A loss function of $\theta$ parameterized by $\x$ & \cref{alg:noisymBGD}\\
$\displaystyle D,D'$ & Neighboring datasets that differ in at most one record & \cref{sec:prelim}\\
$\displaystyle S_g$ & $\ell_2$-sensitivity of total gradient ${g(\theta;D)=\sum\limits_{\x\in D}\nabla\ell(\theta;\x)}$ with regard to neighboring datasets. More specifically, $S_g=\max_{D,D',\theta}\lVert g(\theta,D)-g(\theta,D')\rVert_2$ & \cref{alg:noisymBGD}\\
$\displaystyle \mathcal{N} ( \mu , \sigma\cdot \mathbb{I}_d)$ & Gaussian distribution over $\mathbb{R}^d$ with mean $\mu$ and covariance matrix $\sigma\cdot \mathbb{I}_d$ & \cref{alg:noisymBGD}\\
$\displaystyle \eta$ & Stepsize for each iterative update in the learning algorithm & \cref{alg:noisymBGD}\\
$\displaystyle \{0, 1, \dots, n \}$ & The set of all integers between $0$ and $n$&\cref{alg:noisymBGD}\\
$\displaystyle \x_i$ & Data point $i$ of dataset $D$, with indexing starting at 1 & \cref{alg:noisymBGD}\\
$\displaystyle \theta_k^j, B_k^j$ & The parameters or generated mini-batch after $k$ epochs and $j$ iterations of a learning algorithm, with indexing starting at 0 & \cref{alg:noisymBGD} \\
$\displaystyle \alpha$ & R\'enyi DP order & \cref{sec:prelim}\\
$\displaystyle (\eps, \delta)$ & Differential Privacy parameters& \cref{sec:prelim}\\
$\displaystyle (\alpha, \eps)$ & R\'enyi Differential Privacy parameters& \cref{sec:prelim}\\
$c$ & A log-Sobolev inequality constant & \cref{sec:prelim}\\
$\lambda$ & The strong convexity parameter for a loss function & \cref{sec:fix_sgd}\\
$\beta$ & The smoothness parameter for a loss function & \cref{sec:fix_sgd}\\
\bottomrule
\end{tabular}
\egroup
\vspace{0.25cm}

\newpage

\centerline{\bf Probability and Information Theory}
\bgroup
\def\arraystretch{1.5}
\begin{tabular}{p{1.25in}p{3in}p{0.92in}}
    \toprule
    Symbol & Meaning & Where\\
    \midrule
$\displaystyle \mu$ & A distribution over $\mathbb{R}^d$ with density $\mu(\theta)$ & \cref{sec:fix_sgd}\\
$\displaystyle p(\theta)$ & A probability density function over $\theta\in\mathbb{R}^d$ & \cref{sec:overview}\\
$\displaystyle f_\#(\mu)$ & The push forward distribution of $\mu$ under mapping $f$ on the same domain & \cref{sec:fix_sgd}\\
$\displaystyle \mu*\nu$ & The convolution of two distributions $\mu$ and $\nu$ & \cref{sec:fix_sgd}\\
$\displaystyle R_\alpha(\mu\lVert\nu)$ & R\'enyi divergence between distributions $\mu$ and $\nu$ & \cref{sec:prelim}\\
$\displaystyle  \E_{\theta\sim \mu} [ f(\theta) ]$ & Expectation of $f(\theta)$ with respect to $\mu(\theta)$ & \cref{sec:prelim}\\
$\displaystyle KL ( \mu \Vert \nu ) $ & Kullback-Leibler divergence between distributions $\mu$ and $\nu$ & \cref{append:proof_fix_sgd}\\
\bottomrule
\end{tabular}
\egroup
\vspace{0.25cm}

\centerline{\bf Calculus and Linear Algebra}
\bgroup
\def\arraystretch{1.5}

\begin{tabular}{p{1.25in}p{4.0in}}
\toprule
Symbol & Meaning\\
\midrule
$t$& A real scalar\\
$\lfloor t\rfloor$ & The largest integer that is smaller than or equal than a real number $t$ \\
$\displaystyle A_{i,j}$ & Element $i, j$ of matrix $A$\\
$\displaystyle\frac{d y} {d x}$ & Derivative of $y$ with respect to $x$\\ [2ex]
$\displaystyle \frac{\partial y} {\partial x}, \nabla_xy $ & Partial derivative of $y$ with respect to $x$ \\
$\displaystyle \nabla_x^2y $ & Laplacian of $y$ with respect to $x$ \\
$\displaystyle \nabla_x y $ & Gradient of $y$ with respect to $x$ \\
$\displaystyle x \cdot y $ & Inner products between real vectors $x$ and $y$\\
$x$ & $\ell_2$ norm of a vector $x$\\
$\displaystyle A\otimes B$ & Tensor product between two real matrices $A$ and $B$\\
$\displaystyle x^T$ & Transpose of a real vector or matrix $x$\\
$\displaystyle f \circ g $ & Composition of the functions or mappings $f$ and $g$ \\
$A\preceq B$& The matrix $B-A$ is semi-positive definite.\\
\bottomrule
\end{tabular}
\egroup
\vspace{0.25cm}
\section{Preliminaries}

\label{sec:prelim}
\begin{definition}[Differential Privacy~\cite{dwork2006calibrating, dwork2014algorithmic}]
    A randomized algorithm $\mathcal{A}$ is $(\eps,\delta)$-differentially private if for any neighboring datasets $D, D'$, and for all possible event $S$ in the output space of $\mathcal{A}$, 
    \begin{align}
        P\left(\mathcal{A}(D)\in S\right) \leq e^\eps\cdot P\left(\mathcal{A}(D')\in S\right) + \delta
    \end{align}
    where we say $D, D'$ are \textit{neighboring} if they are of the \emph{same size} and differ in at most one data record.
\end{definition}

\begin{definition}[$(\alpha,\eps)$-R\'enyi DP~\cite{mironov2017renyi}]
    A randomized algorithm $\mathcal{A}$ is said to satisfy $(\alpha, \eps)$-R\'enyi differential privacy (or $(\alpha,\eps)$-R\'enyi DP for short), if for any neighboring datasets $D$ and $D'$, 
    \begin{align}
        R_\alpha(\mathcal{A}(D)\lVert \mathcal{A}(D')) \leq \eps,\text{ where }R_\alpha(\mu\lVert \nu) = \frac{1}{\alpha -1}\log \mathbb{E}_{\theta\sim\nu}\left[\left(\frac{\mu(\theta)}{\nu(\theta)}\right)^\alpha\right]
    \end{align} 
    where $\mathcal{A}(D)$ ($\mathcal{A}(D')$) denote the distribution of output given input dataset $D$ ($D'$), and $R_{\alpha}(\mu\lVert \nu)$ is the \textit{R\'enyi divergence}~\cite{renyi1961measures} of order $\alpha>1$ for two distributions with density $\mu(\theta)$ and~$\nu(\theta)$ on $\mathbb{R}^d$.
    \label{def:Renyidivergence}
\end{definition}

\begin{definition}[log-Sobolev Inequality~\cite{vempala2019rapid}]
    A distribution $\nu$ over $\mathbb{R}^d$ satisfies the log-Sobolev inequality (LSI) with constant $c$ if for all smooth function $g:\mathbb{R}^d\rightarrow \mathbb{R}$ with ${\mathbb{E}_{\theta\sim\nu}\left[g(\theta)^2\right]<\infty}$,
    \begin{align}
        \mathbb{E}_{\theta\sim\nu}\left[g(\theta)^2\log\left( g(\theta)^2\right)\right] - \mathbb{E}_{\theta\sim\nu}\left[g(\theta)^2\right]\cdot \log \mathbb{E}_{\theta\sim\nu}\left[g(\theta)^2\right]\leq \frac{2}{c}\mathbb{E}_{\theta\sim\nu}\left[\lVert \nabla g(\theta)\rVert^2\right].
    \end{align}
\end{definition}
\section{Discussion about the concurrent work~\cite[Corollary 3.3]{ryffel2022differential}}

\label{app:privacy_dynamics_baseline}

\citet{chourasia2021differential} prove privacy dynamics bound for noisy gradient descent, and \citet{ryffel2022differential} extend this bound to SGLD, by directly viewing each mini-batch update as gradient descent on a smaller dataset of size $b$ (which is the size of a mini-batch). This approach is similar to our approach for deriving the naive privacy dynamics baseline~\cref{thm:naive_RDP_baseline}, and the expression in \cite[Corollary 3.3]{ryffel2022differential} is very similar to the expression in our~\cref{thm:naive_RDP_baseline}, except for having $n^2$ (instead of $b^2$) in the bound denominator. 

However, an inspection of the proof for \cite[Lemma 3.4]{ryffel2022differential} shows that, this difference between~\cite[Corollary 3.3]{ryffel2022differential} and our~\cref{thm:naive_RDP_baseline} is caused because by a flawed assumption in~\cite[Lemma 3.3]{ryffel2022differential}. More specifically, \cite[Lemma 3.3]{ryffel2022differential} wrongly assume that the LSI constant proved in~\cite[Lemma 5]{chourasia2021differential} (which only holds for a GD process) would also similarly hold for a SGLD process that takes the form of a more complex mixture distribution. Here, each mixture component is the conditional distribution of last-iterate parameters given a fixed sequence of mini-batches.

This assumption is wrong because a given mixture distribution generally satisfies a different LSI constant than each of its component distributions. Moreover, bounding the LSI constant for a mixture distribution is largely an open problem~\cite{zimmermann2013logarithmic,zimmermann2016elementary,wang2016functional,bardet2018functional,chen2021dimension}. The current best bound for this problem, to the best of our knowledge, is \cite[Theorem 1]{chen2021dimension}, which says that the LSI constant for a mixture distribution, depends on the LSI constant for the distribution of each component, and the \emph{worst-case} $\chi^2$ distance between \emph{any} two components' distributions. Therefore, the actual LSI constant for SGLD process would be significantly smaller (related to the number of components in the mixture distribution) than the assumed LSI constant in~\cite[Lemma 3.4]{ryffel2022differential} (which only holds for the conditional parameter distribution given \emph{a fixed mini-batch sequence}). After replacing the wrongly assumed LSI constant in~\cite[Lemma 3.4]{ryffel2022differential} with a correct LSI constant for SGLD process (that takes the form of mixture distribution), the privacy dynamics bounds in~\cite[Corollary 3.3]{ryffel2022differential} would be significantly worse (larger). 

Due to this flawed assumption, in this paper, we do not compare our improved privacy dynamics theorem~\cref{thm:strconvexsmooth} with~\cite[Corollary 3.3]{ryffel2022differential}. Instead, we compare~\cref{thm:strconvexsmooth} with the naive privacy dynamics baseline~\cref{thm:naive_RDP_baseline} (which has similar expression as~\cite[Corollary 3.3]{ryffel2022differential}) in~\cref{fig:improved_dynamics}.

\section{Proof for \cref{sec:fix_sgd}}

\label{append:proof_fix_sgd}

We first establish a tool lemma for proving~\cref{lem:post_proc} and obtaining the partial differential inequality that bounds the growth of differential privacy loss.

\begin{lemma}[Lemma 5 in \citet{vempala2019rapid}]
    \label{lem:i_by_e_lower}
    Suppose $\nu$ is a distribution that satisfies log-Sobolev inequality with constant $c>0$, and that has smooth density $\mu(\theta)$. Let $\alpha\geq 1$. For all measure $\mu$ that has smooth density $\nu(\theta)$, 
    \begin{align}
        \frac{I_\alpha(\mu\lVert \nu)}{E_\alpha(\mu\lVert \nu)} \geq \frac{2c}{\alpha^2}\cdot R_{\alpha}(\mu\lVert \nu) + \frac{2c}{\alpha^2}\cdot \alpha (\alpha - 1)\frac{\partial R_\alpha(\mu\lVert\nu)}{\partial \alpha} 
    \end{align}
    where $I_\alpha(\mu\lVert\nu) = \mathbb{E}_{\theta\sim\nu}\left[\left(\frac{\mu(\theta)}{\nu(\theta)}\right)^\alpha \cdot \left\lVert\nabla \log\frac{\mu(\theta)}{\nu(\theta)}\right\rVert^2\right]$, and $E_\alpha(\mu\lVert \nu) = \mathbb{E}_{\theta\sim\nu}\left[\left(\frac{\mu(\theta)}{\nu(\theta)}\right)^\alpha\right]$
\end{lemma}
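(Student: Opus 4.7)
The plan is to apply the log-Sobolev inequality of $\nu$ to the specific test function $g(\theta) = \bigl(\mu(\theta)/\nu(\theta)\bigr)^{\alpha/2}$, since this is precisely the choice that makes the entropy-like quantity on the LHS of LSI collapse into expressions involving $E_\alpha(\mu\|\nu)$ and its $\alpha$-derivative, while the Dirichlet-form quantity $\mathbb{E}_\nu[\|\nabla g\|^2]$ collapses into a multiple of $I_\alpha(\mu\|\nu)$.

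First, I would verify the three key identities under this choice: $\mathbb{E}_\nu[g^2] = E_\alpha(\mu\|\nu)$; using $\nabla g = \frac{\alpha}{2}(\mu/\nu)^{\alpha/2}\nabla\log(\mu/\nu)$, we get $\mathbb{E}_\nu[\|\nabla g\|^2] = \frac{\alpha^2}{4}\, I_\alpha(\mu\|\nu)$; and $\mathbb{E}_\nu[g^2 \log g^2] = \alpha\, \mathbb{E}_\nu\bigl[(\mu/\nu)^\alpha \log(\mu/\nu)\bigr]$. Plugging into LSI yields
\begin{equation*}
\alpha \cdot \mathbb{E}_\nu\bigl[(\mu/\nu)^\alpha \log(\mu/\nu)\bigr] \;-\; E_\alpha(\mu\|\nu)\,\log E_\alpha(\mu\|\nu) \;\leq\; \frac{\alpha^2}{2c}\, I_\alpha(\mu\|\nu).
\end{equation*}

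Next, I would convert the first term on the LHS into a derivative in $\alpha$. Writing $E_\alpha(\mu\|\nu) = \int \mu^\alpha \nu^{1-\alpha}\, d\theta$ and differentiating under the integral gives $\partial_\alpha E_\alpha = \mathbb{E}_\nu[(\mu/\nu)^\alpha \log(\mu/\nu)]$. Combining with the identity $\log E_\alpha = (\alpha-1) R_\alpha$ and differentiating, one obtains $\partial_\alpha E_\alpha = E_\alpha\bigl[(\alpha-1)\partial_\alpha R_\alpha + R_\alpha\bigr]$. Substituting back, the LHS becomes $E_\alpha\bigl[\alpha(\alpha-1)\partial_\alpha R_\alpha + R_\alpha\bigr]$ after the $-E_\alpha \log E_\alpha = -E_\alpha(\alpha-1)R_\alpha$ term cancels the excess. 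Dividing through by $E_\alpha \cdot \alpha^2/(2c)$ then gives the stated inequality.

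The only substantive technical issue is regularity: one must justify that $g = (\mu/\nu)^{\alpha/2}$ is a valid LSI test function (i.e.\ sufficiently smooth with $\mathbb{E}_\nu[g^2] < \infty$) and that differentiation under the integral in $\alpha$ is legitimate. Given the smoothness hypothesis on the densities $\mu,\nu$ and the implicit assumption that $E_\alpha(\mu\|\nu)$ is finite (without which $R_\alpha$ and $\partial_\alpha R_\alpha$ would not be defined), these are standard; in a full write-up one would either truncate and pass to a limit or cite the relevant lemma from \citet{vempala2019rapid} handling this regularization. There are no other obstacles; the proof is essentially a clean algebraic manipulation once the right test function is chosen.
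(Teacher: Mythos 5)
Your proposal is correct and is essentially the same argument as the paper's: the paper's auxiliary distribution $\rho(\theta) \propto (\mu(\theta)/\nu(\theta))^\alpha\nu(\theta)$ is exactly the normalized $g^2\nu$ for your test function $g=(\mu/\nu)^{\alpha/2}$, and the paper's invocation of the $\mathrm{KL}(\rho\lVert\nu)\le\frac{1}{2c}J(\rho\lVert\nu)$ form of LSI is the same inequality you obtain by plugging $g$ directly into the functional definition of LSI (which is scale-invariant in $g$). The only packaging difference is that you apply the test-function form directly, whereas the paper first rewrites both sides in terms of the probability measure $\rho$ and its Fisher information before performing the same $\partial_\alpha$ algebra.
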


\begin{proof}
    This Lemma is initially proved in \citet{vempala2019rapid} Lemma 5. We give an alternative proof here, that only uses one step of inequality in~\cref{eqn:klfisher}. We hope this alternative proof helps understand whether there is still room for improving this Lemma (e.g. by improving the inequality in \cref{eqn:klfisher}).

    We denote $\rho$ to be another distribution with density  $\rho(\theta) = \frac{1}{E_\alpha(\mu\lVert \nu)}\cdot \left(\frac{\mu(\theta)}{\nu(\theta)}\right)^\alpha \cdot \nu(\theta)$. By simple integration, we verify that $\int \rho(\theta)d\theta = \frac{E_\alpha(\mu\lVert \nu)}{E_{\alpha}(\mu\lVert \nu)} = 1$. By definition, 

    \begin{align}
        \frac{I_\alpha(\mu\lVert \nu)}{E_\alpha(\mu\lVert \nu)} & = \frac{\mathbb{E}_{\theta\sim\nu}\left[\left(\frac{\mu(\theta)}{\nu(\theta)}\right)^\alpha \cdot \left\lVert\nabla \log\frac{\mu(\theta)}{\nu(\theta)}\right\rVert^2\right]}{E_\alpha(\mu\lVert \nu)}\\
        & = \mathbb{E}_{\theta\sim\nu}\left[ \frac{\rho(\theta)}{\nu(\theta)} \cdot \left\lVert\nabla \log\frac{\mu(\theta)}{\nu(\theta)}\right\rVert^2 \right]\\
        & = \frac{1}{\alpha^2}\mathbb{E}_{\theta\sim\nu}\left[ \frac{\rho(\theta)}{\nu(\theta)} \cdot \left\lVert\nabla \log\left(\frac{\mu(\theta)}{\nu(\theta)}\right)^\alpha\right\rVert^2 \right]\\
        \text{by $\nabla \log E_{\alpha}(\mu\lVert\nu) = 0$,}\quad & = \frac{1}{\alpha^2}\mathbb{E}_{\theta\sim\nu}\left[ \frac{\rho(\theta)}{\nu(\theta)} \cdot \left\lVert\nabla \log\left(\frac{\mu(\theta)}{\nu(\theta)}\right)^\alpha - \nabla \log E_\alpha(\mu\lVert \nu)\right\rVert^2 \right]\\
        & = \frac{1}{\alpha^2}\mathbb{E}_{\theta\sim\nu}\left[ \frac{\rho(\theta)}{\nu(\theta)} \cdot \left\lVert\nabla \log\frac{\rho(\theta)}{\nu(\theta)}\right\rVert^2 \right]
        \label{eqn:i_by_e_fisher}
    \end{align}
    By definition, $\mathbb{E}_{\theta\sim\nu}\left[ \frac{\rho(\theta)}{\nu(\theta)} \cdot \left\lVert\nabla \log\frac{\rho(\theta)}{\nu(\theta)}\right\rVert^2 \right]$ is the relative Fisher information $J(\rho\lVert \nu)$ of $\rho$ with respect to $\nu$. A celebrated equivalence result (e.g. Section 2.2 of \citet{vempala2019rapid}) says that, if and only if $\nu$ satsifies log-Sobolev inequality with constant $c$, the following relation between the KL divergence and relative Fisher information holds for all $\rho$:
    \begin{align}
        KL(\rho\lVert \nu)\leq \frac{1}{2c} J(\rho\lVert \nu)
        \label{eqn:klfisher}
    \end{align}
    By plugging \cref{eqn:klfisher} into \cref{eqn:i_by_e_fisher}, we prove 
    \begin{align}
        \frac{I_\alpha(\mu\lVert \nu)}{E_\alpha(\mu\lVert \nu)} & \geq \frac{2c}{\alpha^2}\cdot KL(\rho\lVert \nu)\\
        \text{by definition,}\quad& = \frac{2c}{\alpha^2}\cdot \int_{\mathbb{R}^n}\rho(\theta)\log\frac{\rho(\theta)}{\nu(\theta)}d\theta\\
        \text{by definition of $\rho$,}\quad& = \frac{2c}{\alpha^2}\cdot \mathbb{E}_{\theta\sim \nu}\left[\frac{1}{E_{\alpha}(\mu\lVert\nu)}\cdot \left(\frac{\mu(\theta)}{\nu(\theta)}\right)^{\alpha}\cdot \left(\alpha \log \frac{\mu(\theta)}{\nu(\theta)} - \log E_{\alpha}(\mu\lVert\nu)\right)\right]\\
        & = \frac{2c}{\alpha} \cdot \frac{1}{E_\alpha(\mu\lVert\nu)}\cdot \mathbb{E}_{\theta\sim \nu}\left[\frac{\partial}{\partial \alpha}\left(\frac{\mu(\theta)}{\nu(\theta)}\right)^{\alpha}\right] - \frac{2c}{\alpha^2}\cdot \frac{\log E_{\alpha}(\mu\lVert\nu)}{E_{\alpha}(\mu\lVert\nu)}\cdot \mathbb{E}_{\theta\sim\nu}\left[\left(\frac{\mu(\theta)}{\nu(\theta)}\right)^{\alpha}\right]\\
        & = \frac{2c}{\alpha} \cdot \frac{1}{E_\alpha(\mu\lVert\nu)}\cdot \mathbb{E}_{\theta\sim \nu}\left[\frac{\partial}{\partial \alpha}\left(\frac{\mu(\theta)}{\nu(\theta)}\right)^{\alpha}\right] - \frac{2c}{\alpha^2}\log E_\alpha(\mu\lVert\nu)
        \label{eqn:i_by_e_lower_above}
    \end{align}

    By exchanging the order of derivative and expectation (because $\mu(\theta)$ and $\nu(\theta)$ are smooth densities), we prove that \cref{eqn:i_by_e_lower_above} is equivalent to the following inequality.
    \begin{align}
        \frac{I_\alpha(\mu\lVert \nu)}{E_\alpha(\mu\lVert \nu)} & \geq \frac{2c}{\alpha}\cdot \frac{\partial}{\partial \alpha}\log \mathbb{E}_\alpha(\mu\lVert\nu) - \frac{2c}{\alpha^2}\log E_\alpha(\mu\lVert\nu)
    \end{align}

    By definition, we have $E_{\alpha}(\mu\lVert\nu) = e^{(\alpha-1)R_\alpha(\mu\lVert\nu)}$, therefore we prove
    \begin{align}
        \frac{I_\alpha(\mu\lVert \nu)}{E_\alpha(\mu\lVert \nu)} & \geq  \frac{2c}{\alpha}\cdot \left(R_{\alpha}(\mu\lVert \nu) + (\alpha - 1)\frac{\partial R_\alpha(\mu\lVert\nu)}{\partial \alpha}\right)  - \frac{2c}{\alpha^2}\cdot (\alpha - 1) R_{\alpha}(\mu\lVert\nu)\\
        & = \frac{2c}{\alpha^2}\cdot R_{\alpha}(\mu\lVert \nu) + \frac{2c}{\alpha^2}\cdot \alpha (\alpha - 1)\frac{\partial R_\alpha(\mu\lVert\nu)}{\partial \alpha} 
    \end{align}
\end{proof}

\subsection{Proof for \cref{lem:post_proc}}

\label{appsub:proof_post_proc}

\begin{replemma}{lem:post_proc}
    Let $\mu, \nu$ be two distributions on $\mathbb{R}^d$. Let $f:\mathbb{R}^d\rightarrow\mathbb{R}^d$ be a measurable mapping on $\mathbb{R}^d$. We denote $\mathcal{N}(0,2t\sig^2\cdot\mathbb{I}_d)$ to be the standard Gaussian distribution on $\mathbb{R}^d$ with covariance matrix~$2t\sigma^2\cdot \mathbb{I}_d$. We denote $p_t(\theta)$ and $p_t'(\theta)$ to be the probability density functions for the distributions $f_{\#}(\mu) * \mathcal{N}(0,2t\sig^2\mathbb{I}_d)$ and $f_{\#}(\nu) * \mathcal{N}(0,2t\sig^2\mathbb{I}_d)$ respectively, where $f_\#(\mu), f_\#(\nu)$ denote the push forward distributions of $\mu, \nu$ under mapping $f$. Then if $\mu$ and $\nu$ satisfy log-Sobolev inequality with constant $c$, and if the mapping $f$ is $L$-Lipschitz, then for any order $\alpha> 1$,
    \begin{align}
       \frac{\partial}{\partial t}R_\alpha \left( p_t(\theta)\lVert p_t'(\theta)\right) & \leq - c_t\cdot 2\sig^2\cdot \left(\frac{R_\alpha(p_t(\theta)\lVert p_t'(\theta))}{\alpha} + (\alpha-1)\cdot \frac{\partial}{\partial\alpha} R_\alpha(p_t(\theta)\lVert p_t'(\theta)) \right),
    \end{align} 
    where $c_t = \left(\frac{L^2}{c} + 2t\sig^2\right)^{-1}$ is the log-Sobolev inequality constant for distributions $p_t(\theta)$ and $p_t'(\theta)$.
\end{replemma}
\begin{proof}
    By definition, $p_t(\theta)$ and $p_t'(\theta)$ are probability density functions for distributions $f_{\#}(\mu) * \mathcal{N}(0,2t\sig^2\mathbb{I}_d)$ and $f_{\#}(\nu) * \mathcal{N}(0,2t\sig^2\mathbb{I}_d)$ respectively. Therefore $p_t(\theta)$ and $p_t'(\theta)$ satisfy the following Fokker-Planck equations.
    \begin{align}
        \frac{\partial p_t(\theta)}{\partial t} = \sig^2 \Delta p_t(\theta),\label{eqn:fokker1}\\
        \frac{\partial p_t'(\theta)}{\partial t} = \sig^2 \Delta p_t'(\theta)\label{eqn:fokker2}
    \end{align}
    We denote $E_{\alpha}(p_t(\theta)\lVert p'_t(\theta)) = \int \frac{p_t(\theta)^\alpha}{p_t'(\theta)^{\alpha-1}}d\theta$ to be the moment of the likelihood ratio function. Then by definition of R\'enyi divergence, we prove
    \begin{align}
        R_\alpha(p_t(\theta)\lVert p_t'(\theta)) = \frac{1}{\alpha - 1}\log E_\alpha(p_t(\theta)\lVert p'_t(\theta))
    \end{align}
    Therefore we compute the rate of R\'enyi divergence with regard to $t$ as follows.
    \begin{align}
        \frac{\partial }{\partial t}R_\alpha(p_t(\theta)\lVert p_t'(\theta)) &= \frac{1}{\alpha - 1} \frac{\partial }{\partial t}\log E_\alpha(p_t(\theta)\lVert p_t'(\theta))\\
        & = \frac{1}{(\alpha - 1)E_\alpha(p_t(\theta)\lVert p_t'(\theta))} \cdot \frac{\partial }{\partial t} E_\alpha(p_t(\theta)\lVert p_t'(\theta))\\
        \text{By definition of $E_\alpha(p_t(\theta)\lVert p'_t(\theta))$,}\quad & = \frac{1}{(\alpha - 1)E_\alpha(p_t(\theta)\lVert p_t'(\theta))} \cdot \frac{\partial}{\partial t}\left(\int \frac{p_t(\theta)^\alpha}{p_t'(\theta)^{\alpha-1}}d\theta\right)
    \end{align}
    %
    By exchanging the order of derivative and integration, we prove
    \begin{align}
        \frac{\partial }{\partial t} &R_\alpha(p_t(\theta)\lVert p_t'(\theta)) = \frac{1}{(\alpha - 1)E_\alpha(p_t(\theta)\lVert p_t'(\theta))}  \cdot \int \frac{\partial}{\partial t}\frac{p_t(\theta)^\alpha}{p_t'(\theta)^{\alpha-1}}d\theta \\
        & = \frac{1}{(\alpha - 1)E_\alpha(p_t(\theta)\lVert p_t'(\theta))}  \cdot \int \left(\alpha \cdot \frac{p_t(\theta)^{\alpha-1}}{p_t'(\theta)^{\alpha-1}}\cdot \frac{\partial p_t(\theta)}{\partial t} - (\alpha - 1)\cdot \frac{p_t(\theta)^\alpha}{p_t'(\theta)^{\alpha}}\cdot \frac{\partial p_t'(\theta)}{\partial t}\right)d\theta
    \end{align}
    By the Fokker-Planck equations \cref{eqn:fokker1} and \cref{eqn:fokker2}, we substitute the terms $\frac{\partial p_t(\theta)}{\partial t}$ and $\frac{\partial p_t'(\theta)}{\partial t}$ in the above equation as follows.
    \begin{align}
        & \frac{\partial }{\partial t} R_\alpha(p_t(\theta)\lVert p_t'(\theta)) \\
        & = \frac{1}{(\alpha - 1)E_\alpha(p_t(\theta)\lVert p_t'(\theta))}  \cdot \int \left(\alpha \sig^2 \cdot \frac{p_t(\theta)^{\alpha-1}}{p_t'(\theta)^{\alpha-1}}\cdot \Delta p_t(\theta) - (\alpha - 1)\sig^2\cdot \frac{p_t(\theta)^\alpha}{p_t'(\theta)^{\alpha}}\cdot \Delta p_t'(\theta)\right)d\theta 
        \label{eqn:substitute_fokker}
    \end{align}
    By applying Green's first identity in \cref{eqn:substitute_fokker}, the first intergration term in \cref{eqn:substitute_fokker} is changed to
    \begin{align}
        \int & \alpha \sig^2 \cdot \frac{p_t(\theta)^{\alpha-1}}{p_t'(\theta)^{\alpha-1}}\cdot \Delta p_t(\theta)d\theta = \lim_{r\rightarrow \infty}\int_{B_r} \alpha \sig^2 \cdot \frac{p_t(\theta)^{\alpha-1}}{p_t'(\theta)^{\alpha-1}}\cdot \Delta p_t(\theta)d\theta\\
        & = \lim_{r\rightarrow\infty}\int_{\partial B_r} \alpha \sig^2 \cdot \frac{p_t(\theta)^{\alpha-1}}{p_t'(\theta)^{\alpha-1}} \cdot \nabla p_t(\theta) \cdot d \textbf{S} - \int \nabla\left(\alpha \sig^2 \cdot \frac{p_t(\theta)^{\alpha-1}}{p_t'(\theta)^{\alpha-1}}\right) \cdot \nabla p_t(\theta) d\theta,\label{eqn:greenidlimit}
    \end{align}
    where $B_r$ is the unit ball centered around origin in $d$-dimensional Euclidean space with radius $r$. The limits in the first term of \cref{eqn:greenidlimit} becomes zero given the smoothness and fast decay properties of $p_t(\theta)$, and the Lebesgue integrability of $\frac{p_t(\theta)^{\alpha - 1}}{p_t'(\theta)^{\alpha - 1}}\cdot p_t(\theta)$. Therefore we prove that 
    \begin{align}
        \int & \alpha \sig^2 \cdot \frac{p_t(\theta)^{\alpha-1}}{p_t'(\theta)^{\alpha-1}}\cdot \Delta p_t(\theta)d\theta = - \int \nabla\left(\alpha \sig^2 \cdot \frac{p_t(\theta)^{\alpha-1}}{p_t'(\theta)^{\alpha-1}}\right) \cdot \nabla p_t(\theta) d\theta,\label{eqn:greenidfirst}
    \end{align}

    Similarly by applying Green's first identity in \cref{eqn:substitute_fokker}, the second intergration term in \cref{eqn:substitute_fokker} is changed to

    \begin{align}
        \int & - (\alpha-1) \sig^2 \cdot \frac{p_t(\theta)^{\alpha}}{p_t'(\theta)^{\alpha}}\cdot \Delta p_t'(\theta)d\theta = \int \nabla\left((\alpha-1) \sig^2 \cdot \frac{p_t(\theta)^{\alpha}}{p_t'(\theta)^{\alpha}}\right) \cdot \nabla p_t'(\theta) d\theta,\label{eqn:greenidsecond}
    \end{align}

    (This techinque for using Green's identity and bounding the rate of entropy change with relative Fisher information-like quantity, has been previously used for KL divergence in \citet{lyu2009interpretation}, and R\'enyi divergence in \citet{vempala2019rapid,chourasia2021differential}. The result is also closely related to the well known \emph{de Bruijn's indentity} in information theory literature. )

    By plugging in \cref{eqn:greenidfirst} and \cref{eqn:greenidsecond} into \cref{eqn:substitute_fokker}, we prove that
    \begin{align}
        & \frac{\partial }{\partial t}R_\alpha(p_t(\theta)\lVert p_t'(\theta)) \\
        & = \frac{\sig^2}{(\alpha - 1)E_\alpha(p_t(\theta)\lVert p_t'(\theta))}  \cdot \int - \alpha \cdot \left\langle\nabla\left(\frac{p_t(\theta)^{\alpha-1}}{p_t'(\theta)^{\alpha-1}}\right), \nabla p_t(\theta)\right\rangle + (\alpha - 1)\cdot \left\langle\nabla\left(\frac{p_t(\theta)^\alpha}{p_t'(\theta)^{\alpha}}\right), \nabla p_t'(\theta)\right\rangle d\theta\\
        & = \frac{\alpha(\alpha-1)\sig^2}{(\alpha - 1)E_\alpha(p_t(\theta)\lVert p_t'(\theta))}  \cdot \int \frac{p_t(\theta)^{\alpha-2}}{p_t'(\theta)^{\alpha-2}}\cdot \left\langle\nabla\left(\frac{p_t(\theta)}{p_t'(\theta)}\right),  - \nabla p_t(\theta) + \frac{p_t(\theta)}{p_t'(\theta)}\nabla p_t'(\theta)\right\rangle d\theta\\
        & = - \frac{\alpha\sig^2}{E_\alpha(p_t(\theta)\lVert p_t'(\theta))}  \cdot \int \frac{p_t(\theta)^{\alpha-2}}{p_t'(\theta)^{\alpha-2}}\cdot \left\langle\nabla\left(\frac{p_t(\theta)}{p_t'(\theta)}\right),  \nabla \left(\frac{p_t(\theta)}{p_t'(\theta)}\right) \right\rangle \cdot p_t'(\theta) d\theta\\
        & = - \alpha\sig^2\cdot \frac{I_\alpha(p_t(\theta)\lVert p_t'(\theta))}{E_\alpha(p_t(\theta)\lVert p_t'(\theta))},\label{eqn:rate_Renyi_I_t_by_E_t}
    \end{align}

    where $I_\alpha(p_t(\theta)\lVert p_t'(\theta)) = \int \frac{p_t(\theta)^{\alpha-2}}{p_t'(\theta)^{\alpha-2}}\cdot \left\langle\nabla\left(\frac{p_t(\theta)}{p_t'(\theta)}\right),  \nabla \left(\frac{p_t(\theta)}{p_t'(\theta)}\right) \right\rangle \cdot p_t'(\theta) d\theta$. Therefore, if $p_t(\theta)$ and $p_t'(\theta)$ satisfy log-Sobolev inequality with constant $c_t$, then by \cref{lem:i_by_e_lower}, we obtain the following inequality.
    \begin{align}
        \label{eqn:I_tbyE_t}
        \frac{I_\alpha(p_t(\theta)\lVert p_t'(\theta))}{E_\alpha(p_t(\theta)\lVert p_t'(\theta))}\geq \frac{2 c_t}{\alpha^2} R_\alpha(p_t(\theta)\lVert p_t'(\theta)) + \frac{2 c_t}{\alpha^2}\cdot \alpha (\alpha - 1)\frac{\partial}{\partial \alpha}R_\alpha(p_t(\theta) \lVert p_t'(\theta))
    \end{align}

    Meanwhile, by Lemma~16 in \citet{vempala2019rapid} and Lemma~17 in \citet{vempala2019rapid}, the distributions (with densities $p_t(\theta)$ and $p_t'(\theta)$) indeed satisfy $c_t$-log-Sobolev inequality with 
    \begin{align}
        \label{eqn:lsi_c_t}
        c_t = \left(\frac{L^2}{c} + 2t\sig^2\right)^{-1} 
    \end{align}

    By plugging \cref{eqn:lsi_c_t} and \cref{eqn:I_tbyE_t} into \cref{eqn:rate_Renyi_I_t_by_E_t}, we prove the following bound for the rate of R\'enyi divergence.
    
    \begin{align}
        \frac{\partial }{\partial t}R_\alpha(p_t(\theta)\lVert p_t'(\theta)) & \leq - \alpha\sig^2\cdot \left(\frac{2 c_t}{\alpha^2} R_\alpha(p_t(\theta)\lVert p_t'(\theta)) + \frac{2 c_t}{\alpha^2}\cdot\alpha(\alpha-1)\cdot\frac{\partial}{\partial \alpha}R_\alpha(p_t(\theta) \lVert p_t'(\theta))\right)\\
        & = - c_t\cdot 2\sig^2\cdot \left(\frac{R_\alpha(p_t(\theta)\lVert p_t'(\theta))}{\alpha} + (\alpha-1)\cdot \frac{\partial}{\partial\alpha} R_\alpha(p_t(\theta)\lVert p_t'(\theta)) \right)
    \end{align}

    where $c_t = \left(\frac{L^2}{c} + 2t\sig^2\right)^{-1}$ is the log-Sobolev inequality constant for distributions $p_t(\theta)$ and $p_t'(\theta)$.
\end{proof}

\subsection{Proof for \cref{lem:recursive}}

\label{appsub:proof_recursive}

\paragraphbe{Reduce analysis to point initialization:} Without loss of generality, in this paper, we only analyze recursive R\'enyi DP bounds for \cref{alg:noisymBGD} under an arbitrary point initialization for initial parameters $\theta_0^0$. This is because, under an arbitrary initialization distribution, the last-iterate parameters $\theta_K^0$ in~\cref{alg:noisymBGD} follow a mixture distribution, with each component being the conditional output distribution given fixed initial parameters $\theta_0^0$. Therefore, by the quasi-convexity of R\'enyi divergence, the largest (worst-case) R\'enyi DP bound for $R_{\alpha}(p(\theta_\K^0|\theta_0^0)\lVert p({\theta'}_\K^0|\theta_0^0))$ over all possible initial parameters $\theta_0^0$, is also an upper bound for the R\'enyi privacy loss $R_{\alpha}(\theta_\K^0\lVert{\theta'}_\K^0)$ between (mixture) last-iterate parameters distributions for running~\cref{alg:noisymBGD} on two neighboring datasets. 

We now proceed to prove the recursive privacy bound~\cref{lem:recursive}.

\begin{replemma}{lem:recursive}
    Let $D,D'$ be an arbitrary pair of neighboring datasets that differ in the $i_0$-th data point (i.e. $x_{i_0}\neq x'_{i_0}$). Let $B_k^j$ be a fixed mini-batch used (in iteration $j$ of epoch $k$) in~\cref{alg:noisymBGD}, which contains $b$ indices sampled from $\{1,\cdots,n\}$. We denote $\theta_\kk^j$ and ${\theta'}_\kk^{j}$ as the intermediate parameters in \cref{alg:noisymBGD} on input datasets $D$ and $D'$, respectively. If the distributions of $\theta_k^j$ and ${\theta'}_k^j$ satisfy log-Sobolev inequality with a constant $c$, and if the mini-batch GD mapping $f(\theta) = \theta - \eta \cdot \frac{1}{b}\cdot \sum_{i\in B_k^j}\ell(\theta;\x_i)$ is $L$-Lipschitz for parameters $\theta$, then the following recursive bound for R\'enyi divergence holds.
    \begin{align}
        \frac{R_{\alpha}(\theta_\kk^{j+1}\lVert {\theta'}_\kk^{j+1})}{\alpha} & \leq
        \begin{cases}
            \frac{R_{\alpha'}(\theta_\kk^{j}\lVert {\theta'}_\kk^{j})}{\alpha'}\cdot  \left(1 + \frac{ c \cdot 2 \eta  \sig^2}{L^2}\right)^{-1} & \text{if $i_0\notin B_k^j$}\\
            \frac{R_{\alpha}(\theta_\kk^{j}\lVert {\theta'}_\kk^{j})}{\alpha} + \frac{\eta S_g^2}{4\sig^2b^2}& \text{if $i_0\in B_k^j$}
        \end{cases}\text{ with }\alpha' = \frac{\alpha-1}{1 + \frac{c\cdot 2 \eta  \sig^2}{L^2}} + 1.
    \end{align}
\end{replemma}
\begin{proof}
    \begin{enumerate}
        \item When $i_0\notin B_k^j$, the noisy mini-batch gradient descent mapping under both dataset $D$ and $D'$ is written in the same way as $f(\theta) + \mathcal{N}(0,2\eta \sig^2\mathbb{I}_d)$, where $f(\theta) = \theta - \frac{\eta}{b}\cdot \sum_{i\in B_k^j}\nabla\ell(\theta;\x_i)$ (this is because $x_i = x'_i$ for $i\in B_k^j$, when $i_0\notin B_k^j$). Therefore, we could use \cref{lem:post_proc} and solve \cref{eqn:post_proc} on the interval $t\in[0,\eta]$ to obtain the recursive privacy bound, where the R\'enyi divergence at $t=0$ (i.e. before the update in iteration $j$ of epoch $k$ takes place) satisfies $R_\alpha(p_0(\theta)\lVert p_0'(\theta)) \leq \eps_{\kk}^j(\alpha)$. We denote function $R(\alpha, t) = R_\alpha(p_t(\theta)\lVert p'_t(\theta))$. Then \cref{eqn:post_proc} is equivalent to the following equation.
        \begin{align}
            \begin{cases}
                \frac{\partial}{\partial t}R(\alpha, t)  \leq - \left(\frac{L^2}{c} + 2 t \sig^2\right)^{-1}\cdot 2\sig^2\cdot \left(\frac{R(\alpha, t)}{\alpha} + (\alpha-1)\cdot \frac{\partial}{\partial\alpha} R(\alpha, t) \right),\\
                R(\alpha, 0)  \leq \eps_\kk^{j}(\alpha)
            \end{cases}
            \label{eqn:pde1}
        \end{align}
    
        By substituting $u (\alpha, t) = \frac{R(\alpha, t)}{\alpha}$ into \cref{eqn:pde1}, we prove that \cref{eqn:pde1} is equivalent to the following equation.
        \begin{align}
            \begin{cases}
                \frac{\partial}{\partial t}u(\alpha, t)  \leq - \left(\frac{L^2}{c} + 2 t \sig^2\right)^{-1}\cdot 2\sig^2\cdot \left( u(\alpha, t) + (\alpha-1)\cdot \frac{\partial}{\partial\alpha} u(\alpha, t) \right),\\
                u(\alpha, 0)  \leq \frac{\eps_\kk^{j}(\alpha)}{\alpha}
            \end{cases}
            \label{eqn:pde2}
        \end{align}
    
        By change of variable $y = \ln(\alpha -1)$, we prove that \cref{eqn:pde2} is equivalent to the following equation.
        \begin{align}
            \begin{cases}
                \frac{\partial}{\partial t}u(y, t)  \leq - \left(\frac{L^2}{c} + 2 t \sig^2\right)^{-1}\cdot 2\sig^2\cdot  \left(u(y, t) + \frac{\partial }{\partial y}u(y,t)\right),\\
                u(y, 0)  \leq \frac{\eps_\kk^{j}(e^{y} + 1)}{e^{y} + 1}
            \end{cases}
            \label{eqn:pde3}
        \end{align}
    
        Now we do change of variable $\begin{cases}
            \tau = t\\
            z = y - \int_{0}^t\left(\frac{L^2}{c} + 2 t' \sig^2\right)^{-1}\cdot 2\sig^2 dt' + \ln\left(1 + \frac{c\cdot 2 \eta \sig^2}{L^2}\right)
        \end{cases}$ then by chain rule, we prove the following expressions for the partial derivatives using new variables.
        \begin{align}
            \begin{cases}
                \frac{\partial u}{\partial t} = \frac{\partial u}{\partial \tau}\cdot \frac{\partial \tau}{\partial t} +  \frac{\partial u}{\partial z}\cdot \frac{\partial z}{\partial t} = \frac{\partial u}{\partial \tau} -  \frac{\partial u}{\partial z}\cdot \left(\frac{L^2}{c} + 2 t\sig^2\right)^{-1}\cdot 2\sig^2\\
            \frac{\partial u}{\partial y} = \frac{\partial u}{\partial \tau}\cdot \frac{\partial \tau}{\partial y} +  \frac{\partial u}{\partial z}\cdot \frac{\partial z}{\partial y} = \frac{\partial u}{\partial z}\\
            \end{cases}
            \label{eqn:change_var_last}
        \end{align}
    
        By plugging \cref{eqn:change_var_last} into \cref{eqn:pde3}, we prove that the partial differential inequality \ref{eqn:pde3} is equivalent to the following inequality under new variables $\tau$ and $z$.
    
        \begin{align}
            \begin{cases}
                \frac{\partial}{\partial \tau}u(z, \tau)  \leq - \left(\frac{L^2}{c} + 2 \tau \sig^2\right)^{-1}\cdot 2\sig^2\cdot  u(z, \tau),\\
                u(z, 0)  \leq \frac{\eps_\kk^{j}\left(e^{z -\ln\left(1 + \frac{c\cdot 2 \eta \sig^2}{L^2}\right)} + 1\right)}{e^{z - \ln\left(1 + \frac{c\cdot 2 \eta \sig^2}{L^2}\right)} + 1}
            \end{cases}
            \label{eqn:pde4}
        \end{align}
    
        Now we observe that given any fixed $z$, \cref{eqn:pde4} is an ordinary differential equation with regard to $\tau$, with a decay term that proportional to $- \left(\frac{L^2}{c} + 2 \tau \sig^2\right)^{-1}\cdot 2\sig^2\cdot  u(z, \tau)$. Therefore, we directly solve \cref{eqn:pde4}, and prove that
        \begin{align}
            \ln(u(z,\tau)) - \ln(u(z, 0))\leq  - \int_0^\tau \left(\frac{L^2}{c} + 2 \tau' \sig^2\right)^{-1}\cdot 2\sig^2 d\tau' = - \ln\left(1 + \frac{c\cdot 2 \tau \sig^2}{L^2}\right)
            \label{eqn:sol_uztau}
        \end{align}
        We take $\tau = \eta$ in \cref{eqn:sol_uztau}, then we prove
        \begin{align}
            \ln u(z, \eta) - \ln u(z, 0)\leq - \ln\left(1 + \frac{c\cdot 2 \eta \sig^2}{L^2}\right)
            \label{eqn:sol_uztau_eta}
        \end{align}

        By plugging the initial condition for $u(z,0)$ in \cref{eqn:pde4} into \cref{eqn:sol_uztau_eta}, we prove that the solution for $u$ at the end of a set $\tau = \eta$ satisfies the following inequality.
        \begin{align}
            u(z,\eta)\leq \frac{\eps_\kk^{j}\left(e^{z -\ln\left(1 + \frac{c\cdot 2 \eta \sig^2}{L^2}\right)} + 1\right)}{e^{z - \ln\left(1 + \frac{c\cdot 2 \eta \sig^2}{L^2}\right)} + 1} \cdot \left(1 + \frac{c\cdot 2 \eta \sig^2}{L^2}\right)^{-1}
        \end{align}

        Now we translate the variables $z=z, \tau = \eta$ back to the old variables $y$ and $t$ by definitions, and prove that $t=\tau = \eta$ and $z = y - \int_{0}^t\left(\frac{L^2}{c} + 2 t' \sig^2\right)^{-1}\cdot 2\sig^2 dt' + \ln\left(1 + \frac{c\cdot 2 \eta \sig^2}{L^2}\right) = y$. Therefore, under these variable substitutions, we prove that \cref{eqn:sol_uztau_eta} is equivalent to the following equation.
        \begin{align}
            u(y,\eta)\leq \frac{\eps_\kk^{j}\left(e^{y -\ln\left(1 + \frac{c\cdot 2 \eta \sig^2}{L^2}\right)} + 1\right)}{e^{y - \ln\left(1 + \frac{c\cdot 2 \eta \sig^2}{L^2}\right)} + 1} \cdot \left(1 + \frac{c\cdot 2 \eta \sig^2}{L^2}\right)^{-1}
            \label{eqn:sol_uyt}
        \end{align}
    
        Finally, we translate the variable $y$ back to $\alpha$, under the fixed variable $t=\eta$, by the definition $y=\log(\alpha-1)$. Therefore, we prove that \cref{eqn:sol_uyt} is equivalent to the following solution for $u(\alpha, \eta)$.
    
        \begin{align}
            u(\alpha, \eta)\leq  \frac{\eps_\kk^{j}\left((\alpha - 1)\cdot\left(1 + \frac{c\cdot 2 \eta \sig^2}{L^2}\right)^{-1} + 1\right)}{(\alpha - 1)\cdot\left(1 + \frac{c\cdot 2 \eta \sig^2}{L^2}\right)^{-1} + 1} \cdot \left(1 + \frac{c\cdot 2 \eta \sig^2}{L^2}\right)^{-1}
            \label{eqn:sol_ualphat}
        \end{align}
    
        By the definition $u(\alpha,t) = \frac{R(\alpha, t)}{\alpha}$, we prove that \cref{eqn:sol_ualphat} is equivalent to the following solution for $R(\alpha, t)$.
        \begin{align}
            \frac{R(\alpha, \eta)}{\alpha} & \leq \frac{\eps_\kk^{j}\left((\alpha - 1)\cdot\left(1 + \frac{c\cdot 2 \eta \sig^2}{L^2}\right)^{-1} + 1\right)}{(\alpha - 1)\cdot\left(1 + \frac{c\cdot 2 \eta \sig^2}{L^2}\right)^{-1} + 1} \cdot \left(1 + \frac{c\cdot 2 \eta \sig^2}{L^2}\right)^{-1}
        \end{align}
    
        Therefore, by using the definition $R(\alpha, \eta)=\eps_k^{j+1}(\alpha)$, we finish the proof for the Lemma statement, that when $i_0\neq B_k^j$,
        \begin{align}
            \frac{R_{\alpha}(\theta_k^{j+1}\lVert {\theta'}_k^{j+1})}{\alpha}\leq \frac{R_{\alpha'}(\theta_k^{j}\lVert {\theta'}_k^{j})}{\alpha'}\cdot \left(1 + \frac{c\cdot 2 \eta \sig^2}{L^2}\right)^{-1}
        \end{align}

        where $\alpha'=(\alpha - 1)\cdot\left(1 + \frac{c\cdot 2 \eta \sig^2}{L^2}\right)^{-1} + 1$.
        
        \item When $i_0\in B_k^j$, by composition theorem for R\'enyi differential privacy~\cite{mironov2017renyi}, and by the R\'enyi privacy bound for Gaussian mechanism~\cite{mironov2017renyi} under $\ell_2$-sensitivity $S_g/b$ (for batch averaged gradient) and noise $\mathcal{N}(0,2\eta\sig^2\mathbb{I}_d)$, we prove that
        \begin{align}
            R_{\alpha}(\theta_k^{j+1}\lVert {\theta'}_k^{j+1})& \leq R_{\alpha}(\theta_k^j,\theta_k^{j+1}\lVert {\theta'}_k^j, {\theta'}_k^{j+1})\\
            &\leq R_{\alpha}(\theta_k^{j}\lVert {\theta'}_k^{j}) + \frac{\eta^2 (S_g/b)^2}{2\cdot 2\eta \sig^2} = R_{\alpha}(\theta_k^{j}\lVert {\theta'}_k^{j}) + \frac{\alpha \eta S_g^2}{4\sig^2 b^2}.
        \end{align}
    \end{enumerate}
\end{proof}

\subsection{Proof for LSI sequence for noisy mini-batch gradient descent}

\label{appsub:proof_lsi}

To apply the recurisve privacy bound \cref{lem:recursive} and prove a converging privacy dynamics, we first need to prove that the distributions of parameters $\theta_k^j$ in \cref{alg:noisymBGD} satisfy log-Sobolev inequality with certain constant $c_k^j$, that depends on $k$ and $j$. In this section, we prove that noisy mini-batch gradient descent on convex smooth loss function, as well as strongly convex smooth loss functions,satusfies LSI with certain sequences of constants as follows.

\begin{lemma}[LSI constant sequence in \cref{alg:noisymBGD} for convex smooth loss]
\label{lem:LSIseq_convex}
Suppose that the loss function $\ell(\theta;\x)$ in \cref{alg:noisymBGD} is convex and $\beta$-smooth. If the step-size $\eta<\frac{2}{\beta}$, then for any $k=0,\cdots,\K-1$ and $j=0,\cdots,n/b-1$, the distribution of parameters $\theta_k^j$ in \cref{alg:noisymBGD} satisfies $c_{k}^j$-log-Sobolev inequality with
\begin{equation}
    \label{eqn:LSIseq_convex}
    c_k^j=\frac{1}{2 \eta \sig^2\cdot \left(k\cdot n/b + j\right)},
\end{equation}
and we define $c_0^0 = \frac{1}{0} = + \infty$.
\end{lemma}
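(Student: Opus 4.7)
The plan is to track the LSI constant of the parameter distribution across iterations by viewing each update as the composition of a $1$-Lipschitz deterministic map with a Gaussian convolution, and then solve the resulting recursion.

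\textbf{Step 1 (non-expansiveness of the GD map).} I would first show that for any fixed mini-batch $B$, the map
\[
 f(\theta) \;=\; \theta \;-\; \frac{\eta}{b}\sum_{i\in B}\nabla\ell(\theta;\x_i)
\]
is $1$-Lipschitz whenever the per-example loss is convex and $\beta$-smooth and $\eta<2/\beta$. The batch-averaged loss $\hat L_B(\theta)=\frac1b\sum_{i\in B}\ell(\theta;\x_i)$ is convex and $\beta$-smooth, so by co-coercivity of its gradient one has $\langle \nabla\hat L_B(\theta)-\nabla\hat L_B(\theta'),\theta-\theta'\rangle \ge \tfrac{1}{\beta}\|\nabla\hat L_B(\theta)-\nabla\hat L_B(\theta')\|_2^2$, from which $\|f(\theta)-f(\theta')\|_2^2\le (1-\eta(2/\beta-\eta))\|\theta-\theta'\|_2^2\le \|\theta-\theta'\|_2^2$ when $\eta<2/\beta$. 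So $f$ has Lipschitz constant $L=1$.

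\textbf{Step 2 (LSI under Lipschitz push-forward and Gaussian convolution).} I would then invoke two standard facts (Lemmas 16 and 17 of \citet{vempala2019rapid}, which are already cited in the proof of \cref{lem:post_proc}): (i) if a distribution $\mu$ satisfies LSI with constant $c$ and $f$ is $L$-Lipschitz, then $f_\#\mu$ satisfies LSI with constant $c/L^2$; (ii) if $\mu$ satisfies LSI with constant $c$ and $Z\sim \mathcal{N}(0,\tau^2 I_d)$ is independent, then $\mu * \mathcal{N}(0,\tau^2 I_d)$ satisfies LSI with constant $(1/c+\tau^2)^{-1}$. One noisy mini-batch update is exactly the composition of these two operations with $L=1$ and $\tau^2=2\eta\sigma^2$, so
\begin{align}
 c_k^{j+1} \;\ge\; \bigl(1/c_k^j + 2\eta\sigma^2\bigr)^{-1}.
\end{align}

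\textbf{Step 3 (solving the recursion).} With the point-initialization convention $c_0^0=+\infty$ (so $1/c_0^0=0$), the recursion for $d_k^j:=1/c_k^j$ reads $d_k^{j+1}=d_k^j+2\eta\sigma^2$, and across the epoch boundary $d_{k+1}^0=d_k^{n/b}$. Hence $d_k^j = 2\eta\sigma^2(k\cdot n/b+j)$, which gives exactly
\begin{align}
 c_k^j \;=\; \frac{1}{2\eta\sigma^2\cdot(k\cdot n/b + j)},
\end{align}
matching \cref{eqn:LSIseq_convex}.

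The only non-routine step is Step 1, establishing non-expansiveness of the GD map for every mini-batch simultaneously; once $L=1$ is in hand, Steps 2 and 3 are essentially bookkeeping on top of the Vempala--Wibisono LSI lemmas. (Note that strong convexity is not needed here, which is why the constant only decays like $1/k$ rather than exponentially; the strongly-convex case giving a uniform lower bound will be handled separately in the proof of \cref{thm:strconvexsmooth}.)
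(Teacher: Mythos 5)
Your proof follows the same route as the paper: establish $1$-Lipschitzness of the mini-batch GD map from convexity, $\beta$-smoothness, and $\eta<2/\beta$, then apply Lemmas 16 and 17 of \citet{vempala2019rapid} to obtain the recursion $1/c_k^{j+1} = 1/c_k^j + 2\eta\sigma^2$, and solve from $c_0^0=\infty$. One small slip in Step~1: the intermediate bound $\|f(\theta)-f(\theta')\|_2^2\le (1-\eta(2/\beta-\eta))\|\theta-\theta'\|_2^2$ does not follow from co-coercivity (the correct consequence is $\|f(\theta)-f(\theta')\|_2^2\le \|\theta-\theta'\|_2^2 - \eta(2/\beta-\eta)\|\nabla\hat L_B(\theta)-\nabla\hat L_B(\theta')\|_2^2$), but the conclusion $L=1$ that you actually use is still correct.
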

\begin{proof}
    The mini-batch noisy gradient descent update could be written as $\theta_{\kk}^{j+1}=f(\theta_{\kk}^j) + \mathcal{N}(0,2\eta\sig^2\mathbb{I}_d)$, where $f$ is a deterministic mapping on $\mathbb{R}^d$ written as $f(\theta) = \theta - \eta \cdot \frac{\sum_{i\in B}\ell(\theta;\x_i)}{b}$, and $B$ is a mini-batch of size $b$ consisting of indices selected from $0,\cdots,n$.

    Because the initialization is point distribution around $\theta_0$, therefore $\theta_0^0$ satisfies log-Sobolev inequality with constant $c_0=\infty$.

    By the $\beta$-smoothness of $\ell(\theta;\x)$, and by $\eta<\frac{2}{\beta}$, we prove that the mini-batch gradient mapping $f(\theta) $ is $1$-Lipschitz. Further using LSI under Lipchitz mapping (Lemma 16 in \citet{vempala2019rapid}) and under Gaussian convolution (Lemma 17 in \citet{vempala2019rapid}), we prove that
    \begin{align}
        \frac{1}{c_k^{j}} & =\frac{1}{c_k^{j-1}}+2 \eta \sigma^2\\
        & = \frac{1}{c_0^0} + 2\eta \sig^2\cdot (k\cdot n/b + j)\\
        \text{by $c_0^0=+\infty$,}\quad& = 2\eta \sig^2\cdot (k\cdot n/b + j)\\
    \end{align}
    This suffices to prove the LSI sequence in \cref{eqn:LSIseq_convex}.
\end{proof}

Similarly, we prove another LSI sequence for noisy mini-batch gradient descent on strongly convex smooth loss function.

\begin{lemma}[LSI constant sequence in \cref{alg:noisymBGD} for strongly convex smooth loss]
    \label{lem:LSIseq_stronglyconvex}
Suppose the loss function $\ell(\theta;\x)$ in \cref{alg:noisymBGD} is $\lambda$-strongly convex and $\beta$-smooth. If the step-size $\eta<\frac{2}{\lambda + \beta}$, then for any $k=0,\cdots,\K-1$ and $j=0,\cdots,n/b-1$, the distribution of parameters $\theta_k^j$ in \cref{alg:noisymBGD} satisfies $c_{k}^j$-log-Sobolev inequality with
\begin{equation}
    \label{eqn:LSIseq_stronglyconvex}
    c_k^j=\frac{1}{2 \eta \sig^2} \cdot \frac{1}{\sum_{s = 0}^{k\cdot n/b + j - 1} (1-\eta\lambda)^{2s}}
\end{equation}
\end{lemma}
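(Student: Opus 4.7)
The plan is to follow the same template used for the convex smooth case in \cref{lem:LSIseq_convex}, but replace the $1$-Lipschitzness of the mini-batch gradient step with the stronger $(1-\eta\lambda)$-contraction that holds under strong convexity. Since we can restrict attention to point initialization (as the preamble to \cref{alg:noisymBGD_shuffle} explains), we start with $1/c_0^0 = 0$ and track how $1/c_k^j$ evolves over one iteration, which is the composition of a deterministic Lipschitz map with a Gaussian convolution.

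First I would argue that, under $\lambda$-strong convexity and $\beta$-smoothness with $\eta < 2/(\lambda+\beta)$, the mini-batch gradient map $f(\theta) = \theta - \frac{\eta}{b}\sum_{i\in B}\nabla\ell(\theta;\x_i)$ is $(1-\eta\lambda)$-Lipschitz. This is standard: each per-sample gradient step is co-coercive, and averaging preserves both strong convexity and smoothness of the batch loss $\frac{1}{b}\sum_{i\in B}\ell(\cdot;\x_i)$; the chosen step-size lies in the contractive regime so the Jacobian $I - \eta\nabla^2(\frac{1}{b}\sum_i\ell)$ has operator norm bounded by $\max(|1-\eta\lambda|,|1-\eta\beta|) = 1-\eta\lambda$.

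Next, I would combine two facts about log-Sobolev constants (Lemmas 16 and 17 of \citet{vempala2019rapid}, already invoked in the proof of \cref{lem:LSIseq_convex}): (i) if $\mu$ satisfies LSI with constant $c$ and $g$ is $L$-Lipschitz, then the pushforward $g_\#\mu$ satisfies LSI with constant $c/L^2$; and (ii) if two independent distributions satisfy LSI with constants $c_1$ and $c_2$, their convolution satisfies LSI with constant $(1/c_1 + 1/c_2)^{-1}$. Applying these to $\theta_k^{j+1} = f(\theta_k^j) + \Gauss{0}{2\eta\sigma^2\Id}$, where the Gaussian factor has LSI constant $1/(2\eta\sigma^2)$, yields the recurrence
\begin{equation}
    \frac{1}{c_k^{j+1}} \;=\; \frac{(1-\eta\lambda)^2}{c_k^{j}} \;+\; 2\eta\sigma^2,
\end{equation}
with initial condition $1/c_0^0 = 0$ from the point initialization.

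Finally, I would solve this first-order linear recurrence in closed form. Letting $m = k\cdot n/b + j$ and $r = (1-\eta\lambda)^2$, iterating gives $1/c_k^j = 2\eta\sigma^2 \sum_{s=0}^{m-1} r^s$, which is exactly the claimed expression. The only real subtlety is the contraction constant of $f$; once that is in hand the LSI calculus of \citet{vempala2019rapid} and a geometric-series computation finish the argument. I do not foresee a serious obstacle beyond carefully invoking the co-coercivity estimate for the composite batch loss to justify the $(1-\eta\lambda)$-Lipschitz bound uniformly in the (fixed) mini-batch $B$.
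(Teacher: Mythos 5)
Your proposal matches the paper's proof essentially line for line: identify the $(1-\eta\lambda)$-Lipschitz contraction of the mini-batch gradient map under $\eta < 2/(\lambda+\beta)$, apply Lemmas 16 and 17 of \citet{vempala2019rapid} to obtain the recurrence $1/c_k^{j+1} = (1-\eta\lambda)^2/c_k^j + 2\eta\sigma^2$, and sum the resulting geometric series from the point-mass initial condition $1/c_0^0 = 0$. The argument and all intermediate quantities are the same as in the paper.
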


\begin{proof}
    The mini-batch noisy gradient descent update could be written as $\theta_{\kk}^{j+1}=f(\theta_{\kk}^j) + \mathcal{N}(0,2\eta\sig^2\mathbb{I}_d)$, where $f$ is a deterministic mapping on $\mathbb{R}^d$ written as $f(\theta) = \theta - \eta \cdot \frac{\sum_{i\in B}\ell(\theta;\x_i)}{b}$, and $B$ is a mini-batch of size $b$ consisting of indices selected from $0,\cdots,n$.

    Because the initialization is point distribution around $\theta_0$, therefore $\theta_0^0$ satisfies log-Sobolev inequality with constant $c_0=\infty$.

    By $\lambda$-strong convexity and $\beta$-smoothness of $\ell(\theta;\x)$, and by $\eta<\frac{2}{\lambda + \beta}$, we prove that the mini-batch gradient mapping $f(\theta) $ is $1 - \eta\lambda$-Lipschitz. Further using LSI under Lipchitz mapping (Lemma 16 in \citet{vempala2019rapid}) and under Gaussian convolution (Lemma 17 in \citet{vempala2019rapid}), we prove that
    \begin{align}
        \frac{1}{c_k^{j}} & =\frac{(1-\eta\lambda)^2}{c_k^{j-1}}+2 \eta \sigma^2\\
        & = \frac{(1-\eta\lambda)^{2(k\cdot n/b + j)}}{c_0^0} + 2\eta \sig^2\cdot \sum_{s = 0}^{k\cdot n/b + j - 1} (1-\eta\lambda)^{2s}\\
        \text{by $c_0^0=+\infty$,}\quad & = 2 \eta \sig^2 \cdot \sum_{s = 0}^{k\cdot n/b + j - 1} (1-\eta\lambda)^{2s}
    \end{align}
    This suffices to prove the LSI sequence in \cref{eqn:LSIseq_stronglyconvex}.
\end{proof}

\subsection{Equivalence Between~\cref{lem:recursive} (Ours) And The Bound in~\citet{feldman2018privacy}} 

\label{appsub:proof_convexsmooth}

We now plug the LSI constant sequence derived in \cref{lem:LSIseq_convex} into the recursive privacy bound~\cref{lem:recursive}, and prove the following privacy dynamics theorems for noisy mini-batch gradient descent under smooth convex loss functions.

\begin{theorem}[Privacy dynamics under convex smooth loss]
    \label{thm:convexsmooth}
    Under fixed mini-batches $B^0,\cdots,B^{n/b-1}$, if the loss function $\ell(\theta;x)$ is convex, $\beta$-smooth, and if its gradient has finite $\ell_2$-sensitivity $S_g$, then \cref{alg:noisymBGD} with step-size $\eta <\frac{2}{\beta}$ satisfies $(\alpha,\eps)$-R\'enyi DP for data points in the batch $B^{j_0}$, with
    \begin{align}
        \label{eqn:convexsmooth}
        \eps_j & \leq \frac{\alpha \eta S_g^2}{4 \sig^2b^2}\cdot \frac{b}{n} \cdot (\K - 1) + \frac{\alpha \eta S_g^2}{4 \sig^2b^2}\cdot \frac{1}{n/b-j_0}\\
    \end{align}
\end{theorem}

\begin{proof}
    We first observe that for any batch index $j_0 = 0,\cdots, n/b - 1$, the privacy bound for data points in batch $B^{j_0}$ in \cref{alg:noisymBGD} (that has $K$ epochs) is equivalent to the privacy bound for data points in the batch $B^0$ after $\K-1$ epochs and $n/b - j_0$ iterations in \cref{alg:noisymBGD}. That is, if $R_\alpha(\theta_k^{j}\lVert {\theta'}_k^{j})\leq \eps_\kk^{j}(\alpha)$ is an upper bound for the R\'enyi divergence between distributions of parameters $\theta_k^j$ and ${\theta'}_k^j$ in running \cref{alg:noisymBGD} on neighboring datasets $D$ and $D'$, when the differing data point between $D$ and $D'$ is contained in the mini-batch $B^0$, then the privacy bound $\eps$ for data points in the mini-batch $B^{j_0}$ of \cref{alg:noisymBGD} satisfies 
    \begin{align}
        \eps & \leq \eps_{\K - 1}^{n/b - j_0}(\alpha)  \leq \eps_{\K - 1}^{0}(\alpha) + \eps_0^{n/b - j_0}(\alpha)\\
        & \leq (\K-1)\cdot \eps^{n/b}_0(\alpha) + \eps_0^{n/b-j_0}(\alpha),
        \label{eqn:proofconvexsmooth_translate_j_0}
    \end{align} 

    where the last two inequalities are by composition of R\'enyi DP guarantees.

    Therefore, in the remaining proof, we only prove upper bounds for the terms $\eps_0^{j}(\alpha)$ for $j=1,\cdots,n/b$, that are required by \cref{eqn:proofconvexsmooth_translate_j_0} for bounding $\eps$.

    By \cref{lem:LSIseq_convex}, for any $k=0,\cdots, \K-1$ and $j = 0,\cdots, n/b-1$, the distribution of parameters $\theta_k^j$ satisfies log-Sobolev inequality with the following constant $c_k^j$.
    \begin{align}
        c_k^j=\frac{1}{2 \eta \sig^2\cdot \left(k\cdot n/b + j\right)}
    \end{align}

    By plugging the LSI constant sequence $\{c_k^j\}$ into \cref{lem:recursive}, and by the noisy mini-batch gradient descent mapping, under convex smooth loss with stepsize $\eta<\frac{2}{\beta}$, is $1$-Lipschitz, we prove the following recursive bound for $\eps_k^{j}(\alpha)$, i.e. the R\'enyi DP bound for data points in the mini-batch $B^0$ of \cref{alg:noisymBGD}. For any $k=0, \cdots, \K - 1$ and any $j=0, \cdots, n/b - 1$,

    \begin{align}
        &\eps_{0}^{0}(\alpha) = 0\\
        &\frac{\eps_k^{j+1}(\alpha)}{\alpha} \leq \begin{cases}
            \frac{\eps_{k}^{j}(\alpha)}{\alpha} + \frac{\eta S_g^2}{4\sig^2 b^2}& \text{ if $j=0$}\\
            \frac{\eps_{k}^{j}(\alpha')}{\alpha'}\cdot \frac{k\cdot n/b + j}{k\cdot n/b + j + 1} & \text{ if $j = 1, \cdots, n/b - 1$}\\
        \end{cases}
        \label{eqn:proof_convex}\\
        &\text{where $\alpha'=(\alpha - 1)\cdot\left(1 + \frac{c\cdot 2 \eta \sig^2}{L^2}\right)^{-1} + 1$}\\
        &\eps_{k+1}^{0}(\alpha) = \eps_{k}^{n/b}(\alpha)
    \end{align}

    By solving \cref{eqn:proof_convex} under $k=0$, we prove that for any $j = 1, \cdots, n/b$, and for any $\alpha>1$,
    \begin{align}
        \frac{\eps_0^{j}(\alpha)}{\alpha} & \leq \frac{\eta S_g^2}{4\sig^2b^2}\cdot \prod_{j' = 1}^{j - 1}\frac{j'}{j' + 1}\\
        & = \frac{\eta S_g^2}{4\sig^2b^2}\cdot \frac{1}{j}
        \label{eqn:proof_convex_first_epoch}
    \end{align}

    By plugging \cref{eqn:proof_convex_first_epoch} into \cref{eqn:proofconvexsmooth_translate_j_0}, we prove the privacy bound \cref{eqn:convexsmooth} in the theorem statement.
\end{proof}

\cref{thm:convexsmooth} is equivalent to Theorem 23 in \citet{feldman2018privacy} for  \cref{alg:noisymBGD} under single-epoch setting (where $K=1$) with batch-size $b=1$. For multi-epoch setting with $K=n$ and batch-size $b=1$, \cref{thm:convexsmooth} is equivalent to Theorem 35 in \citet{feldman2018privacy}.

\subsection{Proof for \cref{thm:strconvexsmooth}}

\label{appsub:proof_strconvexsmooth}

We now plug the LSI constant sequence derived in \cref{lem:LSIseq_stronglyconvex} into the recursive privacy bound~\cref{lem:recursive}, and prove the following privacy dynamics theorems for noisy mini-batch gradient descent under strongly convex smooth loss functions.

\begin{reptheorem}{thm:strconvexsmooth}
    Conditioned on a fixed sequence of partitioned mini-batches $B^0,\cdots,B^{n/b-1}$ in \cref{step:batch_generation_shuffle}, 
    if the loss function is $\lambda$-strongly convex, ${\beta\text{-smooth}}$ and its gradient has $\ell_2$-sensitivity $S_g$, then running \cref{alg:noisymBGD} for $K\geq 1$ epochs with step-size $\eta <\frac{2}{\lambda + \beta}$, satisfies $(\alpha,\eps)$-R\'enyi DP for data points in the batch $B^{j_0}$, with
    \begin{align}
        \eps\leq
        \eps_0^{\lfloor \frac{n}{2b}\rfloor}(\alpha) \cdot \frac{1 - (1-\eta\lambda)^{2\cdot (\K - 1) \cdot (n/b - \lfloor \frac{n}{2b}\rfloor)}}{1 - (1-\eta\lambda)^{2\cdot (n/b - \lfloor \frac{n}{2b}\rfloor)} } + \eps_{0}^{n/b - j_0}(\alpha)
    \end{align}
    where $\eps_0^{j}(\alpha)= \frac{\alpha \eta S_g^2}{4\sig^2b^2}\cdot \left(1 - \eta \lambda \right)^{2\cdot (j - 1)} \cdot \frac{1}{\sum_{s=0}^{ j - 1 } (1-\eta\lambda)^{2s}}$ for any $j = 1, \cdots, \frac{n}{b}$ (we assume $\frac{n}{b}\geq 2$).

\end{reptheorem}

\begin{proof}
    We first observe that for any batch index $j_0 = 0,\cdots, n/b - 1$, the privacy bound for data points in batch $B^{j_0}$ in \cref{alg:noisymBGD} (that has $K$ epochs) is equivalent to the privacy bound for data points in the batch $B^0$ after $\K-1$ epochs and $n/b - j_0$ iterations in \cref{alg:noisymBGD}. That is, if $R_\alpha(\theta_k^{j}\lVert {\theta'}_k^{j})\leq \eps_\kk^{j}(\alpha)$ is an upper bound for the R\'enyi divergence between distributions of parameters $\theta_k^j$ and ${\theta'}_k^j$ in running \cref{alg:noisymBGD} on neighboring datasets $D$ and $D'$, when the differing data point between $D$ and $D'$ is contained in the mini-batch $B^0$, then the privacy bound $\eps$, for running \cref{alg:noisymBGD} on neighboring datasets $D$ and $D'$ that differs in a point in the mini-batch $B^{j_0}$, satisfies 
    \begin{align}
        \eps & \leq \eps_{\K - 1}^{n/b - j_0}(\alpha)  \leq \eps_{\K - 1}^{0}(\alpha) + \eps_0^{n/b - j_0}(\alpha),
        \label{eqn:proofstrconvexsmooth_translate_j_0}
    \end{align} 

    where the last inequality is by composition of R\'enyi DP guarantees.

    Therefore, in the remaining proof, we only prove upper bounds for the terms $\eps_0^{n/b - 1 - j_0}(\alpha)$ and $\eps_{\K - 1}^{0}(\alpha)$, that are required by \cref{eqn:proofstrconvexsmooth_translate_j_0} for bounding $\eps$.

    By \cref{lem:LSIseq_stronglyconvex}, for any $k=0,\cdots, \K-1$ and $j = 0,\cdots, n/b-1$, the distribution of parameters $\theta_k^j$ satisfies log-Sobolev inequality with the following constant $c_k^j$.
    \begin{align}
        c_k^j& = \frac{1}{2\eta \sig^2}\cdot \frac{1}{\sum_{s=0}^{k\cdot n/b + j - 1} (1-\eta\lambda)^{2s} }
    \end{align}

    By plugging the LSI constant sequence $\{c_k^j\}$ proved in~\cref{lem:LSIseq_stronglyconvex} into the proved recursive privacy bound~\cref{lem:recursive}, we prove that for running~noisy mini-batch gradient descent on neighboring datasets that differ in a data point in the first mini-batch $B^0$, the following recursion for the R\'enyi divergence bound $\eps_k^{j}(\alpha)$ holds: if the loss function is $\lambda$-strongly convex and $\beta$-smooth, and if the stepsize $\eta<\frac{2}{\lambda + \beta}$, then for any $k=0, \cdots, \K - 1$, any $j=0, \cdots, n/b - 1$, and any $\alpha>1$, 

    \begin{align}
        &\eps_{0}^{0}(\alpha) = 0\\
        &\frac{\eps_k^{j+1}(\alpha)}{\alpha} \leq \begin{cases}
            \frac{\eps_{k}^{j}(\alpha)}{\alpha} + \frac{\eta S_g^2}{4\sig^2 b^2}& \text{ if $j=0$}\\
            \frac{\eps_{k}^{j}(\alpha')}{\alpha'}\cdot (1-\eta\lambda)^2 \cdot \frac{\sum_{s=0}^{k\cdot n/b + j - 1} (1-\eta\lambda)^{2s}}{\sum_{s=0}^{k\cdot n/b + j } (1-\eta\lambda)^{2s}} & \text{ if $j = 1, \cdots, n/b - 1$}\\
        \end{cases}
        \label{eqn:proof_strongly_convex}\\
        &\text{where $\alpha'=(\alpha - 1)\cdot\left(1 + \frac{c\cdot 2 \eta \sig^2}{L^2}\right)^{-1} + 1$}\\
        &\eps_{k+1}^{0}(\alpha) = \eps_{k}^{n/b}(\alpha)
    \end{align}

    We now solve the above recursion, thus bounding the privacy loss $\eps$ of \cref{alg:noisymBGD} in~\cref{eqn:proofstrconvexsmooth_translate_j_0}.

    \begin{enumerate}
        \item We first prove a bound for the term $\eps_0^{n/b - j_0}(\alpha)$ in \cref{eqn:proofstrconvexsmooth_translate_j_0}. By solving \cref{eqn:proof_strongly_convex} under $k=0$, we prove that for any $j = 1, \cdots, n/b$, and for any $\alpha>1$,
        \begin{align}
            \frac{\eps_0^{j}(\alpha)}{\alpha} & \leq \frac{\eta S_g^2}{4\sig^2b^2}\cdot \prod_{j' = 1}^{j - 1}\left( \left(1 - \eta \lambda \right)^{2} \cdot \frac{\sum_{s=0}^{j' - 1} (1-\eta\lambda)^{2s}}{\sum_{s=0}^{ j' } (1-\eta\lambda)^{2s}}\right)\\
            & = \frac{\eta S_g^2}{4\sig^2b^2}\cdot \left(1 - \eta \lambda \right)^{2\cdot (j - 1)} \cdot \frac{1}{\sum_{s=0}^{ j - 1 } (1-\eta\lambda)^{2s}}
            \label{eqn:proof_strongly_convex_first_epoch}
        \end{align}
        \item We now prove a bound for the term $\eps_{\K - 1}^{0}(\alpha)$ in \cref{eqn:proofstrconvexsmooth_translate_j_0}.
        
        The term $\eps_k^j(\alpha)$ corresponds to the privacy bound for the data points in the batch $B^0$ for running \cref{alg:noisymBGD} with $k$ epochs plus $j$ iteration. This is equivalent to the composition $\mathcal{A}_2\circ\mathcal{A}_1$ of two sub-mechanisms $\mathcal{A}_1$ and $\mathcal{A}_2$, where $\mathcal{A}_1$ corresponds to running \cref{alg:noisymBGD} with $\K=k$ epochs, and $\mathcal{A}_2$ corresponds to running \cref{alg:noisymBGD} with only $j$ iterations. Therefore, by the composition theorem for R\'enyi DP guarantees~\cite{mironov2017renyi}, we prove the following alternative recursive privacy bound for $\eps_k^j(\alpha)$. For any $k=0,\cdots,\K - 1$ and any $j = 0, \cdots, n/b - 1$,
        \begin{align}
            \eps_k^{j + 1}(\alpha) \leq \eps_k^0(\alpha) + \eps_0^{j + 1}(\alpha)
            \label{eqn:proof_strongly_convex_composition}
        \end{align}
        We now prove an upper bound for $\eps_k^j(\alpha)$, by carefully combining the original recursion~\cref{eqn:proof_strongly_convex} and the new alternative recursion~\cref{eqn:proof_strongly_convex_composition} (obtained by composition theorem). We use the original recursion~\cref{eqn:proof_strongly_convex_composition} for recursively bounding $\eps_k^{j+1}(\alpha)$ during the first half of one epoch, i.e., for $j = 0, \cdots, \lfloor \frac{n}{2b}\rfloor - 1$, and then we use the new alternative recursive bound~\cref{eqn:proof_strongly_convex} for the second half of one epoch, i.e., for $j = \lfloor \frac{n}{2b}\rfloor, \cdots, n/b - 1$. \footnote{This seemingly artificial way of separating one epoch into two halves, and using the recursive bounds separately in each half, is for obtaining a small privacy bound at convergence. 
        } Via this combination, we obtain a new combined recursion for privacy bound as follows. For any $k=0, \cdots, \K - 1$,
        \begin{align}
            \frac{\eps_k^{j + 1}(\alpha)}{\alpha} \leq \begin{cases}
                \frac{\eps_{k}^{0}(\alpha) + \eps_0^{j + 1}(\alpha)}{\alpha}& \text{ if $j=0, \cdots, \lfloor \frac{n}{2b}\rfloor - 1$}\\
                \frac{\eps_{k}^{j}(\alpha')}{\alpha'}\cdot (1-\eta\lambda)^2 \cdot \frac{\sum_{s=0}^{k\cdot n/b + j - 1} (1-\eta\lambda)^{2s}}{\sum_{s=0}^{k\cdot n/b + j } (1-\eta\lambda)^{2s}} & \text{ if $j = \lfloor \frac{n}{2b}\rfloor, \cdots, n/b - 1$}\\
            \end{cases}
            \label{eqn:proof_strongly_convex_new_recursive}
        \end{align}
        where $\alpha'=(\alpha - 1)\cdot\left(1 + \frac{c\cdot 2 \eta \sig^2}{L^2}\right)^{-1} + 1$.

        We now solve this new recursion~\cref{eqn:proof_strongly_convex_new_recursive} for one epoch, by accumulating $j=0,\cdots, n/b - 1$. We prove that for any $k=0,\cdots,\K-1$,
        \begin{align}
            \frac{\eps_{k+1}^{0}(\alpha)}{\alpha} & = \frac{\eps_{k}^{n/b}(\alpha)}{\alpha} \\
            & \leq \frac{\eps_k^0(\tilde{\alpha}) + \eps_0^{\lfloor \frac{n}{2b}\rfloor}(\tilde{\alpha})}{\tilde{\alpha}} \cdot \prod_{j' = \lfloor \frac{n}{2b}\rfloor}^{n/b - 1} (1-\eta\lam)^2 \cdot \frac{\sum_{s=0}^{k\cdot n/b + j - 1} (1-\eta\lambda)^{2s}}{\sum_{s=0}^{k\cdot n/b + j } (1-\eta\lambda)^{2s}}\\
            & \leq \frac{\left(\eps_k^0(\tilde{\alpha}) + \eps_0^{\lfloor \frac{n}{2b}\rfloor}(\tilde{\alpha}) \right)}{\tilde{\alpha}} \cdot  (1-\eta\lam)^{2\cdot (n/b - \lfloor \frac{n}{2b}\rfloor)}\label{eqn:proof_strongly_convex_k_recursive}
        \end{align}

        where the RDP order $\tilde{\alpha}>1$ is the $n/b-\lfloor \frac{n}{2b}\rfloor$ fold mapped value of $\alpha$ under repeated mappings $\alpha\leftarrow(\alpha - 1)\cdot\left(1 + \frac{c\cdot 2 \eta \sig^2}{L^2}\right)^{-1} + 1$.

        We now further solve this new recursion~~\cref{eqn:proof_strongly_convex_new_recursive} for multiple epochs, by accumulating \cref{eqn:proof_strongly_convex_k_recursive} for $k = 0, 1,\cdots, \K-1$. We prove that for any $k=0, 1,\cdots, \K-1$, 
        \begin{align}
            \frac{\eps_{k+1}^{0}(\alpha)}{\alpha}& \leq \frac{\eps_0^0(\tilde{\alpha})}{\tilde{\alpha}} \cdot (1-\eta\lambda)^{2\cdot (k + 1) \cdot (n/b - \lfloor \frac{n}{2b}\rfloor)} + \frac{\eps_0^{\lfloor \frac{n}{2b}\rfloor}(\tilde{\alpha})}{\tilde{\alpha}} \cdot \sum_{k'=1}^{k + 1} (1-\eta\lam)^{2\cdot k' \cdot (n/b - \lfloor \frac{n}{2b}\rfloor)}
        \end{align}
        for some RDP order $\tilde{\alpha}>1$ that is the $(k+1)\cdot (n/b-\lfloor \frac{n}{2b}\rfloor)$ fold mapped value of $\alpha$ under repeated mapping $\alpha\leftarrow(\alpha - 1)\cdot\left(1 + \frac{c\cdot 2 \eta \sig^2}{L^2}\right)^{-1} + 1$.

        By further substituting $\eps_0^0(\tilde{\alpha}) = 0$ at the initialization point $\theta_0$ for any $\tilde{\alpha}>1$ in the above equation, we prove that for any $k = 0, \cdots, \K - 1$
        \begin{align}
            \frac{\eps_{k+1}^{0}(\alpha)}{\alpha} & = \frac{\eps_0^{\lfloor \frac{n}{2b}\rfloor}(\tilde{\alpha})}{\tilde{\alpha}} \cdot (1-\eta\lambda)^{2 \cdot (n/b - \lfloor \frac{n}{2b}\rfloor)} \cdot \frac{1 - (1-\eta\lambda)^{2\cdot (k + 1) \cdot (n/b - \lfloor \frac{n}{2b}\rfloor)}}{1 - (1-\eta\lambda)^{2\cdot (n/b - \lfloor \frac{n}{2b}\rfloor)} }
            \label{eqn:proof_strongly_convex_above}
        \end{align}
        for some RDP order $\tilde{\alpha}>1$.

        By setting $k = \K - 2$ in \cref{eqn:proof_strongly_convex_above}, we prove that for $\K \geq 2$
        \begin{align}
            \frac{\eps_{\K - 1}^0 (\alpha)}{\alpha} \leq & \frac{\eps_0^{\lfloor \frac{n}{2b}\rfloor}(\tilde{\alpha})}{\tilde{\alpha}} \cdot \frac{1 - (1-\eta\lambda)^{2\cdot (\K - 1) \cdot (n/b - \lfloor \frac{n}{2b}\rfloor)}}{1 - (1-\eta\lambda)^{2\cdot (n/b - \lfloor \frac{n}{2b}\rfloor)} }
            \label{eqn:proofstrconvexsmooth_K_minus_1_0_aux}
        \end{align}
        for some RDP order $\tilde{\alpha}>1$. By the format of $\eps_0^j(\alpha)$ for any $\alpha>1$ in our proof~\cref{eqn:proof_strongly_convex_first_epoch} (i.e., $\frac{\eps_0^j(\alpha)}{\alpha}$ is bounded by a constant for all $\alpha>1$), we prove that \cref{eqn:proofstrconvexsmooth_K_minus_1_0_aux} is equivalent to the following equation
        \begin{align}
            \eps_{\K - 1}^0 (\alpha) \leq & \eps_0^{\lfloor \frac{n}{2b}\rfloor}(\alpha) \cdot \frac{1 - (1-\eta\lambda)^{2\cdot (\K - 1) \cdot (n/b - \lfloor \frac{n}{2b}\rfloor)}}{1 - (1-\eta\lambda)^{2\cdot (n/b - \lfloor \frac{n}{2b}\rfloor)} }
            \label{eqn:proofstrconvexsmooth_K_minus_1_0}
        \end{align}

        For $\K = 1$, by definition, we compute that $\eps_{\K - 1}^0(\alpha) = 0 = $Right hand side of~\cref{eqn:proofstrconvexsmooth_K_minus_1_0}, therefore \cref{eqn:proofstrconvexsmooth_K_minus_1_0} also holds for $K=1$.
    \end{enumerate}
    
    We now plug our bound \cref{eqn:proofstrconvexsmooth_K_minus_1_0} into \cref{eqn:proofstrconvexsmooth_translate_j_0}, and prove a bound for $\eps$ as follows. 
    \begin{align}
        \eps \leq & \eps_{\K - 1}^0 (\alpha) + \eps_{0}^{n/b - j_0}(\alpha) \\
        \leq & \eps_0^{\lfloor \frac{n}{2b}\rfloor}(\alpha) \cdot \frac{1 - (1-\eta\lambda)^{2\cdot (\K - 1) \cdot (n/b - \lfloor \frac{n}{2b}\rfloor)}}{1 - (1-\eta\lambda)^{2\cdot (n/b - \lfloor \frac{n}{2b}\rfloor)} } + \eps_{0}^{n/b - j_0}(\alpha)
    \end{align}

    where the term $\eps_0^j(\alpha)$ is upper bounded by \cref{eqn:proof_strongly_convex_first_epoch}. This gives the privacy bound \cref{eqn:strconvexsmooth} in the theorem statement.
\end{proof}

\subsection{Explanations for the privacy bound derived from~\citet{balle2019privacy} in~\cref{fig:improved_dynamics}}

\label{append:figure1}

For convenience, we first translate~\cite[Theorem 5]{balle2019privacy} into the symbols used in this paper, as well as under mini-batch with size $b>1$, as follows.

\begin{theorem}[\citet{balle2019privacy}]
    Let $\ell(\theta,\x)$ be an $L$-Lipschitz, $\beta$-smooth, $\lambda$-strongly convex loss function. If $\eta\leq \frac{2}{\beta +\lambda}$, then conditioned on a fixed sequence of partitioned mini-batches $B^0,\cdots,B^{n/b-1}$ in~\cref{step:batch_generation_shuffle}, ~\cref{alg:noisymBGD} satisfies $(\alpha,\eps)$-R\'enyi DP for data points in the mini-batch $j_0$, with
    \begin{align}
        \eps = \begin{cases}
            \alpha\cdot \frac{2(L/b)^2}{2\eta\sig^2}& j_0=n/b-1\\
            \alpha\cdot \frac{2(L/b)^2}{(n/b-j_0-1)\cdot 2\eta\sig^2}\left(1-\frac{2\eta\beta\lambda}{\beta+\lambda}\right)^{\frac{n/b-j_0}{2}}& j_0=0,\cdots,n/b-2
        \end{cases} 
    \end{align}
\end{theorem}

The variables from~\citet{balle2019privacy} that we replaced are as follows: $L\rightarrow L/b$, this is because, in~\cref{alg:noisymBGD}, we additionally average each per-example gradient with the mini-batch size $b$; $i\rightarrow j_0+1$, this is because the index used in our paper starts from $j_0=0$, while the index used in~\citet{balle2019privacy} starts from $i=1$; $n\rightarrow n/b$, this is because under mini-batch size $b>1$, the number of iterations in one epoch in our paper is $n/b$, while the number of iteration in~\citet{balle2019privacy} equals the size of the datasets $n$; $\sig^2\rightarrow2\eta\sig^2$, this is because in our paper, we follow the noise scaling with variance $2\eta\sig^2$ that is related to stepsize $\eta$, as in SGLD, while~\citet{balle2019privacy} use the standard Gaussian noise with variance $\sig^2$.

By further substituting the Lipschitz requirement with its equivalent sensitivity assumption, i.e. $L\rightarrow \eta S_g/2$, and by using R\'enyi DP composition~\cite{mironov2017renyi} over the epochs, we obtain that the privacy bound in~\citet[Theorem 5]{balle2019privacy} is $\frac{\alpha \cdot \eta S_g^2}{4 \cdot (n/b - 1)\cdot b^2\sig^2}\cdot \left(1 - \frac{2\eta \beta\lambda}{\beta + \lambda}\right)^{\frac{n}{2b}}\cdot \K$ when the differing data point is in the first batch, and the bound is $\frac{\alpha \cdot \eta S_g^2}{4 \cdot (n/b - 1)\cdot b^2\sig^2}\cdot \left(1 - \frac{2\eta \beta\lambda}{\beta + \lambda}\right)^{\frac{n}{2b}}\cdot (\K - 1) + \frac{\alpha \cdot \eta S_g^2}{4  b^2\sig^2}$ when the differing data point is in the last batch. 

\subsection{Revisiting noisy GD: A tighter bound than~\cite[Corollary 1]{chourasia2021differential}}

\label{append:simple_proof_gd}

In this section, we prove a new converging hidden-state privacy bound for the noisy GD algorithm, which is a special case of Algorithm~\ref{alg:noisymBGD} under $b=n$. We then show that our bound is slightly tighter than~\cite[Corollary 1]{chourasia2021differential} and also admits a conceptually simpler proof (our proof relies on the privacy amplification by randomized post-processing results in~\cref{sec:fix_sgd}). We first state our privacy bound and its proof as follows.

\begin{theorem}
    \label{thm:GD_tighter}
    Let $\ell(\theta;\x)$ be a $\lambda$-strongly convex, and $\beta$-smoooth loss function, with a finite total gradient sensitivity $S_g$, then the noisy gradient descent algorithm with step-size $\eta<\frac{1}{\beta}$, satisfies $(\alpha,\varepsilon)$ R\'enyi Differential Privacy with
    \begin{align}
        \varepsilon \leq \frac{\alpha \eta S_g^2}{2\sigma^2n^2} \cdot  \sum_{k=1}^{\K}\left(1 - \frac{\eta\lambda}{2}\right)^{k}.
    \end{align}
\end{theorem}

\begin{proof}
    We first offer a new perspective of viewing a noisy GD update as follows. Recall that one noisy GD update in~\cref{alg:noisymBGD} is written as

\begin{align}
    \theta_{k+1}^0 = \theta_k^0 - \eta \cdot g(\theta_k^0;D) + \sqrt{2\eta\sigma^2}\cdot \mathcal{N}(0,\mathbb{I}_d)\ \text{ where }\ g(\theta_k^j;D) = \frac{1}{n}\sum_{i=1}^n\nabla\ell(\theta_k^j;\x_i)
\end{align}

Therefore, this update is equivalent to the following two steps:
\begin{align}
    \theta_{k}^{\frac{1}{2}} & = \theta_k^0 - \eta \cdot g(\theta_k^0;D) + \sqrt{\eta\sigma^2}\cdot \mathcal{N}(0,\mathbb{I}_d)\ \text{ where }\ g(\theta_k^0;D) = \frac{1}{n}\sum_{i=1}^n\nabla\ell(\theta_k^0;\x_i)\\
    \theta_{k+1}^0 & = \theta_k^{\frac{1}{2}} + \sqrt{\eta\sigma^2}\cdot \mathcal{N}(0,\mathbb{I}_d)
\end{align}

That is, we view each noisy GD update as another noisy GD update with smaller noise scale followed by pure additive Gaussian noise. 

Moreover, by~\cref{lem:LSIseq_stronglyconvex}, we prove that distributions of $\theta_k^0$ satisfies $c_k^0$-log Sobolev inequality with 

\begin{align}
    c_k^0= \frac{1}{2\eta\sigma^2}\cdot \frac{1}{\sum_{s=0}^{k-1}(1-\eta\lambda)^{2s}}
\end{align}

Therefore, by further using LSI under Lipschitz mapping (\cite[Lemma 16]{vempala2019rapid}) and under Gaussian convolution (\cite[Lemma 17]{vempala2019rapid}), we prove that the distribution of $\theta_k^{\frac{1}{2}}$ satisfies $c_k^{\frac{1}{2}}$-log Sobolev inequality with
\begin{align}
    \label{eqn:lsi_one_half}
    \frac{1}{c_k^{\frac{1}{2}}} & = \frac{(1-\eta \lambda)^2}{c_k^0} + \eta \sigma^2\\
    & = 2\eta \sigma^2 \cdot \sum_{s=1}^{k}(1-\eta\lambda)^{2s} + \eta \sigma^2
\end{align}

Therefore, by applying composition on the conceptual step from $\theta_k^0\rightarrow \theta_k^{\frac{1}{2}}$, we prove that
\begin{align}
    \label{eqn:step1}
    \frac{R_\alpha(\theta_k^{\frac{1}{2}}\lVert {\theta'}_k^{\frac{1}{2}})}{\alpha} \leq \frac{R_{\alpha}(\theta_k^0\lVert {\theta'}_k^{0})}{\alpha} + \frac{\eta S_g^2}{2\sigma^2n^2}
\end{align}

For the remaining conceptual step from $\theta_k^{\frac{1}{2}}\rightarrow\theta_{k+1}^0$, we note that pure additive Gaussian noise is equivalent to identity mapping convolved with Gaussian noise. Therefore, we apply $i_0\notin B_k^j$ case of~\cref{lem:recursive} with $L=1$ (for identity mapping) and noise standard deviation $\frac{\sigma}{\sqrt{2}}$, and prove that
\begin{align}
    \label{eqn:step2}
    \frac{R_\alpha(\theta_{k+1}^{0}\lVert {\theta'}_{k+1}^{0})}{\alpha} \leq \frac{R_{\alpha'}(\theta_k^{\frac{1}{2}}\lVert {\theta'}_k^{\frac{1}{2}})}{\alpha'} \cdot \left(1 + c_k^{\frac{1}{2}}\cdot \eta \sigma^2\right)^{-1}\ \text{ with }\ \alpha' = \frac{\alpha - 1}{1 + c_k^{\frac{1}{2}}\cdot \eta \sigma^2} + 1
\end{align}

Therefore, by combining~\eqref{eqn:step1} and \eqref{eqn:step2}, we prove the following recursive R\'enyi DP bound.

\begin{align}
    \label{eqn:GD_recursive_complex}
    \frac{R_\alpha(\theta_{k+1}^{0}\lVert {\theta'}_{k+1}^{0})}{\alpha} \leq \left(\frac{R_{\alpha'}(\theta_k^{0}\lVert {\theta'}_k^{0})}{\alpha'} + \frac{\eta S_g^2}{2\sigma^2n^2}\right) \cdot \left(1 + c_k^{\frac{1}{2}}\cdot \eta \sigma^2\right)^{-1}\ \text{ with }\ \alpha' = \frac{\alpha - 1}{1 + c_k^{\frac{1}{2}}\cdot \eta \sigma^2} + 1
\end{align}

By further plugging in the LSI constant sequence~\eqref{eqn:lsi_one_half}, we simplify \cref{eqn:GD_recursive_complex} into the following inequality.

\begin{align}
    \frac{R_\alpha(\theta_{k+1}^{0}\lVert {\theta'}_{k+1}^{0})}{\alpha} & \leq \left(\frac{R_{\alpha'}(\theta_k^{0}\lVert {\theta'}_k^{0})}{\alpha'} + \frac{\eta S_g^2}{2\sigma^2n^2}\right) \cdot \left(1 - \frac{1}{2\sum_{s=0}^{k}(1-\eta\lambda)^{2s}}\right)\\
    & \leq \left(\frac{R_{\alpha'}(\theta_k^{0}\lVert {\theta'}_k^{0})}{\alpha'} + \frac{\eta S_g^2}{2\sigma^2n^2}\right) \cdot \left(1 - \frac{1}{2\sum_{s=0}^{+\infty}(1-\eta\lambda)^{2s}}\right)\\
    & = \left(\frac{R_{\alpha'}(\theta_k^{0}\lVert {\theta'}_k^{0})}{\alpha'} + \frac{\eta S_g^2}{2\sigma^2n^2}\right) \cdot \left(1 - \frac{1-(1-\eta\lambda)^2}{2}\right)\\
    & \leq \left(\frac{R_{\alpha'}(\theta_k^{0}\lVert {\theta'}_k^{0})}{\alpha'} + \frac{\eta S_g^2}{2\sigma^2n^2}\right) \cdot \left(1 - \frac{\eta \lambda}{2}\right),\label{eqn:lam_eta_leq1}
\end{align}

where $\alpha' = (\alpha - 1)\cdot \left(1 - \frac{1}{2\sum_{s=0}^{k}(1-\eta\lambda)^{2s}}\right) + 1$, and the last inequality \cref{eqn:lam_eta_leq1} is because of the inequality $\eta \lambda<\frac{\lambda}{\beta}\leq 1$ that is ensured by the condition $\eta <\frac{1}{\beta}$.

Therefore, by solving the recursion~\cref{eqn:lam_eta_leq1} from $k=0,\cdots,\K-1$, and by using $R_\alpha(\theta_0^0\lVert {\theta'}_0^0) = 0$ for any $\alpha>1$, we prove the theorem statement in \cref{thm:GD_tighter}.
\begin{align}
    \label{eqn:GD_tighter}
    \frac{R_\alpha(\theta_{\K}^{0}\lVert {\theta'}_{\K}^{0})}{\alpha} & \leq \frac{\eta S_g^2}{2\sigma^2n^2} \cdot  \sum_{k=1}^{\K}\left(1 - \frac{\eta\lambda}{2}\right)^{k}
\end{align}
\end{proof}

\paragraph{Comparison with~\cite[Corollary 1]{chourasia2021differential}}

We now compare our privacy bound \cref{thm:GD_tighter} with~\cite[Corollary 1]{chourasia2021differential} and show that our bound is tighter. For convenience, we now repeat the privacy bound for noisy gradient descent (i.e. when $b=n$ in \cref{alg:noisymBGD}) that is proved in~\cite[Corollary 1]{chourasia2021differential} below.

\begin{theorem}[Corollary 1 in \citet{chourasia2021differential}]
    \label{thm:GD_baseline}
    Let $\ell(\theta;\x)$ be a $\lambda$-strongly convex, and $\beta$-smoooth loss function, with a finite total gradient sensitivity $S_g$, then the noisy gradient descent algorithm with start parameter $\theta_0\sim \mathcal{N}(0,\frac{2\sigma^2}{\lambda}\mathcal{I}_d)$, and step-size $\eta<\frac{1}{\beta}$, satisfies $(\alpha,\varepsilon)$ R\'enyi Differential Privacy with
    \begin{align}
        \varepsilon \leq \frac{\alpha S_g^2}{\lambda\sigma^2n^2}(1 - e^{-\lambda\eta K/2}).
    \end{align}
\end{theorem}

Because $1-x< e^{-x}$ for $x\neq 0$, we are able to further relax the bound in~\cref{thm:GD_tighter} as follows.

\begin{align}
    \varepsilon & \leq \frac{\alpha \eta S_g^2}{2\sigma^2n^2} \cdot  \sum_{k=1}^{\K}\left(1 - \frac{\eta\lambda}{2}\right)^{k}\\
    & < \frac{\alpha \eta S_g^2}{2\sigma^2n^2} \cdot \left(1 - \frac{\eta\lambda}{2}\right)\cdot  \sum_{k=0}^{\K - 1}e^{- \frac{\eta\lambda k}{2}}\\
    & = \frac{\alpha \eta S_g^2}{2\sigma^2n^2} \cdot \left(1 - \frac{\eta\lambda}{2}\right)\cdot \frac{1 - e^{-\lambda\eta K}}{1 - e^{-\eta \lambda/2}} \label{eqn:GD_tighter_relax_ex}
\end{align}

Because $1-e^{-x}> x\cdot e^{-x}$ for $x>0$, we prove that $1 - e^{-\eta \lambda/2} \geq \frac{\lambda\eta}{2}\cdot e^{-\frac{\lambda\eta}{2}}$. By plugging this inequality to \cref{eqn:GD_tighter_relax_ex}, we prove that
\begin{align}
    \varepsilon & < \frac{\alpha S_g^2}{\lambda\sigma^2n^2} \cdot \frac{1 - \frac{\eta\lambda}{2}}{e^{-\frac{\lambda\eta}{2}}}\cdot (1 - e^{-\lambda\eta K}) \label{eqn:GD_tighter_relax_eta}
\end{align}

By again using $1-x<e^{-x}$ for $x\neq 0$ into \cref{eqn:GD_tighter_relax_eta}, we prove that 

\begin{align}
    \varepsilon & < \frac{\alpha S_g^2}{\lambda\sigma^2n^2} \cdot (1 - e^{-\lambda\eta K}) = \text{RHS of \cite[Corollary 1]{chourasia2021differential}}
\end{align}

This shows that the privacy bound~\cref{thm:GD_tighter} enabled by our analysis is strictly tighter than~\cite[Corollary 1]{chourasia2021differential}.

\section{Proof for \cref{sec:sgd_subsampled}}

\label{append:rootconvexity}

\begin{replemma}{lem:rootconvexity1}
    Let $\mu_1,\cdots,\mu_m$ and $\nu_1, \cdots, \nu_m$ be measures over $R^d$. Then for any $\alpha\geq 1$, and any $p_1,\cdots,p_m\geq 0$ that satisfies $p_1+\cdots + p_m=1$,
    \begin{align}
        e^{(\alpha-1) \cdot R_\alpha(\sum_{j=1}^mp_j\mu_j\lVert \sum_{j=1}^mp_j\nu_j)} & \leq \sum_{j=1}^mp_j\cdot e^{(\alpha-1)\cdot R_\alpha(\mu_j\lVert \nu_j)}
    \end{align}
\end{replemma}

\begin{proof}
    By definition of R\'enyi divergence, 
    \begin{align}
        e^{(\alpha - 1)\cdot R_\alpha(\mu\lVert \nu)} = \int \left(\frac{\mu(\theta)}{\nu(\theta)}\right)^\alpha \cdot \mu(\theta)d\theta
    \end{align}
    Therefore by definition of $f$-divergence, $e^{(\alpha - 1)\cdot R_\alpha(\mu\lVert \nu)}$ is $f$-divergence with $f(x) = x^\alpha$. By the convexity of $f$ when $\alpha\geq 1$, and by applying Theorem 3.1 in \citet{taneja2004relative}, we prove that the f-divergence $e^{(\alpha - 1)\cdot R_\alpha(\mu\lVert \nu)}$ is jointly convex in arguments $\mu,\nu$.
\end{proof}

\subsection{Proof for \cref{thm:shuffle}}

\label{append:proofshuffle}

\begin{reptheorem}{thm:shuffle}[Privacy dynamics for "shuffle and partition" mini-batch gradient descent]
    If the loss function $\ell(\theta;x)$ is $\lambda$-strongly convex, $\beta$-smooth, and if its gradient has finite $\ell_2$-sensitivity $S_g$, then for $\K\geq 1$ and $\frac{n}{b}\geq 2$, \cref{alg:noisymBGD} with stepsize $\eta<\frac{2}{\lambda + \eta}$ satisfies $(\alpha,\eps)$-R\'enyi DP for all data points with
    \begin{align}
    \eps & \leq \eps_0^{\lfloor \frac{n}{2b}\rfloor}(\alpha) \cdot \frac{1 - (1-\eta\lambda)^{2\cdot (\K - 1) \cdot (n/b - \lfloor \frac{n}{2b}\rfloor)}}{1 - (1-\eta\lambda)^{2\cdot (n/b - \lfloor \frac{n}{2b}\rfloor)} } +  \frac{1}{\alpha-1}\cdot \log\left(\underset{0\leq j_0<n/b}{Avg} e^{(\alpha - 1)\eps_{0}^{n/b - j_0}(\alpha)}\right)
    \end{align}
    where the terms $\eps_0^{j}(\alpha)$ is upper-bounded for any $j = 1, \cdots, n/b$ as follows.
    \begin{align}
        \eps_0^{j}(\alpha) \leq \frac{\alpha \eta S_g^2}{4\sig^2b^2}\cdot \left(1 - \eta \lambda \right)^{2\cdot (j - 1)} \cdot \frac{1}{\sum_{s=0}^{ j - 1 } (1-\eta\lambda)^{2s}}
    \end{align}
\end{reptheorem}

\begin{proof}
    Our proof relies on the joint convexity of the scaled exponentiated R\'enyi divergence $e^{(\alpha-1)R_\alpha(\mu\lVert \nu)}$ in its arguments $\mu$ and $\nu)$. By further using the batch decomposition in~\cref{sec:overview}, we prove
    \begin{align}
        \eps & = \frac{1}{\alpha-1}\log e^{(\alpha-1)\cdot R_{\alpha}(\theta_K^0\lVert {\theta'}_K^0)}\\
        &\leq \frac{1}{\alpha-1}\log\left(\sum_{B^0,\cdots,B^{n/b-1}\text{in~\cref{step:batch_generation_shuffle}}} p(B^0,\cdots,B^{n/b-1})\cdot  e^{(\alpha-1)\cdot R_{\alpha}\left(p(\theta_K^0|{B^0,\cdots,B^{n/b-1}})\lVert p({\theta'}_K^0|{B^0,\cdots,B^{n/b-1}})\right)}\right)\label{eqn:proofshuffle_above}
    \end{align}

    Therefore, by plugging the privacy bound of \cref{alg:noisymBGD} proved in \cref{thm:strconvexsmooth} into~\cref{eqn:proofshuffle_above}, depending on which mini-batch contains the differing data point with any index $i_0$, we further prove that
    \begin{align}
        \eps & \leq \frac{1}{\alpha-1}\log\left(\sum_{j_0=0}^{n/b-1} p(i_0\in B^{j_0})\cdot  e^{(\alpha-1)\cdot \left(\eps_0^{\lfloor \frac{n}{2b}\rfloor}(\alpha) \cdot \frac{1 - (1-\eta\lambda)^{2\cdot (\K - 1) \cdot (n/b - \lfloor \frac{n}{2b}\rfloor)}}{1 - (1-\eta\lambda)^{2\cdot (n/b - \lfloor \frac{n}{2b}\rfloor)} } + \eps_{0}^{n/b - j_0}(\alpha)\right)}\right)\\
        & = \frac{1}{\alpha-1}\log\left(\sum_{j_0=0}^{n/b-1} \frac{b}{n}\cdot  e^{(\alpha-1)\cdot \left(\eps_0^{\lfloor \frac{n}{2b}\rfloor}(\alpha) \cdot \frac{1 - (1-\eta\lambda)^{2\cdot (\K - 1) \cdot (n/b - \lfloor \frac{n}{2b}\rfloor)}}{1 - (1-\eta\lambda)^{2\cdot (n/b - \lfloor \frac{n}{2b}\rfloor)} } + \eps_{0}^{n/b - j_0}(\alpha)\right)}\right)
    \end{align}

\end{proof}
\paragraphbe{Explanations for the terms in \cref{thm:shuffle}} The privacy bound \cref{thm:shuffle} is strictly smaller than the composition-based privacy bound (derived from DP-SGD\cite{abadi2016deep} and SGM~\cite{mironov2019r} analysis) after $\frac{1}{\lambda\eta} + \frac{n}{b}$ epochs. More specifically, the first term in \cref{eqn:shuffle_logistic} is strictly smaller than the composition-based privacy bound after $\frac{1}{\lambda \eta}$ epochs, and the second term in \cref{eqn:shuffle_logistic} is strictly smaller than the composition-based privacy bound after $n/b$ epochs (where $n/b\geq 2$). We explain the terms more specifically as follows.

\begin{enumerate}
    \item The sensitivity for one mini-batch gradient update in \cref{alg:noisymBGD} is $\eta \cdot \frac{S_g}{b} $, the standard deviation of Gaussian noise added in one update is $\sqrt{2\eta \sig^2}$, and the sampling probability for each data point is $\frac{b}{n}$. By \citet{abadi2016deep} and \citet{mironov2019r}, the composition-based privacy bound for one iteration is larger than $\frac{b^2}{n^2}\frac{\alpha \cdot \eta^2 \cdot S_g^2/b^2}{ 2 \cdot 2\eta \sig^2} = \frac{b^2}{n^2} \frac{\alpha \cdot \eta S_g^2}{ 4 \sig^2 b^2}$. By R\'enyi DP composition over $n/b$ iterations, the composition-based privacy bound for one epoch is larger than $ \frac{b}{n}\cdot\frac{\alpha \cdot \eta S_g^2}{ 4 \sig^2 b^2}$.
    \item The first term is upper bounded as follows, which is smaller than the composition-based privacy bound after $\frac{1}{\lambda\eta}$ epochs.
    
    \begin{align}
        &\eps_0^{\lfloor \frac{n}{2b}\rfloor}(\alpha) \cdot \frac{1 - (1-\eta\lambda)^{2\cdot (\K - 1) \cdot (n/b - \lfloor \frac{n}{2b}\rfloor)}}{1 - (1-\eta\lambda)^{2\cdot (n/b - \lfloor \frac{n}{2b}\rfloor)} } \\
        & \leq \frac{\alpha \eta S_g^2}{4\sig^2b^2}\cdot \left(1 - \eta \lambda \right)^{2\cdot (\lfloor \frac{n}{2b}\rfloor - 1)} \cdot \frac{1}{\sum_{s=0}^{ \lfloor \frac{n}{2b}\rfloor - 1 } (1-\eta\lambda)^{2s}} \cdot \frac{1 - (1-\eta\lambda)^{2\cdot (\K - 1) \cdot (n/b - \lfloor \frac{n}{2b}\rfloor)}}{1 - (1-\eta\lambda)^{2\cdot (n/b - \lfloor \frac{n}{2b}\rfloor)} }\\
        & = \frac{\alpha \eta S_g^2}{4\sig^2b^2}\cdot \left(1 - \eta \lambda \right)^{2\cdot (\lfloor \frac{n}{2b}\rfloor - 1)} \cdot \frac{1 - (1-\eta\lambda)^2}{1 - (1-\eta\lambda)^{2\cdot \lfloor \frac{n}{2b}\rfloor}} \cdot \frac{1 - (1-\eta\lambda)^{2\cdot (\K - 1) \cdot (n/b - \lfloor \frac{n}{2b}\rfloor)}}{1 - (1-\eta\lambda)^{2\cdot (n/b - \lfloor \frac{n}{2b}\rfloor)} }\\
        & \leq \frac{\alpha \eta S_g^2}{4\sig^2b^2}\cdot \frac{1}{1 - (1-\eta\lambda)^{2\cdot (n/b - \lfloor \frac{n}{2b}\rfloor)} } \quad \text{(by $(1-\eta\lambda)^x$ is monotonically decreasing for $x\in \mathbb{R}$)}\\
        & \leq \frac{\alpha \eta S_g^2}{4\sig^2b^2}\cdot \frac{1}{2\cdot (n/b - \lfloor \frac{n}{2b}\rfloor)\cdot\lambda \eta} \quad \text{(by $(1-x)^{a}\leq 1 - a x$ for $x>0$ and $a\geq 1$)}\\
        & \leq \frac{\alpha \eta S_g^2}{4\sig^2b^2}\cdot \frac{1}{n/b\cdot\lambda \eta} \quad \text{(by $\frac{n}{b}\geq 2$)}
    \end{align}

    \item The second term is upper bounded by $\eps_0^1(\alpha) = \frac{\alpha \eta S_g^2}{4\sig^2b^2}$, which is smaller than the composition-based privacy bound after $\cdot \frac{n}{b}$ epochs.
\end{enumerate}

\subsection{Proof for \cref{thm:amp_samp_wo_replacement}}

\begin{reptheorem}{thm:amp_samp_wo_replacement}
    If the loss function $\ell(\theta;x)$ is $\lambda$-strongly convex, $\beta$-smooth, and if its gradient has finite $\ell_2$-sensitivity $S_g$, then \cref{alg:noisymBGD} with stepsize $\eta<\frac{2}{\lambda + \beta}$ satisfies $(\alpha, \eps)$-R\'enyi DP guarantee with
    \begin{align}
        \eps \leq \frac{1}{\alpha - 1} \log\left(S_\K^0(\alpha)\right)
    \end{align}
    where the term $S_k^j(\alpha)$ is recursively defined by
    \begin{align}
        S_0^0(\alpha) &= 1\\
        S_k^{j+1}(\alpha) & = \frac{b}{n}\cdot e^{\frac{(\alpha - 1) \alpha \eta S_g^2}{4\sig^2 b^2}}\cdot S_k^j(\alpha) + (1-\frac{b}{n})\cdot S_k^j(\alpha)^{(1-\eta\lambda)^2}, \text{ for $k = 0,\cdots,\K-1$ and $j=0, \cdots, n/b - 1$}\\
        S_{k+1}^0(\alpha) & = S_k^{n/b}(\alpha) \text{ for $k = 0,\cdots,\K-1$}\\
    \end{align}
\end{reptheorem}

\begin{proof}
    By definition,

    \begin{align}
        e^{(\alpha - 1)\eps_k^{j+1}(\alpha)} = & \int \left(\frac{p_k^{j+1}(\theta)}{{p'}_k^{j+1}({\theta})}\right)^{\alpha}{p'}_k^{j+1}(\theta)d\theta\\
        = & \int \left(\frac{\mathbb{E}_{B_0^0,\cdots,B_k^j}\left[p_k^{j+1}(\theta|B_0^0,\cdots, B_k^{j})\right]}{\mathbb{E}_{B_0^0,\cdots,B_k^j}\left[{p'}_k^{j+1}({\theta}|B_0^0,\cdots,B_k^j)\right]}\right)^{\alpha} \cdot \mathbb{E}_{B_0^0,\cdots,B_k^j}\left[{p'}_k^{j+1}({\theta}|B_0^0,\cdots,B_k^j)\right]d\theta
    \end{align}

    By the joint convexity of the function $\frac{x^\alpha}{y^{\alpha - 1}}$ on $x, y>0$ (Lemma 20~\citet{balle2019privacy}), and by the mini-batch decomposition for the distribution of the last iterate parameters in~\cref{sec:overview}, we prove
    \begin{align}
        e^{(\alpha - 1)\eps_k^{j+1}(\alpha)} 
        \leq & \mathbb{E}_{B_0^0,\cdots,B_k^j}\left[\int \left(\frac{p_k^{j+1}(\theta|B_0^0,\cdots, B_k^{j})}{{p'}_k^{j+1}({\theta}|B_0^0,\cdots,B_k^j)}\right)^{\alpha} \cdot {p'}_k^{j+1}({\theta}|B_0^0,\cdots,B_k^j)d\theta\right]
    \end{align}

    We now derive a recursive scheme for computing the right hand side term denoted as $S_k^{j+1}(\alpha)$. By definition,

    \begin{align}
        S_k^{j+1}(\alpha) & = \mathbb{E}_{B_0^0,\cdots,B_k^j}\left[\int \left(\frac{p_k^{j+1}(\theta|B_0^0,\cdots, B_k^{j})}{{p'}_k^{j+1}({\theta}|B_0^0,\cdots,B_k^j)}\right)^{\alpha} \cdot {p'}_k^{j+1}({\theta}|B_0^0,\cdots,B_k^j)d\theta\right]  
    \end{align}

    The term inside expectation corresponds to R\'enyi privacy loss under fixed mini-batches $B_0^0, \cdots, B_k^j$. Therefore, by \cref{lem:recursive}, 

    \begin{align}
        \int & \left(\frac{p_k^{j+1}(\theta|B_0^0,\cdots, B_k^{j})}{{p'}_k^{j+1}({\theta}|B_0^0,\cdots,B_k^j)}\right)^{\alpha} \cdot {p'}_k^{j+1}({\theta}|B_0^0,\cdots,B_k^j)d\theta \\
        & \leq \begin{cases}
            \int \left(\frac{p_k^{j}(\theta|B_0^0,\cdots, B_k^{j})}{{p'}_k^{j}({\theta}|B_0^0,\cdots,B_k^j)}\right)^{\alpha} \cdot {p'}_k^{j}({\theta}|B_0^0,\cdots,B_k^j)d\theta \cdot e^{ \frac{(\alpha - 1) \alpha \eta S_g^2}{4\sig^2 b^2}} & \text{ if $i_0\in B_k^j$}\\
            \left(\int \left(\frac{p_k^{j}(\theta|B_0^0,\cdots, B_k^{j})}{{p'}_k^{j}({\theta}|B_0^0,\cdots,B_k^j)}\right)^{\alpha} \cdot {p'}_k^{j}({\theta}|B_0^0,\cdots,B_k^j)d\theta \right)^{(1-\eta\lambda)^2} & \text{ if $i_0\notin B_k^j$}\\
        \end{cases} \label{eqn:proof_rec_samp_wo}
    \end{align}

    By plugging \cref{eqn:proof_rec_samp_wo}, and by the definition of $S_k^{j+1}(\alpha)$, we prove
    \begin{align}
        S_k^{j+1}(\alpha) & = \mathbb{E}_{B_0^0,\cdots,B_k^j}\left[\int \left(\frac{p_k^{j+1}(\theta|B_0^0,\cdots, B_k^{j})}{{p'}_k^{j+1}({\theta}|B_0^0,\cdots,B_k^j)}\right)^{\alpha} \cdot {p'}_k^{j+1}({\theta}|B_0^0,\cdots,B_k^j)d\theta\right] \\
        & = P(i_0\in B_k^j) \cdot S_k^j\cdot e^{\frac{(\alpha - 1) \alpha \eta S_g^2}{4\sig^2 b^2}} + P(i_0\notin B_k) \cdot \\
        & \quad  \mathbb{E}_{B_0^0,\cdots,B_k^{j-1}}\left[ \left(\int \left(\frac{p_k^{j}(\theta|B_0^0,\cdots, B_k^{j})}{{p'}_k^{j}({\theta}|B_0^0,\cdots,B_k^j)}\right)^{\alpha} \cdot {p'}_k^{j}({\theta}|B_0^0,\cdots,B_k^j)d\theta \right)^{(1-\eta\lambda)^2} \right]
    \end{align}

    By further using the concavity of $x^{(1-\eta\lambda)^2}$, and the definition of $S_k^{j}(\alpha)$, we prove

    \begin{align}
        S_k^{j+1}(\alpha) & \leq P(i_0\in B_k^j) \cdot S_k^j\cdot e^{\frac{(\alpha - 1) \alpha \eta S_g^2}{4\sig^2 b^2}} + 
        P(i_0\notin B_k) \cdot \\
        & \quad \left(
            \mathbb{E}_{B_0^0,\cdots,B_k^{j-1}}\left[
                \int \left(\frac{p_k^{j}(\theta|B_0^0,\cdots, B_k^{j})}{{p'}_k^{j}({\theta}|B_0^0,\cdots,B_k^j)}\right)^{\alpha} 
                \cdot {p'}_k^{j}({\theta}|B_0^0,\cdots,B_k^j)d\theta 
                \right]
            \right)^{(1-\eta\lambda)^2}\\
            & = \frac{b}{n}\cdot e^{\frac{(\alpha - 1) \alpha \eta S_g^2}{4\sig^2 b^2}}\cdot S_k^j + \left(1-\frac{b}{n}\right)\cdot (S_k^j)^{(1-\eta\lambda)^2}
    \end{align}

    Therefore, the full recursive scheme for computing $S_k^j(\alpha)$ is as follows.

    \begin{enumerate}
        \item $S_0^0(\alpha) = 1$
        \item $S_k^{j+1}(\alpha) = \frac{b}{n}\cdot e^{\frac{(\alpha - 1) \alpha \eta S_g^2}{4\sig^2 b^2}}\cdot S_k^j(\alpha) + (1-\frac{b}{n})\cdot S_k^j(\alpha)^{(1-\eta\lambda)^2}$, for $k = 0,\cdots,\K-1$ and $j=0,\cdots,n/b-1$.
        \item $S_{k+1}^0(\alpha) = S_k^{n/b}(\alpha)$ for $k = 0, \cdots, \K - 1$.
    \end{enumerate}

    And \cref{alg:noisymBGD} satisfies $(\alpha, \eps)$-R\'enyi DP guarantee with
    \begin{align}
        \eps \leq \frac{1}{\alpha - 1} \log\left(S_\K^0(\alpha)\right)
    \end{align}
\end{proof}
\section{Proofs and Explanations for \cref{sec:trade-off}}

\label{append:experiment}

\subsection{Pseudocode for DP-SGD under notations in this paper}

\label{app:pseudocode}

In \cref{alg:noisymBGD_logistic}, we provide the pseudocode of an equivalent of DP-SGD algorithm under notations in our paper. For the convenience of implementation in existing privacy libaries, we introduce the noise multiplier $\sig_{mul}$ to substitute $\sig$ in \cref{alg:noisymBGD}. We comment that the noisy gradient update in \cref{alg:noisymBGD_logistic} under noise multiplier $\sig_{mul}$, is equivalent to a noisy gradient update in \cref{alg:noisymBGD} with noise standard deviation $\sigma = \sqrt{\frac{\eta}{2}}\cdot\frac{1}{b}\cdot \sig_{mul}\cdot \frac{S_g}{2}$. By plugging this equivalent $\sigma$ into \cref{thm:shuffle}, we prove the privacy dynamics bound in \cref{cor:logistic} for regularized logistic regression.

\begin{algorithm}[t!]
	\caption{$\algo_{\text{implementation}}$: Noisy mini-batch Gradient Descent on regularized logistic regression loss function}
	\label{alg:noisymBGD_logistic}
	\begin{algorithmic}
		\State {\bfseries Input:} Data domain $\X$. Dataset $\D = ((\x_1, \y_1), (\x_2, \y_2), \cdots, (\x_\size, \y_\size))$, where each data point consists of the feature vector $\x_i\in \mathbb{R}^d$ and the label vector $\y_i\in \{0,1\}^c$. The logistic regression loss function $\loss_0(\vtheta;\x, \y)$ defined as \cref{eqn:loss_logistic} with parameter space $\mathbb{R}^{(d+1)\cdot c}$. Stepsize $\step$, noise multiplier $\sig_{mul}$, a (data-independent) parameter initialization distribution $p_0(\theta)$, mini-batch size $b$, feature clipping norm $L$, and (unregularized) gradient clipping norm $\frac{S_g}{2}$.
		\State{\bfseries Feature Normalization:} $\x_1,\cdots,\x_n\leftarrow \text{normalize}(\x_1,\cdots,\x_n)$, where $\text{normalize}()$ is an $(\alpha, \eps_{norm})$-R\'enyi differentially private batch normalization or group normalization scheme described in \citet{tramer2020differentially} (Section 2.3 and Appendix B).
        \State{\bfseries Feature Clipping:} $\x_i\leftarrow \frac{\x_i}{\lVert \x_i\rVert_2}\cdot \min\{\lVert\x_i\rVert_2, L\}$.
        \State{\bfseries Initialization:} Sample $\theta_{0}^{0}$ from the initialization distribution $p_0(\theta)$.
		\State{\bfseries Batch Generation:} shuffle the indices set $\{1,\cdots,n\}$, and partition them into $n/b$ sequential mini-batches $B^0, \cdots, B^{n/b-1}$ that are subsets of $\{1,\cdots,n\}$, each with size $b$.\label{step:batch_shuffle}
		\For {$\kk = 0, 1, \cdots, \K - 1$}
		\For {$j = 0, 1, \cdots, n/b - 1$}
        \State { {\bfseries Gradient Clipping:} $g_0(\vtheta_{\kk}^{j}, \B^{j})= \frac{1}{b}\sum_{x_\ii\in \B^{j}} \frac{\grad\loss(\vtheta_{\kk}^{j};\x_\ii)}{\lVert \grad\loss_0(\vtheta_{\kk}^{j};\x_\ii)\rVert_2}\cdot \min\{\lVert \grad\loss_0(\vtheta_{\kk}^{j};\x_\ii)\rVert_2, \frac{S_g}{2}\}$}
		\State { {\bfseries Regularization:} $\g{\vtheta_{\kk}^{j}}{\B^{j}}= g_0(\theta_k^j;B^j) + \lambda \cdot \theta_k^j$}
		\State {$\vtheta_{\kk}^{j + 1} = \vtheta_{\kk}^{j} - \step\cdot \g{\vtheta_{\kk}^{j}}{\B^j} + \eta \cdot \frac{1}{b}\cdot \sig_{mul}\cdot \frac{S_g}{2}\cdot \Gauss{0}{\Id}$} \label{alg:ngd:updatestep}
		\EndFor
		\State {$\vtheta_{\kk + 1}^{0} = \vtheta_{\kk}^{n/b}$}
		\EndFor
		\State {Output $\vtheta_{\K}^{0}$}
	\end{algorithmic}
\end{algorithm}

\subsection{Proof for ensuring strong convexity}

\label{app:app_proof_str_cvx}

\paragraphbe{Regularized Logistic regression (for strong convexity)} The loss function for regularized logistic regression in the multi-class setting (with per-class bias) is as follows.
\begin{align}
    \label{eqn:app_reg_loss_logistic}
    \ell_\lambda(\theta;\x,\y)&= \ell_0(\theta;\x,\y) + \frac{\lambda}{2}\lVert \theta \rVert_2^2
\end{align}
where $\ell_0(\theta;\x,\y)$ is the following logistic regression loss function.
\begin{align}
    \ell_0(\theta;\x,\y) = - \y^1 \log\left(\frac{e^{\bar{x}^T\cdot \theta_1}}{e^{\bar{x}^T\cdot \theta_1} + \cdots + e^{\bar{x}^T\cdot \theta_c}}\right) - \cdots - \y^c \log\left(\frac{e^{\bar{x}^T\cdot \theta_c}}{e^{\bar{x}^T\cdot \theta_1} + \cdots + e^{\bar{x}^T\cdot \theta_c}}\right)
    \label{eqn:app_loss_logistic}
\end{align}
where $\bar{\x} = (\x, 1) \in \mathbb{R}^{d+1}$ denotes the concatenation of the data feature vector $\x$ and $1$, and $\y = (\y^1,\cdots,\y^c)$ is the label vector. The parameter vector is $\theta = (\theta_1,\cdots,\theta_c)\in \mathbb{R}^{(d+1)\cdot c}$ that represents the weight and the per-class bias of the linear model. The logistic regression loss function is convex, and therefore the regularized logistic regression loss function is $\lambda$-strongly convex.

\subsection{Proof for ensuring smoothness}
\label{app:proof_smooth}
\paragraphbe{Feature Normalization (for faster convergence)} For a better convergence of the learning task, we follow \citet{tramer2020differentially}, and use feature normalization (including both batch normalization and group normalization). Batch normalization first computes the per-channel mean and variance of the training dataset, in a differentially private way, and then normalizes each channel of each data point in the training dataset. Group normalization~\cite{wu2018group} separates the channels in a data feature vector into a number of different groups, and normalizes each channel of each data point with the per-point per-group mean and variance (which does not incur additional privacy cost). We refer the reader to~\cite{tramer2020differentially} (their Section 2.2 and Appendix B) for more details.

\paragraphbe{Feature Clipping (for bounding the smoothness constant)} To ensure that the condition of finite gradient sensitivity $S_g$ is satisified in our experiments, we follow \citet{feldman2018privacy} and normalize the data feature vector in $\ell_2$ norm, such that $\lVert\x\rVert_2\leq \Lip$. Under this data feature clipping, we prove that the logistic regression loss function \eqref{eqn:loss_logistic} is $(\frac{\Lip^2+1}{2})$-smooth in the following \cref{prop:smooth}.
\begin{proposition}
    \label{app_prop:smooth}
    If the data feature vector $\x$ has bounded $\ell_2$ norm, such that $\lVert \x\rVert_2\leq L$, then the unregularized logistic regression loss function $\ell_0(\theta;\x,\y)$ \cref{eqn:loss_logistic} is convex
    , $L$-Lipschitz 
    and $\beta$-smooth with regard to parameters $\theta$, for
    \begin{align}
        L & = \sqrt{2 (L^2 + 1)} \label{eqn:app_sensitivity}\\
        \beta & = \frac{L^2 + 1}{2} \label{eqn:app_smoothness}
    \end{align}
\end{proposition}

\begin{proof}
    We compute the gradient and Hessian matrix for the logistic regression loss function as follows.

    The gradient of the logistic regression loss function \cref{eqn:loss_logistic} with respect to $\theta$ is:
    \begin{align}
        \nabla_\theta\ell_0(\theta; \x,\y) & = - \y^{1} \cdot \nabla_\theta \log\left(\frac{e^{\bar{\x}^T\cdot \theta_{1}}}{e^{\bar{\x}^T\cdot \theta_1} + \cdots + e^{\bar{\x}^T\cdot \theta_c}}\right) - \cdots - \y^{c} \cdot \nabla_\theta \log\left(\frac{e^{\bar{\x}^T\cdot \theta_{c}}}{e^{\bar{\x}^T\cdot \theta_c} + \cdots + e^{\bar{\x}^T\cdot \theta_c}}\right) \\
        & =  \left( \left(\frac{e^{\bar{\x}^T\cdot \theta_1}}{e^{\bar{\x}^T\cdot \theta_1} + \cdots + e^{\bar{\x}^T\cdot \theta_c}}-\y^1\right) \bar{\x}^T, \cdots, \left(\frac{e^{\bar{\x}^T\cdot \theta_1}}{e^{\bar{\x}^T\cdot \theta_c} + \cdots + e^{\bar{\x}^T\cdot \theta_c}}- \y^c\right) \bar{\x}^T\right) \\
    \end{align}

    The Hessian of loss function with respect to $\theta$ is:
    \begin{align}
        \nabla^2_\theta\ell_0(\theta; \x,\y)& =
        \begin{pmatrix}
            H_{11}&\cdots&H_{1c}\\
            \vdots&\vdots&\vdots\\
            H_{c1}&\cdots&H_{cc}
        \end{pmatrix}
    \end{align}

    where the submatrices are as follows.
    \begin{align}
        \text{For $i=j$,}\quad H_{ii} & = \frac{e^{\bar{\x}^T\cdot \theta_i}}{e^{\bar{\x}^T\cdot \theta_1} + \cdots + e^{\bar{\x}^T\cdot \theta_c}}\cdot\left(1 - \frac{e^{\bar{\x}^T\cdot \theta_i}}{e^{\bar{\x}^T\cdot \theta_1} + \cdots + e^{\bar{\x}^T\cdot \theta_c}}\right)\bar{\x}\cdot \bar{\x}^T \\
        \text{For $i\neq j$,}\quad H_{ij} & = - \frac{e^{\bar{\x}^T\cdot \theta_i}}{e^{\bar{\x}^T\cdot \theta_1} + \cdots + e^{\bar{\x}^T\cdot \theta_c}}\cdot \frac{e^{\bar{\x}^T\cdot \theta_j}}{e^{\bar{\x}^T\cdot \theta_1} + \cdots + e^{\bar{\x}^T\cdot \theta_c}} \cdot \bar{\x}\cdot \bar{\x}^T \\
    \end{align}

    Therefore, the Hessian matrix of the loss function with respect to $\theta$ equals the following Kronecker product.

    \begin{align}
        \nabla^2_\theta\ell_0(\theta; \x,\y)& =
        T \otimes (\bar{\x}\cdot \bar{\x}^T)\\
        \text{where }T_{ij} &= \begin{cases}
            \frac{e^{\bar{\x}^T\cdot \theta_i}}{e^{\bar{\x}^T\cdot \theta_1} + \cdots + e^{\bar{\x}^T\cdot \theta_c}}\cdot\left(1 - \frac{e^{\bar{\x}^T\cdot \theta_i}}{e^{\bar{\x}^T\cdot \theta_1} + \cdots + e^{\bar{\x}^T\cdot \theta_c}}\right) & i=j\\
            - \frac{e^{\bar{\x}^T\cdot \theta_i}}{e^{\bar{\x}^T\cdot \theta_1} + \cdots + e^{\bar{\x}^T\cdot \theta_c}}\cdot\frac{e^{\bar{\x}^T\cdot \theta_j}}{e^{\bar{\x}^T\cdot \theta_1} + \cdots + e^{\bar{\x}^T\cdot \theta_c}}&i\neq j\\
        \end{cases}
    \end{align}

    By \citet{bohning1992multinomial}, $T$ is a positively semi-definite matrix that satisfies
    \begin{align}
        T \preceq \frac{1}{2}\cdot \left(I_{c} - \frac{1}{c}\cdot \mathbf{1}_c \cdot \mathbf{1}_c^T\right)\text{, where }\mathbf{1}_c = (1,\cdots,1)^T\in \mathbb{R}^{c}.
    \end{align}

    Therefore, the eigenvalues of $T$ fall in the range $[0, \frac{1}{2}]$. Therefore, because the eigenvalues for the Kronecker product matrix $\nabla_\theta^2\ell_0(\theta;\x,\y) = T\otimes \left(\bar{\x}\cdot \bar{\x}^T\right)$ consists of the product of eigenvalues for $T$ and $\bar{\x}\cdot \bar{\x}^T$, we prove that the eigenvalues for $\nabla_\theta^2\ell_0(\theta;\x,\y)$ fall in the range of $[0, \frac{1}{2}\lVert \bar{\x}\rVert_2^2 ]$.

    By $\lVert\x\rVert_2\leq L$, we prove that $\lVert \bar{\x}\rVert_2\leq \sqrt{L^2+1}$, therefore \cref{eqn:sensitivity} and \cref{eqn:smoothness} (in the proposition statement) hold. 
\end{proof}

These computations of these Lipschitz smoothness constants could also be cross-checked in the tensorflow privacy tutorial for logistic regression. \url{https://github.com/tensorflow/privacy/blob/master/tutorials/mnist_lr_tutorial.py}. The feature clipping technique is different from the DP-SGD~\cite{abadi2016deep} algorithm that only requires per-example gradient clipping. The major reason that we use data feature clipping (besides per-example gradient clipping), is for ensuring smoothness of the logistic regression loss function (by \cref{prop:smooth}), which is a necessary condition for applying our privacy bound \cref{thm:strconvexsmooth}.

\subsection{Proof for ensuring finite gradient sensitivity}

\label{app:proof_gradient_sensitivity}

\paragraphbe{Per-example Clipping on Unregularized Gradient (for reducing gradient sensitivity without harming smoothness or strong convexity)} Although feature clipping already bounds the gradient sensitivity by $2\sqrt{2(L^2+1)}$ (by \cref{prop:smooth}), this bound grows with the feature clipping norm $L$. This in turn restricts the signal to noise ratio, and does not give good empirical privacy-utility trade-off in our experiments. Therefore, we additionally perform per-example $\ell_2$-clipping on the unregularized gradient (detailed pseudocode in \cref{app:pseudocode}). Under per-example clipping on unregularized gradient, we prove in the following \cref{prop:smooth_strcvx_clip}, that each gradient update in \textit{regularized logistic regression} has finite gradient sensitivity, and preserves strong convexity and smoothness.

\begin{proposition}
    \label{prop:app_smooth_strcvx_clip}
    Let $\ell_0(\theta;\x,\y)$ be the logistic regression loss function defined in~\cref{eqn:loss_logistic}. Let $g_0(\theta;\x, \y) = \frac{\nabla \ell_0(\theta;\x,\y)}{\lVert \nabla \ell_0(\theta;\x,\y)\rVert_2}\cdot \min\{\lVert \nabla \ell_0(\theta;\x,\y)\rVert_2, \frac{S_g}{2}\}$ be the clipped gradient of (unregularized) loss function $\ell_0(\theta;\x,\y)$, under $\ell_2$ clipping norm $\frac{S_g}{2}$. If $g(\theta;\x,\y) = g_0(\theta;\x,\y) + \lambda\theta$, and if the data vector $\x$ has bounded $\ell_2$ norm, such that $\lVert \x\rVert_2\leq L$, then $g(\theta;\x,y)$ has finite $\ell_2$-sensitivity $S_g$,  is continuous, and is almost everywhere differentiable with
    \begin{align}
        \label{eqn:app_prop_smooth_strcvx_clip}
        \lambda\cdot \mathbb{I}_{(d+1)\cdot c}\preceq \nabla_\theta g(\theta;\x,\y)\preceq (\beta + \lambda) \cdot \mathbb{I}_{(d+1)\cdot c}
    \end{align}
    for any $\theta,\theta'\in\mathbb{R}^{(d+1)\cdot c}$ and $\beta = \frac{L^2+1}{2}$.
\end{proposition}

We provide complete proof for this proposition below. This construction of clipped unregularized gradient facilitates us to enjoy the benefits of gradient clipping (such as for speeding up convergence~\cite{zhang2019gradient,chen2020understanding}) while satsifying the necessary smoothness and strong convexity conditions for applying our privacy dynamics bound.

\begin{proof}
    By definition \cref{eqn:loss_logistic} for the logistic regression loss function, $\nabla\ell_0(\theta;\x,\y)$ is twice continuously differentiable and convex. By \cref{prop:smooth}, 
    $\ell_{\lambda}(\theta;\x,\y)$ is $\beta$-smooth with $\beta = \frac{L^2+1}{2}$. Therefore, the Hessian matrix of $\ell_0(\theta;\x,\y)$ satisfies the following inequality.
    \begin{align}
        \label{eqn:l0smoothness}
        0\cdot \mathbb{I}_{(d+1)\cdot c}\preceq\nabla_\theta^2 \ell_0(\theta;\x,\y) \preceq \beta \cdot \mathbb{I}_{(d+1)\cdot c},
    \end{align}
    where $d$ is the dimension of the input data feature vector $\x$, and $c$ is the number of classes in the label vector $\y$. Moreover, the clipped (unregularized) gradient $g_0(\theta;\x,\y)$ (under $\ell_2$ clipping norm $\frac{S_g}{2}$) is continuous, and is almost everywhere differentiable as follows.
    \begin{align}
        &\nabla_\theta g_0(\theta;\x,\y) = \nonumber\\
        &\begin{cases}
            \nabla_\theta^2 \ell_0(\theta;\x,\y) & \text{if }\lVert\nabla_\theta\ell_0(\theta;\x,\y)\rVert_2<\frac{S_g}{2}\\
            \frac{S_g}{2}\cdot \frac{1}{\lVert\nabla_\theta\ell_0(\theta;\x,\y)\rVert_2}\cdot M \cdot \nabla_\theta^2\ell_0(\theta;\x,\y)& \text{if }\lVert\nabla_\theta\ell_0(\theta;\x,\y)\rVert_2>\frac{S_g}{2}
        \end{cases}
    \end{align}
    where $M = \mathbb{I}_{(d+1)\cdot c} - \frac{\nabla\ell_0(\theta;\x,\y)}{\lVert\nabla\ell_0(\theta;\x,\y)\rVert_2}\cdot \left(\frac{\nabla\ell_0(\theta;\x,\y)}{\lVert\nabla\ell_0(\theta;\x,\y)\rVert_2}\right)^T$ is a symmetric matrix. Because $\frac{\nabla \ell_0 (\theta;\x, \y)}{\lVert \nabla \ell_0(\theta;\x, \y) \rVert_2}$ is a unit vector, we prove that $M$ is positive semi-definite and satisfies $0\cdot \mathbb{I}_{(d+1)\cdot c}\preceq M\preceq \mathbb{I}_{(d+1)\cdot c}$.  Moreover, for the case where  $\lVert\nabla_\theta\ell_0(\theta;\x,\y)\rVert_2>\frac{S_g}{2}$, we have that $\frac{S_g}{2}\cdot \frac{1}{\lVert\nabla_\theta\ell_0(\theta;\x,\y)\rVert_2}\leq 1$. By combining this ineuality with property of $M$ and \cref{eqn:l0smoothness}, we prove that for any $\theta$ such that $\lVert\nabla_\theta\ell(\theta;\x,\y)\rVert_2\neq \frac{S_g}{2}$,
    \begin{align}
        0\cdot \mathbb{I}_{(d+1)\cdot c}\preceq \nabla_\theta g_0(\theta;\x,\y)\preceq \beta \cdot \mathbb{I}_{(d+1)\cdot c}
    \end{align}

    Therefore, by plugging this into the definition of $g(\theta;\x,\y) = g_0(\theta;\x,\y) + \lambda\theta$, we prove that $g(\theta;\x,\y)$ is continous, and is almost everywhere differentiable with
    \begin{align}
        \label{eqn:clip_strcvx_smooth}
        \lambda\cdot \mathbb{I}_{(d+1)\cdot c}\preceq \nabla_\theta g(\theta;\x,\y)\preceq (\beta + \lambda) \cdot \mathbb{I}_{(d+1)\cdot c}
    \end{align}
\end{proof}

\end{document}